\pgfplotsset{compat=1.16}
\pgfplotsset{every tick label/.append style={font=\tiny}}
\newlength{\starsize}
\newlength{\starspread}
\tikzset{starsize/.code={\setlength{\starsize}{#1}},
         starspread/.code={\setlength{\starspread}{#1}}}
\tikzset{starsize=1mm,
         starspread=3mm}
\pgfqpoint{\starspread}{\starspread}}% upper right corner
\pgfqpoint{\starspread}{\starspread}}% tilesize
\newcommand{\renchi}[1]{{\color{red}{[Renchi: #1]}}}
\newcommand{\taiyan}[1]{{\color{blue}{[Taiyan: #1]}}}
\newcommand{\argmax}[1]{\underset{#1}{\operatorname{arg}\,\operatorname{max}}\;}
\newcommand{\argmin}[1]{\underset{#1}{\operatorname{arg}\,\operatorname{min}}\;}
\newcommand*\bigcdot{\mathpalette\bigcdot@{.5}}
\newcommand*\bigcdot@[2]{\mathbin{\vcenter{\hbox{\scalebox{#2}{$\m@th#1\bullet$}}}}}
\newcommand{\stitle}[1]{\vspace*{0.5em}\noindent{\underline{\bf #1.\/}}}
\newcommand{\V}{\mathcal{V}\xspace}
\newcommand{\G}{\mathcal{G}\xspace}
\newcommand{\N}{\mathcal{N}\xspace}
\newcommand{\Y}{\mathcal{Y}\xspace}
\newcommand{\EDG}{\mathcal{E}\xspace}
\newcommand{\WM}{\boldsymbol{W}\xspace}
\newcommand{\AM}{\boldsymbol{A}\xspace}
\newcommand{\DM}{\boldsymbol{D}\xspace}
\newcommand{\IM}{\boldsymbol{I}\xspace}
\newcommand{\PM}{\boldsymbol{P}\xspace}
\newcommand{\CM}{\boldsymbol{C}\xspace}
\newcommand{\YM}{\boldsymbol{Y}\xspace}
\newcommand{\XM}{\boldsymbol{X}\xspace}
\newcommand{\LM}{\boldsymbol{L}\xspace}
\newcommand{\UM}{\boldsymbol{U}\xspace}
\newcommand{\VM}{\boldsymbol{V}\xspace}
\newcommand{\HM}{\boldsymbol{H}\xspace}
\newcommand{\ZM}{\boldsymbol{Z}\xspace}
\newcommand{\muvec}{\boldsymbol{\mu}\xspace}
\newcommand{\SigM}{\boldsymbol{\Sigma}\xspace}
\newcommand{\RM}{\boldsymbol{R}\xspace}
\newcommand{\NAM}{\boldsymbol{\tilde{A}}\xspace}
\newcommand{\HAM}{\boldsymbol{\hat{A}}\xspace}
\newcommand{\NCM}{\boldsymbol{\tilde{C}}\xspace}
\newcommand{\HLM}{\boldsymbol{\hat{L}}\xspace}
\newcommand{\algo}{\texttt{ClustGDD}\xspace}
\newcommand{\caar}{\texttt{CAAR}\xspace}
\newcommand{\eat}[1]{}
\newenvironment{customlegend}[1][]{%
    \begingroup
    % inits/clears the lists (which might be populated from previous
    % axes):
    \csname pgfplots@init@cleared@structures\endcsname
    \pgfplotsset{#1}%
}{%
    % draws the legend:
    \csname pgfplots@createlegend\endcsname
    \endgroup
}%
\def\addlegendimage{\csname pgfplots@addlegendimage\endcsname}
\newcommand\footnoteref[1]{\protected@xdef\@thefnmark{\ref{#1}}\@footnotemark}
\let\oldnl\nl% Store \nl in \oldnl
\newcommand{\nonl}{\renewcommand{\nl}{\let\nl\oldnl}}% Remove line number for one line
\DeclareMathOperator{\Tr}{Tr}
\definecolor{myred}{HTML}{fd7f6f}
\definecolor{myred_new}{HTML}{D8D8D8}
\definecolor{myred_new2}{HTML}{D7191C}
\definecolor{myblue}{HTML}{7eb0d5}
\definecolor{mygreen}{HTML}{D0F0C0}
\definecolor{mypurple}{HTML}{bd7ebe}
\definecolor{myorange}{HTML}{ffb55a}
\definecolor{myyellow}{HTML}{ffee65}
\definecolor{mypurple2}{HTML}{beb9db}
\definecolor{mypink}{HTML}{fdcce5}
\definecolor{mycyan}{HTML}{8bd3c7}
\definecolor{myblue2}{HTML}{AFDBF5}
\definecolor{myred2}{HTML}{c23728}
\definecolor{myorange-new}{HTML}{E56400}
\definecolor{mygreen-new}{HTML}{00788B}
\definecolor{Blue}{RGB}{173, 216, 230} % 
\definecolor{DarkBlue}{RGB}{0, 191, 255}
  \providecommand\BibTeX{{%
    \normalfont B\kern-0.5em{\scshape i\kern-0.25em b}\kern-0.8em\TeX}}}
\begin{document}

% \newtheorem{lemma}{Lemma}[section]

% \newenvironment{normallemma}{
%     \begin{lemma}\normalfont
% }{
%     \end{lemma}
% }

%%
%% The "title" command has an optional parameter,
%% allowing the author to define a "short title" to be used in page headers.

\title{Simple yet Effective Graph Distillation via Clustering}
\subtitle{Technical Report}
%%
%% The "author" command and its associated commands are used to define
%% the authors and their affiliations.
%% Of note is the shared affiliation of the first two authors, and the
%% "authornote" and "authornotemark" commands
%% used to denote shared contribution to the research.
\author{Yurui Lai}
\affiliation{%
  \institution{Hong Kong Baptist University}
  \country{Hong Kong, China}
}
\email{csyrlai@comp.hkbu.edu.hk}

\author{Taiyan Zhang}
\affiliation{%
  \institution{Shanghaitech University}
  \country{Shanghai, China}
}
\email{zhangty2022@shanghaitech.edu.cn}

\author{Renchi Yang}
\affiliation{%
  \institution{Hong Kong Baptist University}
  \country{Hong Kong, China}
}
\email{renchi@hkbu.edu.hk}

%%
%% By default, the full list of authors will be used in the page
%% headers. Often, this list is too long, and will overlap
%% other information printed in the page headers. This command allows
%% the author to define a more concise list
%% of authors' names for this purpose.
\renewcommand{\shortauthors}{Yurui Lai, et al.}

%%
%% The abstract is a short summary of the work to be presented in the
%% article.
\begin{abstract}

Despite plentiful successes achieved by graph representation learning in various domains, the training of {\em graph neural networks} (GNNs) still remains tenaciously challenging due to the tremendous computational overhead needed for sizable graphs in practice. 
Recently, {\em graph data distillation} (GDD), which seeks to distill large graphs into compact and informative ones, has emerged as a promising technique to enable efficient GNN training. However, most existing GDD works rely on heuristics that align model gradients or representation distributions on condensed and original graphs, leading to compromised result quality, expensive training for distilling large graphs, or both.
Motivated by this, this paper presents an efficient and effective GDD approach, \algo{}. Under the hood, \algo{} resorts to synthesizing the condensed graph and node attributes through fast and theoretically-grounded clustering that minimizes the {\em within-cluster sum of squares} and maximizes the {\em homophily} on the original graph. The fundamental idea is inspired by our empirical and theoretical findings unveiling the connection between clustering and empirical condensation quality using {\em Fr\'echet Inception Distance}, a well-known quality metric for synthetic images. Furthermore, to mitigate the adverse effects caused by the homophily-based clustering, \algo{} refines the nodal attributes of the condensed graph with a small augmentation learned via class-aware graph sampling and consistency loss.
Our extensive experiments exhibit that GNNs trained over condensed graphs output by \algo{} consistently achieve superior or comparable performance to state-of-the-art GDD methods in terms of node classification on five benchmark datasets, while being orders of magnitude faster. 

\end{abstract}

%%
%% The code below is generated by the tool at http://dl.acm.org/ccs.cfm.
%% Please copy and paste the code instead of the example below.
%%
\begin{CCSXML}
<ccs2012>
   <concept>
       <concept_id>10002951.10003227.10003351.10003444</concept_id>
       <concept_desc>Information systems~Clustering</concept_desc>
       <concept_significance>300</concept_significance>
       </concept>
   <concept>
       <concept_id>10010147.10010257.10010258.10010259.10010263</concept_id>
       <concept_desc>Computing methodologies~Supervised learning by classification</concept_desc>
       <concept_significance>300</concept_significance>
       </concept>
   <concept>
       <concept_id>10002950.10003624.10003633.10010917</concept_id>
       <concept_desc>Mathematics of computing~Graph algorithms</concept_desc>
       <concept_significance>300</concept_significance>
       </concept>
 </ccs2012>
\end{CCSXML}

\ccsdesc[300]{Information systems~Clustering}
\ccsdesc[300]{Computing methodologies~Supervised learning by classification}
\ccsdesc[300]{Mathematics of computing~Graph algorithms}

%%
%% Keywords. The author(s) should pick words that accurately describe
%% the work being presented. Separate the keywords with commas.
\keywords{graph data distillation, graph neural networks, clustering}

%% A "teaser" image appears between the author and affiliation
%% information and the body of the document, and typically spans the
%% page.

%%
%% This command processes the author and affiliation and title
%% information and builds the first part of the formatted document.
\maketitle

\section{Introduction}
In the past decade, {\em Graph Neural Networks} (GNNs) have emerged as a powerful model for learning on graph-structured data and found extensive practical applications in various fields, including 
% social network analysis \cite{tang2010graph,kumar2022influence}, 
molecular chemistry \cite{jiang2021could,wang2022molecular}, bioinformatics \cite{stokes2020deep,fout2017protein}, transportation \cite{derrow2021eta,jiang2022graph}, finance \cite{zhang2022efraudcom,chen2018incorporating}, recommendation systems \cite{gao2023survey,wu2022graph,borisyuk2024lignn}, etc.
% , and many other \cite{gilmer2017neural,lam2022graphcast,merchant2023scaling,wu2022graph}.
Despite the remarkable success achieved, training GNN models over large-scale graphs with millions of nodes/edges, 
% which pervade the real world, 
still remains highly challenging due to the expensive {\em message passing} operations~\cite{gilmer2017neural,zhou2023slotgat} therein, which demand tremendous computational resources and incur significant time costs \cite{hashemi2024comprehensive,10476767}.

% {\em Graph representation learning} has become a fundamental component of modern machine learning, enabling the extraction of meaningful patterns from relational data across domains like social networks \cite{tang2010graph,kumar2022influence}, molecular chemistry \cite{jiang2021could,wang2022molecular}, and recommendation systems \cite{gao2023survey, wu2022graph, borisyuk2024lignn}.
% As real-world graphs grow in scale and demands for efficient training rise, data-centric graph learning  
% \cite{yang2023data, zhao2022graph, gao2025graph,zheng2023towards, trivedi2022analyzing} has becoming popular, which emphasizes optimizing the graph data itself rather than solely designing learning models. A key challenge in this area is computational efficiency: training GNNs on large-scale graphs requires excessive resources, limiting their use in resource-constrained settings \cite{hashemi2024comprehensive}. \eat{\renchi{need references for the introduced/discussed concepts and references for their applications.}}

In recent years, inspired by the success of {\em dataset distillation}~\cite{radosavovic2018data,lei2023comprehensive} in computer vision, a series of {\em graph data distillation} (GDD, a.k.a., {\em graph condensation})~\cite{jin2021graph,zheng2024structure,yang2023does,liu2024graph,liugraph,xiao2024simple, fang2024exgc,gao2024rethinking,gao2025graph}, techniques have been proposed for expediting GNN training.
% graph reduction techniques \cite{hashemi2024comprehensive} aim to compress (graph coarsening \cite{huang2021scaling, chen2022graph}) or simplify graphs (graph sparsification \cite{fung2011general, chen2023demystifying}, subgraph sampling \cite{zeng2019graphsaint, shi2023lmc}) while preserving their essential properties. 
In particular, GDD aims to distill a compact yet informative graph $\G^\prime$ from the original large graph $\G$ as its surrogate to train the GNN models, such that the models trained on $\G^\prime$ can achieve comparable performance to those on $\G$.
In doing so, we can circumvent the significant expense required for model training on $\G$ and enable efficient inference~\cite{gao2024graph}, unlearning~\cite{li2024tcgu}, graph continual learning~\cite{liu2023cat}, hyperparameter search~\cite{ding2022faster}, and broader applicability of GNNs.
 % the condensed graph $\G^\prime$ can be readily used for graph continual learning~\cite{}, inference acceleration~\cite{}, and hyperparameter search
% To achieve this goal, the linchpin is to .

As reviewed in \cite{gao2025graph,xu2024survey}, a major category of existing works focuses on optimizing heuristic objectives that are not directly correlated with the condensation quality, e.g., aligning the model gradients~\cite{jin2021graph,yang2023does,liu2024graph}, distributions of node representations in each GNN layer~\cite{liu2022graph}, long-term training trajectories~\cite{zheng2024structure,liu2023cat}, eigenbasis of graph structures~\cite{liugraph}, or performance of models (e.g., node classification loss)~\cite{xu2023kernel,wang2024fast} between the original graph $\G$ and synthetic graph $\G^\prime$. 
However, such complex optimizations lead to intensive computations in the course of condensation, rendering them impractical for the distillation of large graphs that pervade the real world. For instance, on the well-known {\em Reddit} graph with around $233$ thousand nodes~\cite{hamilton2017inductive}, these approaches~\cite{liu2024graph,jin2021graph,zheng2024structure} consume hours to generate the condensed graph, while the training of GNNs on the entire $\G$ merely takes a few minutes. 
Recently, several attempts~\cite{gao2024rethinking,jin2022condensing,wang2024fast} have been made towards enhancing condensation efficiency. 
\citet{jin2022condensing} propose a one-step scheme to curtail the steps needed for gradient matching, while \citet{wang2024fast} transform the bi-level optimization architecture for GDD as a {\em kernel ridge regression} (KRR) task so as to avert iteratively training GNNs.
Moreover, \eat{Ref.\taiyan{citet?}} \citet{gao2024rethinking} introduces a {\em training-free} GDD framework 
that reduces the costly node distribution matching to a tractable class partition problem.
Unfortunately, these methods trade effectiveness for higher efficiency, and hence, produce compromised condensation quality.

To overcome the aforementioned deficiencies, this paper presents \algo{} (\underline{Clust}ering-based \underline{G}raph \underline{D}ata \underline{D}istillation), an effective and efficient solution for GDD via two simple steps: {\em clustering} and {\em attribute refinement}.
\algo{} takes inspiration from our empirical observation that {\em Fr\'echet Inception Distance} (FID) \cite{dowson1982frechet}, a prominent metric for measuring the quality of synthetic images created by generative models with real images, can also accurately evaluate the condensation quality (i.e., node classification performance of GNNs) of real graphs without ground-truth node labels. 
In particular, on a condensed graph $\G^\prime$ with a low FID, GNN models trained on $\G^\prime$ always yield a high prediction accuracy in node classification, as illustrated in Fig.~\ref{fig:Acc-FID}.
Our theoretical analysis further unveils that mapping node clusters of $\G$ with the minimum {\em within-cluster sum of squares} \eat{\taiyan{capital first letter?}} (WCSS) of node representations as synthetic nodes can generate $\G^\prime$ with a bounded and optimized FID. This motivates us to construct condensed graphs with high quality through proper clustering of nodes in $\G$.

% We calculate the Fréchet Inception Distance (FID) \cite{dowson1982frechet}, a well-known metric for evaluating generative models, between the original node representation and those obtained by different distillation methods. It was found that FID is negatively correlated with node classification accuracy, which makes it a good metric for evaluating the quality of synthetic graphs. 
% This inspires us to develop a graph distillation method that efficiently leverages the inherent graph structures and the gradients of GNNs to reduce the FID.

% Firstly, we develop a clustering method 
Specifically, we first develop a clustering method in \algo{} that seeks to minimize the WCSS of node representations, which can be framed as a standard $K$-Means task.
As such, the key then lies in the construction of node representations, such that nodes in $\G$ with the same ground-truth class labels are close, whereas those in distinct classes are distant. 
Doing so facilitates not only the minimization of the WCSS via $K$-Means clustering for a lower FID, but also the accurate classification of the corresponding nodes.
% maximizes the {\em homophily ratio}~\cite{zhu2020beyond} of predicted node labels over $\G$.
Using the {\em homophily assumption}~\cite{zhu2020beyond} for graphs, \algo{} generates the above-mentioned node embeddings for clustering by optimizing the {\em graph Laplacian smoothing} (GLS)~\cite{dong2016learning} and training a linear layer with label supervision. Based thereon, condensed graph topology, synthetic node attributes, and labels, can be easily derived.
On top of that, in order to mitigate the {\em heterophilic over-smoothing issue} caused by this homophily-based clustering, \algo{} additionally includes a lightweight module \caar (\underline{C}lass-\underline{A}ware \underline{A}ttribute \underline{R}efinement). More concretely, \caar refines the synthetic node attributes by injecting class-relevant features learned on class-specific graphs from $\G^\prime$ using our carefully-designed sampling technique. 
Such a small augmentation enlarges the attribute distance of heterophilic nodes (i.e., nodes with distinct class labels), thereby effectively alleviating the over-smoothing problem in node representations.
In summary, our contributions in this paper are as follows:
\begin{itemize}[leftmargin=*]
    \item We are the first to extend the FID for image data to assess the GDD quality and validate its empirical effectiveness. We establish the theoretical connection between clustering and FID optimization, which inspires the design of effective GDD.
    % This is a novel approach that provides a more comprehensive evaluation of the generated graph data.
    % We theoretically explore how clustering on the graph affects the FID between the condensed graph and the original graph. 
    \item Methodologically, we propose \algo{}, an effective and efficient GDD solution that leverages a simple clustering approach aiming at optimizing the WCSS and homophily for graph synthesis, followed by a module \caar focusing on a slight refinement of the synthetic node attributes.
    % In the first stage, it efficiently condenses the graph via clustering. In the second stage, it further learns an augmentation to refine the node attributes by training the model on both multi-view synthetic graphs and the original graph.
    \item Empirically, we conduct extensive experiments on multiple benchmark datasets to demonstrate the superior condensation effectiveness and efficiency of \algo{} over existing GDD methods.
    % , and the results show that \algo{} achieves SOTA performance on node classification tasks with higher distillation speed and generalization ability. 
    % We also perform comprehensive ablation studies to demonstrate the effectiveness of each module in our method. Moreover, we evaluate the generalization ability of our method across different architectures, and the results indicate its good generalization performance.
\end{itemize}

\section{Preliminaries}

\subsection{Symbols and Terminology}\label{eq:notation}

Let $\G=(\V,\EDG,\XM)$ be an attributed graph, where $\V$ is a set of $N=|\V|$ nodes, $\EDG$ is a set of $M=|\EDG|$ edges, and $\XM\in \mathbb{R}^{N\times d}$ symbolizes the node attribute matrix. For each edge $e_{i,j}\in \EDG$, we say $v_i$ and $v_j$ are neighbors to each other and use $\N(v_i)$ to denote the set of neighbors of $v_i$, whose degree is $d(v_i)=|\N(v_i)|$. Each node $v_i\in \V$ is associated with a $d$-dimensional attribute vector $\XM_i$. The adjacency matrix of $\G$ is denoted as $\AM \in \{0,1\}^{N\times N}$, wherein $\AM_{i,j}=\AM_{j,i}=1$ if $(v_i,v_j)\in \EDG$, and $0$ otherwise. The degree matrix of $\G$ is represented by $\DM$, whose $i$-th diagonal entry $\DM_{i,i}:=d(v_i)$. 
Accordingly, the normalized adjacency matrix of $\G$ is represented by $\NAM=\DM^{-1/2}\AM\DM^{-1/2}$ and $\IM-\NAM$ is known as its graph Laplacian.
% The {\em personalized PageRank} (PPR)~\cite{jeh2003scaling} $\pi_{v_i}(v_j)$ of node $v_j$ w.r.t. node $v_i$ can be expressed by $\left(\DM^{-1/2}\sum_{t=0}^\infty{(1-\alpha)\alpha^t\NAM^t}\DM^{1/2}\right)_{i,j}$.
We use $\textsf{tr}(\cdot)$ to denote the trace of a matrix. \eat{\taiyan{Omit the definition of label $Y$?}}

\eat{
\stitle{Dirichlet Engry} The {\em Dirichlet energy}~\cite{zhou2005regularization} of matrix $\XM\in \mathbb{R}^{N\times d}$ over the graph $\G$ is defined by
% \begin{footnotesize}
\begin{equation}\label{eq:DE}
\mathcal{D}(\XM):= \frac{1}{2}\sum_{(v_i,v_j)\in \EDG}{\left\|\frac{\XM_i}{\sqrt{d_i}}-\frac{\XM_j}{\sqrt{d_j}}\right\|^2_2} = \Tr(\XM^\top(\IM-\NAM)\XM),
\end{equation}
% \end{footnotesize}
which measures the {\em smoothness} of $\XM$ over $\G$, indicating whether signal values in $\XM$ are similar across adjacent nodes.
}

\eat{
\stitle{Spectral Clustering}
{\em Spectral clustering}~\cite{von2007tutorial} aims to partition nodes into $n$ disjoint clusters $\{C_1,C_2,\ldots,C_n\}$ such that their intra-cluster connectivity is minimized.
% based on optimizing an objective 
% that minimizes their intra-cluster connectivity, regardless of the node attributes.
% , i.e., conductance~\cite{bollobas2013modern}, Ncut~\cite{shi2000normalized}, or RatioCut~\cite{hagen1992new}, 
A clustering can be represented by a node-cluster membership matrix $\NCM$, where $\NCM_{i,j}=\frac{1}{\sqrt{|C_j|}}$ if $v_i\in C_j$ and $0$ otherwise.
One standard formulation is the RatioCut~\cite{hagen1992new}:
% \begin{small}
% \begin{equation*}
% \min_{\{C_1,\ldots,C_K\}}{\sum_{k=1}^{K}{\frac{1}{K}\sum_{v_i\in C_k, v_j\in \V\setminus C_k}{\frac{{\NAM}_{i,j}}{|C_k|}}}},
% \end{equation*}
% \end{small}
% which is to minimize the average weight of edges connecting nodes in any two distinct clusters. 
% As analysed in \cite{von2007tutorial}, the above objective is equivalent to finding an NCI 
% optimizing the following trace minimization problem:
% \begin{small}
\begin{equation}\label{eq:spectral-loss}
\min_{\NCM\in \mathbb{R}^{N\times n}}\Tr(\NCM^{\top}(\IM-\NAM)\NCM).
\end{equation}
% which is an NP-hard problem~\cite{}.

\stitle{Personalized PageRank} Let $\PM=\DM^{-1}\AM$ denote the transition matrix of $\G$ and $\alpha\in (0,1)$ be a decay factor. The {\em personalized PageRank} (PPR)~\cite{jeh2003scaling} matrix of $\G$ is defined by
\begin{equation}
\boldsymbol{\Pi} = \sum_{t=0}^\infty{(1-\alpha)\alpha^t\PM^t},
\end{equation}
in which $\boldsymbol{\Pi}_{i,j}$ signifies the PPR $\pi_{v_i}(v_j)$ of node $v_j$ w.r.t. node $v_i$, which can be interpreted as the probability of a {\em random walk with restart}~\cite{tong2006fast} originating from $v_i$ terminates at $v_j$ in the end.
}

\stitle{Homophily Ratio} 
Given a set of $K$ classes $\mathcal{Y}$ ($K:=|\Y|$) and the class label $y_i\in \mathcal{Y}$ for each node $v_i\in \V$, the {\em homophily ratio} \cite{zhu2020beyond} of $\G$ is calculated by \(\Omega(\G)=\frac{|\{(v_i,v_j)\in \EDG: y_i=y_j\}|}{m}\), which quantifies the fraction of homophilic edges that connect nodes of the same classes~\cite{zhu2020beyond}. The label matrix of $\G$ is denoted as $\YM\in \{0,1\}^{N\times K}$, wherein $\YM_{i,j}=1$ if $y_i=j$ and $0$ otherwise.

% \stitle{Graph Neural Networks} \renchi{to-do}

\stitle{Fr\'echet Inception Distance} {\em Fr\'echet Inception Distance} (FID)~\cite{dowson1982frechet} is originally used to measure the discrepancy between two multivariate normal distributions and is later adopted as the standard metric for assessing the quality of synthetic images from generative models~\cite{heusel2017gans}. Let $\muvec^{\text{org}}$ and $\muvec^{\text{syn}}$ be the mean of the Inception embeddings of real and synthetic images, respectively, and $\SigM^{\text{org}}$, $\SigM^{\text{syn}}$ be their respective covariances between features. The FID is expressed in closed form as
\begin{equation}\label{eq:fid}
\textsf{FID} = \|\muvec^{\text{org}} - \muvec^{\text{syn}}\|^2_2 + \textsf{tr}\left(\SigM^{\text{org}} + \SigM^{\text{syn}} - 2(\SigM^{\text{org}} \SigM^{\text{syn}})^{\frac{1}{2}}\right),
\end{equation}
where the first term calculates the average distance between two embedding spaces to represent the overall distribution shift, while the second term involves the trace of the covariance matrices and the geometric mean of the covariance matrices, measuring the difference in the shape and spread of the feature distributions of original and synthetic graphs. 
\eat{\renchi{?}}
% where $\mu$ is the mean of GNN's representations, $\Sigma$ is covariance matrix between dimensions. 
A lower FID value indicates better synthetic quality, and the generative model is perfect when $\Phi=0$. 
% https://arxiv.org/pdf/2401.09603
% https://en.wikipedia.org/wiki/Fr%C3%A9chet_inception_distance

% \subsection{Graph Neural Networks}\label{sec:GNN}

\stitle{Graph Laplacian Smoothing}
% \begin{definition}[Graph Laplacian Smoothing\textnormal{~\cite{ma2021unified}}]\label{def:lap-smooth}
% Let $\IM-\NAM$ be the normalized Laplacian matrix of $\G$ and $\XM\in \mathbb{R}^{n\times k}$ be a feature matrix. 
% Let $\WM$ be a learnable weight matrix. 
Given graph $\G$, {\em graph Laplacian smoothing} (GLS)~\cite{dong2016learning} is to optimize $\ZM$ such that
\begin{small}
\begin{equation}\label{eq:GNN-obj}
\min_{\ZM}{(1-\alpha)\cdot\|\ZM-\XM\|^2_F+\alpha \sum_{(v_i,v_j)\in \EDG}{\left\|\frac{\ZM_i}{\sqrt{d(v_i)}}-\frac{\ZM_j}{\sqrt{d(v_j)}}\right\|_F^2}}.
% \textsf{Tr}(\HM^{\top}(\IM-\NAM)\HM)},
\end{equation}
\end{small}
Therein, 
% $\|\cdot\|_F$ stands for the matrix Frobenius norm.
% \end{definition}
the fitting term $\|\ZM-\XM\|^2_F$ in Eq.~\eqref{eq:GNN-obj} seeks to make the node features $\ZM$ close to the initial attributes $\XM$, while the graph Laplacian regularization term  $\textstyle \sum_{(v_i,v_j)\in \EDG}{\left\|\frac{\ZM_i}{\sqrt{d(v_i)}}-\frac{\ZM_j}{\sqrt{d(v_j)}}\right\|_F^2}$ forces feature vectors of two adjacent nodes in $\G$ to be similar. The hyperparameter $\alpha\in [0,1]$ controls the smoothness of $\ZM$ over $\G$.
As revealed in recent studies~\cite{ma2021unified,Zhu2021InterpretingAU}, after removing non-linear operations and linear transformations, graph convolutional operations in most GNN models~\cite{gasteiger2018predict,kipf2016semi,defferrard2016convolutional,xu2018representation} essentially are to optimize the GLS objective in Eq.~\eqref{eq:GNN-obj}.

% The closed-form solution is
% \begin{equation}
% \ZM = \sum_{t=0}^{\infty}{(1-\alpha)\alpha^t\NAM^t}\XM
% \label{eq-gsd}
% \end{equation}

\begin{figure*}[!t]
    \centering
    \includegraphics[width=\textwidth]{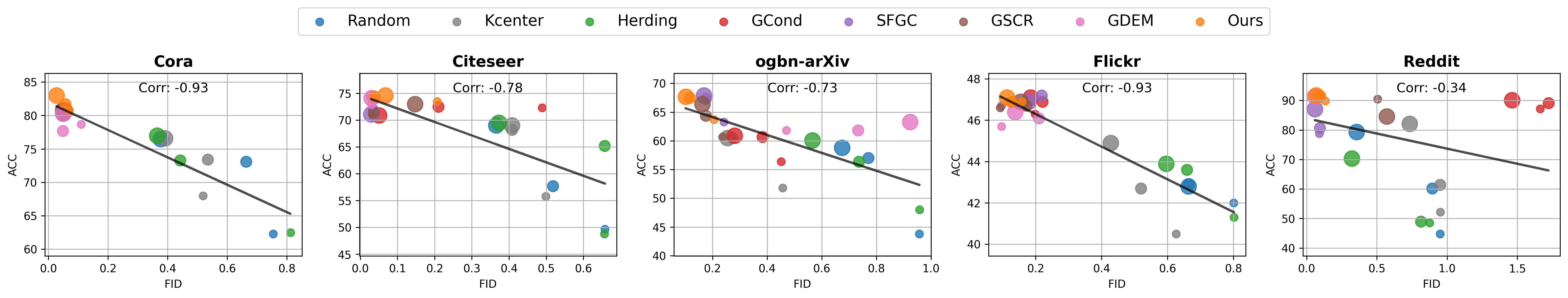}
    \vspace{-5ex}
    \caption{Classification accuracy v.s. FID under various condensation ratios.}
    \label{fig:Acc-FID}
    \vspace{-2ex}
\end{figure*}

\subsection{Graph Data Distillation (GDD)}\label{sec:GDD-background}
Given an input graph $\G=(\V,\EDG,\XM)$, GDD is to distill from $\G$ a condensed graph $\G^\prime$ with a set $\VM^\prime=\{u_1,u_2,\ldots,u_n\}$ of $n$ ($n\ll N$) nodes, a set $\EDG^\prime$ of $m$ ($m\ll M$) edges, and synthetic attribute vectors $\XM^\prime\in \mathbb{R}^{n\times d}$ for $n$ nodes, and the adjacent matrix $\AM^\prime\in \{0,1\}^{n\times n}$. \eat{\taiyan{Add one sentence to explain $A^\prime$, like: Accordingly, the adjacent matrix for $G^\prime$ is $A^\prime \in \{0,1\}^{n\times n}$}}
Each node $u_i\in \G^\prime$ is associated with a synthetic class label $y_i\in \Y$ and the label matrix of $\G^\prime$ is symbolized by $\YM^\prime\in \mathbb{R}^{n\times K}$.
In the meantime, GNNs trained on the condensed graph $\G^\prime$ should achieve comparable performance to those on the original graph $\G$. Let $\mathcal{L}(\cdot,\cdot)$ be an inaccuracy metric for node classification tasks. In mathematical terms, the optimization goal of GDD can be formulated as follows~\cite{liu2024graph}:
\begin{equation*}
\begin{gathered}
\min \quad\mathcal{L}(\textsf{GNN}_{\boldsymbol{\Theta}_{\G^\prime}}(\AM, \XM), \YM) \\
s.t. \quad \boldsymbol{\Theta}_{\G^\prime} = \argmin{\boldsymbol{\Theta}}{\mathcal{L}(\textsf{GNN}_{\boldsymbol{\Theta}}(\AM^\prime, \XM^\prime), \YM^\prime)},
\end{gathered}
\end{equation*}
where $\textsf{GNN}_{\boldsymbol{\Theta}}(\cdot)$ denotes a GNN model parameterized with $\boldsymbol{\Theta}$ and $\boldsymbol{\Theta}_{\G^\prime}$ are the model parameters obtained via training the GNN over the condensed graph $\G^\prime$. To get detailed information of representative types of GDD, please check Appendix ~\ref{appendix:formalize_GDD}. 

\eat{\renchi{give a categorization based on Appendix C in \url{https://arxiv.org/pdf/2310.09202} and Section 2.2 in \url{https://arxiv.org/pdf/2405.13707} (only these representative GDD solutions. The comprehensive review should be in Appendix A.). Summarize them in a concise way and better to include the formulas therein. For each category, point out their limitations and why they are slow.}}

% \begin{align*}
%     & \min \quad\mathcal{L}(\text{GNN}_{\boldsymbol{\theta}_{\G^\prime}}(\AM, \XM), \YM) \\
%     & s.t. \quad \boldsymbol{\theta}_{\G^\prime} = \argmin_{\theta} \mathcal{L}(\text{GNN}_{\boldsymbol{\theta}}(\AM', \XM'), \YM')
% \end{align*}

% Why gradient matching? Is it necessary? Any insights?

% \renchi{Different optimization strategies \url{https://arxiv.org/pdf/2405.13707}}

\section{Condensation Quality Analysis}
In this section, we first extend FID to assess the quality of condensed graphs by GDD methods and validate its empirical effectiveness on real datasets. Next, we conduct a theoretical analysis of FID to pinpoint a design principle for effective GDD.

% The goal of condensation: using the condensed graphs with the minimum size to represent the raw graph(condense graph is as close as to the raw graph possible), and the condensed graph nodes attributes and topology should with more diversity and distinguish-ability between classes. 

\subsection{FID as Condensation Quality Metric}
Given the original graph $\G$ and the condensed graph $\G^\prime$, we propose to evaluate the condensation quality of $\G^\prime$ 
via the FID of node representations $\HM$ and $\HM^\prime$ learned by GNNs (e.g., \texttt{GCN}) over $\G$ and $\G^\prime$.
% and stands for the classification accuracy obtained by GNNs (e.g., \texttt{GCN}) on the training set of $\G$, which acts as a weight.
Specifically, we define
\begin{equation}\label{eq:mu-org-syn}
\muvec^{\text{org}} = \frac{1}{N}\sum_{v_i\in  \G}{\HM_{i}}\ \text{and}\ \muvec^{\text{syn}} = \frac{1}{n}\sum_{u_i\in  \G'}{\HM'_{i}}.
\end{equation}
Let $\hat{\HM}$ and $\hat{\HM'}$ represent the centered versions of $\HM$ and $\HM'$, respectively. The covariance matrices $\SigM^{\textnormal{org}}$ and $\SigM^{\textnormal{syn}}$ of $\HM$ and $\HM^{\prime}$ then can be formulated as
\begin{equation}\label{eq:conv}
\SigM^{\textnormal{org}} = \frac{1}{N} {\hat{\HM}}^{\top}\hat{\HM}\ \text{and}\ \SigM^{\textnormal{syn}} = \frac{1}{n} {\hat{\HM'}}^\top \hat{\HM'}.
\end{equation}
The FID of $\HM$ and $\HM^\prime$ can thus be computed as per Eq.~\eqref{eq:fid}.

Accordingly, the first term \(\|\muvec^{\text{org}} - \muvec^{\text{syn}}\|^2_2\) in the FID quantifies the dissimilarity of these two node representations, reflecting how much on average the condensed graph $\G^\prime$ resembles $\G$ and captures the key structural and attribute characteristics therein.
On the other hand, recall that a larger covariance value in \(\SigM^{\textnormal{org}}\) or \(\SigM^{\textnormal{syn}}\) indicates a wider spread of feature values, while off-diagonal elements represent correlations between different feature dimensions. The second term \(\Tr\left(\SigM^{\text{org}} + \SigM^{\text{syn}} - 2(\SigM^{\text{org}} \SigM^{\text{syn}})^{\frac{1}{2}}\right)\) is thus the difference in the spread of individual feature dimensions of $\HM$ and $\HM^\prime$ and correlations between them, measuring how different the ``shape'' of the feature distributions underlying $\G$ and $\G^\prime$ are.
Intuitively, a lower FID connotes a closer match between these two data distributions, and hence, a higher condensation quality of $\HM^\prime$.

\eat{
In order to measure the quality of condense graph, we firstly calculate the Fr\'echet Inception Distance (FID)~\cite{dowson1982frechet}, which computes the distance in the embedding space between two multivariate Gaussian distributions fitted to the condensed graph and original graph. The embedding can be obtained by pretraining the \texttt{GCN} or other GNNs. 
\begin{small}
\begin{equation}
    \label{eq:fid}
    \psi = \|\muvec^{\text{org}} - \muvec^{\text{syn}}\|^2_2 + \Tr\left(\SigM^{\text{org}} + \SigM^{\text{syn}} - 2(\SigM^{\text{org}} \SigM^{\text{syn}})^{\frac{1}{2}}\right)
\end{equation}
\end{small}
where $\mu$ is the mean of GNN's representations, $\Sigma$ is covariance matrix between dimensions. A lower FID value indicates better condensation quality and diversity. 

For the \texttt{GCN} trained on the synthetic dataset, and evaluate its accuracy $\xi$ on the training set of raw data. The final condensation-quality can be calculated by
\begin{equation}
\textsf{FCQ}(\G,\G^\prime) = \xi \log({1}/{\Phi}+1)
\end{equation}

As the Eq.(\ref{eq:fid}) shows, in order to decrease between the FID of original graph and synthesized graphs, we not only need to lower the distance between the mean of representations, but also make the their covariance matrices close. A straightforward way is to cluster node representations and merge nodes with similar representations into a supernode.
}

\stitle{Empirical Study} To verify the empirical effectiveness of FID in measuring the condensation quality, we experimentally evaluate the GNN performance on the condensed graphs $\G^\prime$ obtained via various GDD approaches over real graph datasets in downstream tasks (i.e., node classification) and their corresponding FID values.\footnote{In practice, we normalize the representations to control the range of FID values.} 
Fig.~\ref{fig:Acc-FID} plots the classification accuracies and FID scores of the \texttt{Random}~\cite{welling2009herding}, \texttt{Kcenter}~\cite{sener2017active}, and \texttt{Herding}~\cite{welling2009herding} coreset methods, as well as GDD approaches including \texttt{GCond}~\cite{jin2021graph}, \texttt{SFGC}~\cite{zheng2024structure}, \texttt{CSCR}~\cite{liu2024graph}, \texttt{GDEM}~\cite{liugraph}, and our proposed \algo{} on {\em Citeseer}, {\em Cora}, {\em arXiv}, and {\em Flickr}. 
Each method is represented by three bubbles, which correspond to their results under three different condensation ratios commonly used in previous works~\cite{jin2021graph}. 
It can be observed from Fig.~\ref{fig:Acc-FID} that in almost all cases, a method with a low FID will yield high accuracy in the downstream node classification tasks, implying a linear correlation between the FID of $\G^\prime$ and the GNN model performance over it. In turn, FID can serve as a precise quality metric for condensed graphs practically, even without ground-truth labels.
% we can make the following observations. 

\subsection{Theoretical Inspiration}\label{sec:inspire}
% Given the practical effectiveness of the FID, we further investigate it from a theoretical perspective. 

Ideally, a high-quality condensed graph $\G^\prime$ should minimize the first term \(\|\muvec^{\text{org}} - \muvec^{\text{syn}}\|^2_2\) in the FID, meaning that
\begin{equation*}
\sum_{u_i\in  \G'}{\HM'_{i}} \approx \sum_{v_i\in  \G}{\frac{n}{N}\cdot\HM_{i}}.
\end{equation*}
Intuitively, the minimization can be attained when the representation $\HM'_{i}$ of each node $u_i\in \G^\prime$ is a weighted summation of the representations of a subset $C_i$ of nodes in $\G$, i.e., $\HM'_{i}=\sum_{v_j\in  C_i}{w_j\cdot\HM_{j}}$,
% Note that the above equality can be satisfied when we let $\HM'_{i}=\frac{n}{N}\sum_{v_i\in  C_i}{\HM_{i}}$ where $C_i$ is a subset of $\V$ 
such that $\cup_{i=1}^n C_i=\V$ and $C_i\cap C_j=\emptyset\ \forall{i\neq j,\ 1\le i,j\le n}$.
As such, each node $u_i$ in $\G^\prime$ is a supernode merged from the nodes inside a cluster $C_i$. 
Along this line, 
% the ideal condensed graph $\G^\prime$ that minimizes the FID can be obtained by 
a simple and straightforward approach to constructing the condensed graph $\G^\prime$ is to partition the input graph $\G$ into $n$ disjoint clusters $\{C_1,\ldots,C_{n}\}$.

\begin{theorem}\label{lem:mu}
\(\|\muvec^{\textnormal{org}} - \muvec^{\textnormal{syn}}\|^2_2 \le \frac{1}{N^2}\sum_{i=1}^n{\left( \frac{N}{n}-{|C_i|}\right)^2}\).
% \frac{1}{n}\sum_{i=1}^n\left\|{{\HM'_{i}}}- \sum_{v_j\in  C_i}{\frac{n}{N}\cdot \HM_{j}} \right\|^2_2\).
% \begin{small}
% \begin{equation}
% \|\muvec^{\textnormal{org}} - \muvec^{\textnormal{syn}}\|^2_2 \le \frac{1}{n}\sum_{i=1}^n\left\|{{\HM'_{i}}}- \sum_{v_j\in  C_i}{\frac{n}{N}\cdot \HM_{j}} \right\|^2_2
% \end{equation}
% \end{small}
\end{theorem}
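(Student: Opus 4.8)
The plan is to realize each synthetic representation $\HM'_i$ as the centroid of the nodes merged into the supernode $u_i$, i.e., to take $w_j = 1/|C_i|$ in the weighted summation so that $\HM'_i = \frac{1}{|C_i|}\sum_{v_j\in C_i}\HM_j$. This is the canonical choice when $u_i$ represents the cluster $C_i$, and it is exactly what drives the first FID term toward zero. With this choice, $\muvec^{\text{syn}} = \frac{1}{n}\sum_{i=1}^n\HM'_i$, while the identity $\sum_{v_j\in C_i}\HM_j = |C_i|\,\HM'_i$ lets me re-express $\muvec^{\text{org}} = \frac{1}{N}\sum_{v_j\in\V}\HM_j = \frac{1}{N}\sum_{i=1}^n|C_i|\,\HM'_i$, where I use that $\{C_i\}$ partitions $\V$.

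Subtracting the two means is the crux of the algebra. I would factor the difference as
\begin{equation*}
\muvec^{\text{org}}-\muvec^{\text{syn}}=\sum_{i=1}^n\Big(\tfrac{|C_i|}{N}-\tfrac{1}{n}\Big)\HM'_i=\frac{1}{N}\sum_{i=1}^n\Big(|C_i|-\tfrac{N}{n}\Big)\HM'_i,
\end{equation*}
so that the coefficient attached to each centroid is $a_i := |C_i|-N/n$. A useful sanity check is that $\sum_{i=1}^n a_i = N-N = 0$, reflecting that a perfectly balanced partition ($|C_i| = N/n$) makes the difference vanish, consistent with the target bound collapsing to $0$.

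It remains to pass from the squared norm of this weighted combination to $\frac{1}{N^2}\sum_i(N/n-|C_i|)^2$. Leaning on the normalization of the learned representations, each centroid, being a convex combination of unit-norm node representations, satisfies $\|\HM'_i\|_2\le 1$. I would then bound $\|\muvec^{\text{org}}-\muvec^{\text{syn}}\|_2^2 = \frac{1}{N^2}\|\sum_i a_i\HM'_i\|_2^2$ by a Cauchy--Schwarz argument so as to recover the term $\sum_i a_i^2 = \sum_i(N/n-|C_i|)^2$, using that $(N/n-|C_i|)^2 = a_i^2$.

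I expect this final bounding step to be the main obstacle: the quantity $\|\sum_i a_i\HM'_i\|_2^2$ expands into $\sum_{i,j}a_ia_j\langle\HM'_i,\HM'_j\rangle$, and the off-diagonal inner products must not amplify the diagonal $\sum_i a_i^2\|\HM'_i\|_2^2$. I would control them by combining the unit-norm normalization with $\sum_i a_i = 0$, which permits re-centering the centroids before applying Cauchy--Schwarz, so that the clean quadratic $\sum_i a_i^2$ survives. The remaining steps---the centroid identity, the partition rewriting of $\muvec^{\text{org}}$, and the factorization---are routine.
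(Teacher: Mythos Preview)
Your factorization $\muvec^{\text{org}}-\muvec^{\text{syn}}=\frac{1}{N}\sum_{i=1}^n a_i\HM'_i$ with $a_i=|C_i|-N/n$ is cleaner than the paper's route, which keeps the individual node vectors: the paper first applies Jensen to the uniform average over $i$ to get $\frac{1}{n}\sum_{i}\bigl\|\sum_{v_j\in C_i}(\tfrac{1}{|C_i|}-\tfrac{n}{N})\HM_j\bigr\|_2^2$, then invokes ``Jensen'' again to push the square inside the inner sum, and finally uses $\|\HM_j\|_2=1$ together with an algebraic simplification.

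Your last step, however, is a genuine gap, and the paper's analogous step has the same flaw. You want $\bigl\|\sum_i a_i\HM'_i\bigr\|_2^2\le\sum_i a_i^2$ from $\|\HM'_i\|_2\le 1$ and $\sum_i a_i=0$, but this is false: take $n=3$, $\HM'_1=\HM'_2=e$, $\HM'_3=-e$ for a unit vector $e$, and $a=(1,1,-2)$; then the left side is $16$ and the right side is $6$ (and this configuration is realizable by a clustering, e.g.\ $N=9$, $|C_1|=|C_2|=4$, $|C_3|=1$). Re-centering does nothing because $\sum_i a_i(\HM'_i-v)=\sum_i a_i\HM'_i$ for every fixed $v$, and any Cauchy--Schwarz you apply afterward costs an extra factor of $n$. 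The paper's parallel move, $\bigl\|\sum_{v_j\in C_i}c\,\HM_j\bigr\|_2^2\le\sum_{v_j\in C_i}c^2\|\HM_j\|_2^2$, fails for the same reason (it would need the $\HM_j$ within each cluster to be orthogonal), and its subsequent identity $\frac{1}{n}\sum_i|C_i|\bigl(\tfrac{1}{|C_i|}-\tfrac{n}{N}\bigr)^2=\sum_i\bigl(\tfrac{1}{n}-\tfrac{|C_i|}{N}\bigr)^2$ is incorrect term by term. So the stated bound needs an extra, unarticulated hypothesis (e.g.\ orthonormality of the $\HM'_i$); from unit norms alone what you can actually prove along either line is only the weaker $\|\muvec^{\text{org}}-\muvec^{\text{syn}}\|_2^2\le\frac{n}{N^2}\sum_{i=1}^n\bigl(\tfrac{N}{n}-|C_i|\bigr)^2$.
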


% Let $\{C_1,\ldots,C_{n}\}$ be $n$ partitions of nodes in input graph $\G$ such that $\cup_{i=1}^n C_i=\V$ and $C_i\cap C_j=\emptyset\ \forall{i\neq j,\ 1\le i,j\le n}$.
When we let $u_i$'s representation be the averaged embeddings of the nodes inside cluster $C_i$, i.e.,
% each node $u_i$ in the condensed graph $\G^\prime$ be a supernode merged from the nodes inside a cluster $C_i$ and its corresponding representation 
${\HM^\prime_{i}}=\sum_{v_j\in  C_i}{\frac{\HM_{j}}{|C_i|}}$, our Theorem~\ref{lem:mu} reveals that the first term \(\|\muvec^{\text{org}} - \muvec^{\text{syn}}\|^2_2\) in the FID can be bounded by \(\frac{1}{N^2}\sum_{i=1}^n{\left( \frac{N}{n}-{|C_i|}\right)^2}\). Since $\sum_{i=1}^n{|C_i|}=N$, the mean cluster size is thus $\frac{N}{n}$. This upper bound is essentially the {\em variance} of the sizes of $n$ clusters $\{C_1,\ldots,C_{n}\}$ scaled by $\frac{1}{N}$. In particular, \(\|\muvec^{\text{org}} - \muvec^{\text{syn}}\|^2_2\) is minimized, i.e., \(\|\muvec^{\text{org}} - \muvec^{\text{syn}}\|^2_2\approx 0\), when the size of every cluster $C_i$ is approximately $\frac{N}{n}$.

% investigate XXX for the design of graph distillation approaches from a theoretical perspective.

% Let $\G$ be the original graph and $\G'$ be the condensed graph.
% The node representations learned over $\G$ and $\G'$ are $\HM$ and $\HM'$, respectively. $\muvec^{\text{org}} = \frac{1}{N}\sum_{v_j\in  \G}{\HM_{j}}$ and $\muvec^{\text{syn}} = \frac{1}{n}\sum_{u_i\in  \G'}{\HM'_{i}}$. Let $\{C_1,\ldots,C_{n}\}$ be $n$ partitions of $\G$.

% ${\HM'_{i}}= \frac{1}{|C_i|} \cdot \sum_{v_j\in  C_i}{\HM_{j}}$, i.e. the node representation $\HM'$ consists of clustering center of $\{C_1,\ldots,C_{n}\}$, the distance is minimized. 

\begin{theorem}\label{lem:conv-bound}
Let $c_{\max}=\underset{1\le i\le n}{\max}{|C_i|}$ and $c_{\min}=\underset{1\le i\le n}{\min}{|C_i|}$. Then, 
\(\Tr\left(\SigM^{\textnormal{org}} + \SigM^{\textnormal{syn}} - 2(\SigM^{\textnormal{org}} \SigM^{\textnormal{syn}})^{\frac{1}{2}}\right) \leq \frac{1}{N} \sum_{i=1}^n\sum_{v_j\in C_i} \|\HM_j -\HM'_i\|^2_2 + \frac{nc_{\max}}{N}\cdot\|\muvec^{\textnormal{org}} - \muvec^{\textnormal{syn}}\|^2_2 + \left(\frac{c_{\max}}{c_{\min}}+\frac{N}{nc_{\min}}\right)\cdot \Tr(\SigM^{\textnormal{org}})\) holds.
% \begin{align*}
% & \Tr\left(\SigM^{\textnormal{org}} + \SigM^{\textnormal{syn}} - 2(\SigM^{\textnormal{org}} \SigM^{\textnormal{syn}})^{\frac{1}{2}}\right) \leq \frac{1}{N} \sum_{i=1}^n\sum_{v_j\in C_i} \|\HM_j -\HM'_i\|^2_2\\
% & + \frac{nc_{\max}}{N}\cdot\|\muvec^{\textnormal{org}} - \muvec^{\textnormal{syn}}\|^2_2 + \left(\frac{nc_{\max}}{N}+1\right)\cdot \Tr(\SigM^{\textnormal{syn}}).
% \end{align*}
% (1+\frac{N}{n c_{\min}})\cdot \Tr(\SigM^{\textnormal{org}}),\) where $c_{\min}=\min_{1\le i\le n}{|C_i|}$.
% \begin{equation*}
% \Tr\left(\SigM^{\textnormal{org}} + \SigM^{\textnormal{syn}} - 2(\SigM^{\textnormal{org}} \SigM^{\textnormal{syn}})^{\frac{1}{2}}\right) \leq XXX\cdot \Tr(\SigM^{\textnormal{org}}).
% % \|{\SigM^{\textnormal{org}}}^{1/2}-{\SigM^{\textnormal{syn}}}^{1/2}\|^2_F. 
% \end{equation*}
\eat{
\begin{proof}
Thus, $\SigM^{\textnormal{org}}$ and $\SigM^{\textnormal{syn}}$ are symmetric and positive semi‐definite.
% , indicating that their eigenvalues are non-negative.

% Let $\lambda_i(\SigM^{\textnormal{org}})$ and $\lambda_i(\SigM^{\textnormal{syn}})$ be the $i$-th eigenvalues of $\SigM^{\textnormal{org}}$ and $\SigM^{\textnormal{syn}}$, respectively. 
% By the property of matrix trace and the positive semi‐deﬁniteness of $\SigM^{\textnormal{org}}$ and $\SigM^{\textnormal{syn}}$,
% \begin{align*}
% & \Tr\left({\SigM^{\textnormal{org}}}^2 + {\SigM^{\textnormal{syn}}}^2 - 2(\SigM^{\textnormal{org}} \SigM^{\textnormal{syn}})\right) \\
% % & = \Tr(\SigM^{\textnormal{org}}) + \Tr(\SigM^{\textnormal{syn}}) - 2\Tr((\SigM^{\textnormal{org}} \SigM^{\textnormal{syn}})^{\frac{1}{2}}) \\
% & = \Tr\left( ({\SigM^{\textnormal{org}}}-{\SigM^{\textnormal{syn}}})^2 \right) \\
% % & = \|{\SigM^{\textnormal{org}}}^{1/2}-{\SigM^{\textnormal{syn}}}^{1/2}\|^2_F\\
% % & \le \sum_{i=1}^d{(\sqrt{\lambda_i(\SigM^{\textnormal{org}})}-\sqrt{\lambda_i(\SigM^{\textnormal{syn}})})^2}\\
% % & \le \sum_{i=1}^d{({\lambda_i(\SigM^{\textnormal{org}})}-{\lambda_i(\SigM^{\textnormal{syn}})})^2} \\
% % & \le \Tr\left( {\SigM^{\textnormal{org}}}^2 + {\SigM^{\textnormal{syn}}}^2 - 2\SigM^{\textnormal{org}} \SigM^{\textnormal{syn}} \right) \\
% & \leq \|\SigM^{\textnormal{org}} - \SigM^{\textnormal{syn}}\|_F^2 = {\sum_{i=1}^d\sum_{j=1}^d{(\SigM^{\textnormal{org}}_{i,j} - \SigM^{\textnormal{syn}}_{i,j})^2}}.
% \end{align*}

By the positive semi‐deﬁniteness of $\SigM^{\textnormal{org}}$ and $\SigM^{\textnormal{syn}}$ and Araki–Lieb–Thirring inequality~\cite{araki1990inequality}, we have
\begin{align*}
% & \Tr\left({\SigM^{\textnormal{org}}}^2 + {\SigM^{\textnormal{syn}}}^2 - 2(\SigM^{\textnormal{org}} \SigM^{\textnormal{syn}})\right) \\
& \Tr(\SigM^{\textnormal{org}}) + \Tr(\SigM^{\textnormal{syn}}) - 2\Tr((\SigM^{\textnormal{org}} \SigM^{\textnormal{syn}})^{\frac{1}{2}}) \\
& \leq \Tr(\SigM^{\textnormal{org}}) + \Tr(\SigM^{\textnormal{syn}}) - 2\Tr({\SigM^{\textnormal{org}}}^{\frac{1}{2}} {\SigM^{\textnormal{syn}}}^{\frac{1}{2}}) \\
& = \Tr\left( ({\SigM^{\textnormal{org}}}^{\frac{1}{2}}-{\SigM^{\textnormal{syn}}}^{\frac{1}{2}})^2 \right) \\
& = \|{\SigM^{\textnormal{org}}}^{1/2}-{\SigM^{\textnormal{syn}}}^{1/2}\|^2_F \\
& \le \|{\SigM^{\textnormal{org}}}^{1/2}\|^2_F+\|{\SigM^{\textnormal{syn}}}^{1/2}\|^2_F \\
& = \Tr(\SigM^{\textnormal{org}}) + \Tr(\SigM^{\textnormal{syn}}).
% & \le \sum_{i=1}^d{(\sqrt{\lambda_i(\SigM^{\textnormal{org}})}-\sqrt{\lambda_i(\SigM^{\textnormal{syn}})})^2}\\
% & \le \sum_{i=1}^d{({\lambda_i(\SigM^{\textnormal{org}})}-{\lambda_i(\SigM^{\textnormal{syn}})})^2} \\
% & \le \Tr\left( {\SigM^{\textnormal{org}}}^2 + {\SigM^{\textnormal{syn}}}^2 - 2\SigM^{\textnormal{org}} \SigM^{\textnormal{syn}} \right) \\
% & \leq \|\SigM^{\textnormal{org}} - \SigM^{\textnormal{syn}}\|_F^2 = {\sum_{i=1}^d\sum_{j=1}^d{(\SigM^{\textnormal{org}}_{i,j} - \SigM^{\textnormal{syn}}_{i,j})^2}}.
\end{align*}
\renchi{needs to further give the upper bound.}
\end{proof}
}
\end{theorem}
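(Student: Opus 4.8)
The plan is to linearize the matrix square root first---this is precisely where a naive argument stalls---and only then to introduce the within-cluster sum of squares through a variance decomposition. Write $\muvec^{\textnormal{org}}=\frac1N\sum_{v_j}\HM_j$ and recall $\HM'_i=\frac1{|C_i|}\sum_{v_j\in C_i}\HM_j$. A one-line computation gives $\frac1N\sum_{i=1}^n |C_i|\,\HM'_i=\muvec^{\textnormal{org}}$, so that $\muvec^{\textnormal{org}}$ is the \emph{size-weighted} mean of the cluster representations; this identity drives everything below.

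\emph{Removing the square root.} Since $\SigM^{\textnormal{org}}$ and $\SigM^{\textnormal{syn}}$ are Gram matrices, hence symmetric and positive semidefinite, the Araki--Lieb--Thirring inequality yields $\Tr(\SigM^{\textnormal{org}}+\SigM^{\textnormal{syn}}-2(\SigM^{\textnormal{org}}\SigM^{\textnormal{syn}})^{1/2})\le\|{\SigM^{\textnormal{org}}}^{1/2}-{\SigM^{\textnormal{syn}}}^{1/2}\|_F^2$. Rather than the lossy bound $\|{\SigM^{\textnormal{org}}}^{1/2}-{\SigM^{\textnormal{syn}}}^{1/2}\|_F^2\le\Tr(\SigM^{\textnormal{org}})+\Tr(\SigM^{\textnormal{syn}})$, which erases all cluster structure, I would apply the Powers--St\o rmer inequality $\|A^{1/2}-B^{1/2}\|_F^2\le\Tr|A-B|$ for positive semidefinite $A,B$. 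This reduces the goal to bounding the trace norm $\Tr|\SigM^{\textnormal{org}}-\SigM^{\textnormal{syn}}|$ of the \emph{plain} difference, on which an additive decomposition can act.

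\emph{The WCSS term.} By the law of total covariance, $\SigM^{\textnormal{org}}=\boldsymbol{\Sigma}^{\mathrm{wit}}+\widetilde{\boldsymbol{\Sigma}}$, where $\boldsymbol{\Sigma}^{\mathrm{wit}}=\frac1N\sum_i\sum_{v_j\in C_i}(\HM_j-\HM'_i)(\HM_j-\HM'_i)^\top$ is the within-cluster scatter and $\widetilde{\boldsymbol{\Sigma}}=\frac1N\sum_i|C_i|(\HM'_i-\muvec^{\textnormal{org}})(\HM'_i-\muvec^{\textnormal{org}})^\top$ the size-weighted between-cluster scatter; the cross terms vanish exactly because $\muvec^{\textnormal{org}}$ is the size-weighted mean. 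As $\boldsymbol{\Sigma}^{\mathrm{wit}}$ is positive semidefinite, $\Tr|\SigM^{\textnormal{org}}-\widetilde{\boldsymbol{\Sigma}}|=\Tr(\boldsymbol{\Sigma}^{\mathrm{wit}})=\frac1N\sum_i\sum_{v_j\in C_i}\|\HM_j-\HM'_i\|_2^2$, which is exactly the claimed first term. The trace-norm triangle inequality then splits $\Tr|\SigM^{\textnormal{org}}-\SigM^{\textnormal{syn}}|\le\Tr(\boldsymbol{\Sigma}^{\mathrm{wit}})+\Tr|\widetilde{\boldsymbol{\Sigma}}-\SigM^{\textnormal{syn}}|$.

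\emph{The residual mismatch.} It remains to control $\Tr|\widetilde{\boldsymbol{\Sigma}}-\SigM^{\textnormal{syn}}|\le\Tr(\widetilde{\boldsymbol{\Sigma}})+\Tr(\SigM^{\textnormal{syn}})$: the two covariances are built from the same cluster means but differ in their weights ($|C_i|/N$ versus $1/n$) and centers ($\muvec^{\textnormal{org}}$ versus $\muvec^{\textnormal{syn}}$). For the first, Jensen's inequality $|C_i|\,\|\HM'_i-\muvec^{\textnormal{org}}\|_2^2\le\sum_{v_j\in C_i}\|\HM_j-\muvec^{\textnormal{org}}\|_2^2$ collapses the weighted sum to $\Tr(\widetilde{\boldsymbol{\Sigma}})\le\Tr(\SigM^{\textnormal{org}})\le\frac{c_{\max}}{c_{\min}}\Tr(\SigM^{\textnormal{org}})$. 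For the second, centering at the \emph{unweighted} mean $\muvec^{\textnormal{syn}}$ gives $\Tr(\SigM^{\textnormal{syn}})\le\frac1n\sum_i\|\HM'_i-\muvec^{\textnormal{org}}\|_2^2$, and the same Jensen step together with $|C_i|\ge c_{\min}$ yields $\frac{N}{nc_{\min}}\Tr(\SigM^{\textnormal{org}})$; the omitted nonnegative term $\frac{nc_{\max}}{N}\|\muvec^{\textnormal{org}}-\muvec^{\textnormal{syn}}\|_2^2$ may then be added to match the stated form, and summing the three contributions closes the argument. I expect the main obstacle to be the very first move: the Bures form couples $\SigM^{\textnormal{org}}$ and $\SigM^{\textnormal{syn}}$ under a single square root, and one must recognize that only after linearizing it via Powers--St\o rmer does the additive split $\SigM^{\textnormal{org}}=\boldsymbol{\Sigma}^{\mathrm{wit}}+\widetilde{\boldsymbol{\Sigma}}$---and hence the WCSS itself---become accessible; everything afterwards is elementary Jensen and trace-norm manipulation.
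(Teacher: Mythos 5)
Your proof is correct, and after the shared opening move (Araki--Lieb--Thirring) it takes a genuinely different middle route from the paper's. The paper settles for the crude bound $\|{\SigM^{\textnormal{org}}}^{1/2}-{\SigM^{\textnormal{syn}}}^{1/2}\|_F^2 \le \Tr(\SigM^{\textnormal{org}})+\Tr(\SigM^{\textnormal{syn}})$ --- the very step you dismiss as ``erasing all cluster structure'' --- and then recovers the WCSS anyway by decomposing the scalar $\Tr(\SigM^{\textnormal{org}})$ into within- plus between-cluster variance and rewriting the between-cluster part via the identity $\sum_i\|\HM'_i-\muvec^{\textnormal{org}}\|_2^2=n(\Tr(\SigM^{\textnormal{syn}})+\|\muvec^{\textnormal{syn}}-\muvec^{\textnormal{org}}\|_2^2)$; the term $\frac{nc_{\max}}{N}\|\muvec^{\textnormal{org}}-\muvec^{\textnormal{syn}}\|_2^2$ is therefore generated organically by an exact identity, not appended. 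You instead insert Powers--St\o rmer to pass to $\Tr|\SigM^{\textnormal{org}}-\SigM^{\textnormal{syn}}|$, perform the within/between split at the matrix level, and use the trace-norm triangle inequality; the mean-shift term then plays no role and enters only as nonnegative slack added to match the stated form. Both routes rest on the same two substantive ingredients --- the total-variance decomposition with vanishing cross terms, and the chain $\Tr(\SigM^{\textnormal{syn}})\le\frac1n\sum_i\|\HM'_i-\muvec^{\textnormal{org}}\|_2^2\le\frac{N}{nc_{\min}}\Tr(\SigM^{\textnormal{org}})$ --- so the difference is largely one of packaging. Your version costs one extra named inequality but delivers the slightly sharper intermediate bound $\Tr(\boldsymbol{\Sigma}^{\mathrm{wit}})+\Tr(\SigM^{\textnormal{org}})+\frac{N}{nc_{\min}}\Tr(\SigM^{\textnormal{org}})$ before the deliberate weakenings, whereas the paper's version demonstrates that even the ``lossy'' Frobenius bound suffices because the cluster structure can be reinstated downstream at the scalar level.
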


As for the second term \(\Tr\left(\SigM^{\text{org}} + \SigM^{\text{syn}} - 2(\SigM^{\text{org}} \SigM^{\text{syn}})^{\frac{1}{2}}\right)\) in the FID, Theorem~\ref{lem:conv-bound} states that its upper bound is positively correlated with \(\frac{1}{N} \sum_{i=1}^n\sum_{v_j\in C_i} \|\HM_j -\HM'_i\|^2_2\), the first term in FID (i.e., \(\|\muvec^{\textnormal{org}} - \muvec^{\textnormal{syn}}\|^2_2\)), and $\Tr(\SigM^{\textnormal{org}})$. Notice that \(\|\muvec^{\textnormal{org}} - \muvec^{\textnormal{syn}}\|^2_2\) can be bounded by Theorem~\ref{lem:mu}, and $\Tr(\SigM^{\textnormal{org}})$ is solely determined by the input graph $\G$, and hence, can be regarded as a constant. 
Therefore, we can reduce the second term \(\Tr\left(\SigM^{\text{org}} + \SigM^{\text{syn}} - 2(\SigM^{\text{org}} \SigM^{\text{syn}})^{\frac{1}{2}}\right)\) and further the FID, by minimizing \(\frac{1}{N} \sum_{i=1}^n\sum_{v_j\in C_i} \|\HM_j -\HM'_i\|^2_2\), 
% which essentially quantifies the averaged variance of the representations within clusters.
which is essentially the WCSS, i.e., the variance of feature representations within each cluster.

\eat{
it can be upper bounded by $(1+\frac{N}{n c_{\min}})\cdot \Tr(\SigM^{\textnormal{org}})$, where $\Tr(\SigM^{\textnormal{org}})$ is only determined by the input graph $\G$. 
Notice that $c_{\min}$ denotes the size of the smallest cluster, which is at most $\lfloor\frac{N}{n}\rfloor$. 
Accordingly, 
\(\Tr\left(\SigM^{\text{org}} + \SigM^{\text{syn}} - 2(\SigM^{\text{org}} \SigM^{\text{syn}})^{\frac{1}{2}}\right)\) will be small when $c_{\min}$ approximates $\lfloor\frac{N}{n}\rfloor$, indicating that the sizes of all clusters are comparable.
}

In a nutshell, the foregoing analyses suggest a promising way to construct the condensed graph $\G^\prime$ with a bounded and low FID through partitioning the original node set $\V$ of $\G$ into clusters, particularly with balanced sizes and small WCSS, as the $n$ synthetic nodes in $\G^\prime$. For the proofs of the theorems, please refer to the Appendix~\ref{sec:proof} for details.

\eat{
\begin{lemma}
\begin{equation}
|\SigM^{\text{org}}_{a,b}-\SigM^{\text{syn}}_{a,b}| \le 
\end{equation}
when $|C_i|=N/n$ and ${\HM'_{i}}= \frac{1}{|C_i|} \cdot \sum_{v_j\in  C_i}{\HM_{j}}\ \forall{1\le i\le n}$.
\begin{proof}
By the definition,
\begin{equation}
    \SigM^{\text{syn}}_{a,b} = \frac{1}{n}\sum^{n}_{i=1}(\HM'_{ia}-\muvec^{\text{syn}}_{a})(\HM'_{i,b}-\muvec^{\text{syn}}_{b})
\end{equation}

\eat{
Note that by Lemma~\ref{lem:mu}, $\muvec^{\textnormal{org}} = \muvec^{\textnormal{syn}}$ when when $|C_i|=N/n$ and ${\HM'_{i}}= \frac{1}{|C_i|} \cdot \sum_{v_j\in  C_i}{\HM_{j}}\ \forall{1\le i\le n}$. Hence,
\begin{equation*}
\sum^{n}_{i=1}\frac{|C_i|}{N}\cdot (\HM'_{i,a}-\muvec^{\text{org}}_{a})(\HM'_{i,b}-\muvec^{\text{org}}_{b}) = \frac{1}{n}\sum^{n}_{i=1}(\HM'_{ia}-\muvec^{\text{syn}}_{a})(\HM'_{i,b}-\muvec^{\text{syn}}_{b})
\end{equation*}

Using the Cauchy–Schwarz inequality,
\begin{align*}
& \SigM^{\text{org}}_{a,b}-\SigM^{\text{syn}}_{a,b} = \frac{1}{N}\sum^{n}_{i=1}\sum_{v_j\in C_i}(\HM_{j,a}-\HM'_{i,a})(\HM_{j,b}-\HM'_{i,b}) \\
& \le \frac{1}{N}\sum^{n}_{i=1}\sqrt{\sum_{v_j\in C_i}(\HM_{j,a}-\HM'_{i,a})^2}\cdot \sqrt{\sum_{v_j\in C_i}(\HM_{j,b}-\HM'_{i,b})^2}
\end{align*}
}
% We denote the covariance matrix of synthesized graph as $\SigM^{\text{syn}}$, when $\HM'$ is the clustering center of $\HM$, we have
\end{proof}
\end{lemma}
}

\eat{
\stitle{The upper bound of FID based on $C_{min}=\min |C_i|$ and $n$}

\begin{theorem}\label{lem:FID-upper-mn}
\( \textsf{FID} \le (1+\frac{2N}{nC_{min}})\Tr(\SigM^{\text{org}})\).
\begin{proof}

Lemma \ref{lem:avg_upper_mn} bounds the shift of the means of representations,
\begin{equation}
    \|\muvec^{\textnormal{org}} - \muvec^{\textnormal{syn}}\|^2_2 \leq \frac{N}{nC_{min}}\Tr(\SigM^{\text{org}})
\end{equation}
According to the linear property of trace, we have
\begin{align*}
    &\Tr(\SigM^{\text{org}}+\SigM^{\text{syn}}-2(\SigM^{\text{org}}\SigM^{\text{syn}})^{\frac{1}{2}}) \\& = \Tr(\SigM^{\text{org}})+\Tr(\SigM^{\text{syn}})-2\Tr(\SigM^{\text{org}}\SigM^{\text{syn}})^{\frac{1}{2}})
\end{align*}
Since the calculation of FID can only involve real numbers,we assume \(\Tr((\SigM^{\text{org}}\SigM^{\text{syn}})^\frac{1}{2})\geq 0\), considering Lemma \ref{lem:cov_upper_mn},i.e. \( \Tr(\SigM^{\text{syn}}) \leq \frac{N}{nC_{min}}\Tr(\SigM^{\text{org}})\) then we have, 
\begin{equation}
    \Tr(\SigM^{\text{org}}+\SigM^{\text{syn}}-2(\SigM^{\text{org}}\SigM^{\text{syn}})^{\frac{1}{2}}) \leq (1+\frac{N}{nC_{min}})\Tr(\SigM^{\text{org}})
\end{equation}
Finally, \textsf{FID} can be bounded by,
\begin{align*}
    &\textsf{FID} =\|\muvec^{\text{org}} - \muvec^{\text{syn}}\|^2_2 + \Tr\left(\SigM^{\text{org}} + \SigM^{\text{syn}} - 2(\SigM^{\text{org}} \SigM^{\text{syn}})^{\frac{1}{2}}\right)\\& \leq \frac{N}{nC_{min}}\Tr(\SigM^{\text{org}}) + (1+\frac{N}{nC_{min}})\Tr(\SigM^{\text{org}}) \\ &= (1+\frac{2N}{nC_{min}})\Tr(\SigM^{\text{org}})
\end{align*}

% because both \(\SigM^{\text{org}}
% and \(\SigM^{\text{syn}}\) are semi definite positive, then the \(\SigM^{\text{org}}\) and \(\SigM^{\text{syn}}\) has non-negative eigenvalues,then the square of the eigenvalues of \( (\SigM^{\text{org}}\SigM^{\text{syn}})^{\frac{1}{2}}\) equals the eigenvalues of \(\SigM^{\text{org}}\SigM^{\text{syn}}\), then \( \Tr(\SigM^{\text{org}}\SigM^{\text{syn}})^{\frac{1}{2}}) \geq 0 \)

\eat{\renchi{how to bound \(\Tr(\SigM^{\text{org}}\SigM^{\text{syn}\frac{1}{2}})\)?}}

\end{proof}
\end{theorem}
}

\section{Methodology}
In this section, we present \algo{} for distilling $\G$ as $\G^\prime$. We first elaborate on our clustering method for creating the condensed graph structure $\AM^\prime$, synthetic attribute matrix $\XM^\prime$, and node labels $\YM^\prime$ in Section~\ref{sec:clustering}. In Section~\ref{sec:refinement}, we further pinpoint the defects of $\XM^\prime$ generated by the clustering and delineate a class-aware scheme \caar for refining $\XM^\prime$. To facilitate a better comprehension of \algo{}, we present its overall framework in Fig. \ref{fig:overview}.

% \subsection{Overview of \algo{}}\label{sec:overview}

\begin{figure}[!t]
    \centering
    \includegraphics[width=0.9\columnwidth]{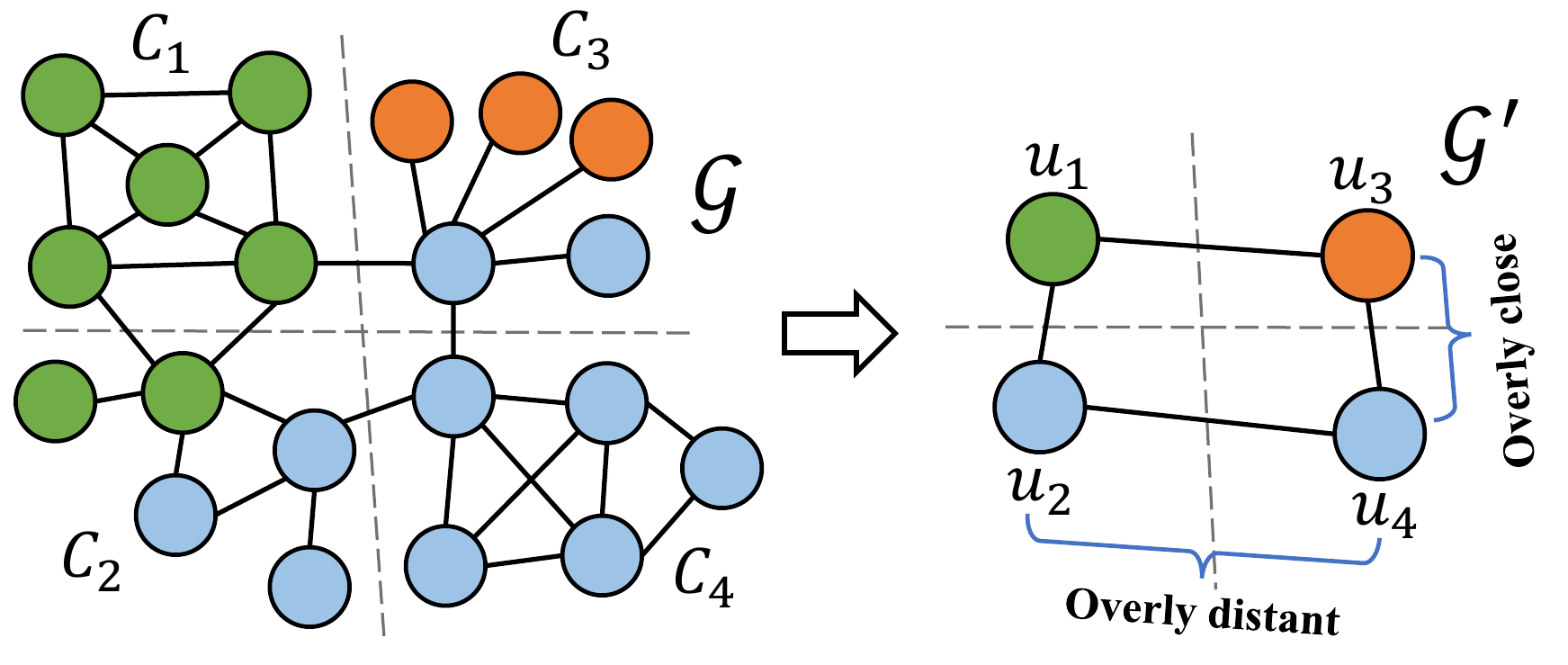}
    \vspace{-3ex}
    \caption{Example of a balanced clustering of $\G$.}
    \label{fig:imbalance-class}
    \vspace{-2ex}
\end{figure}

\begin{figure}[!t]
    \centering
    \includegraphics[width=1.0\columnwidth]{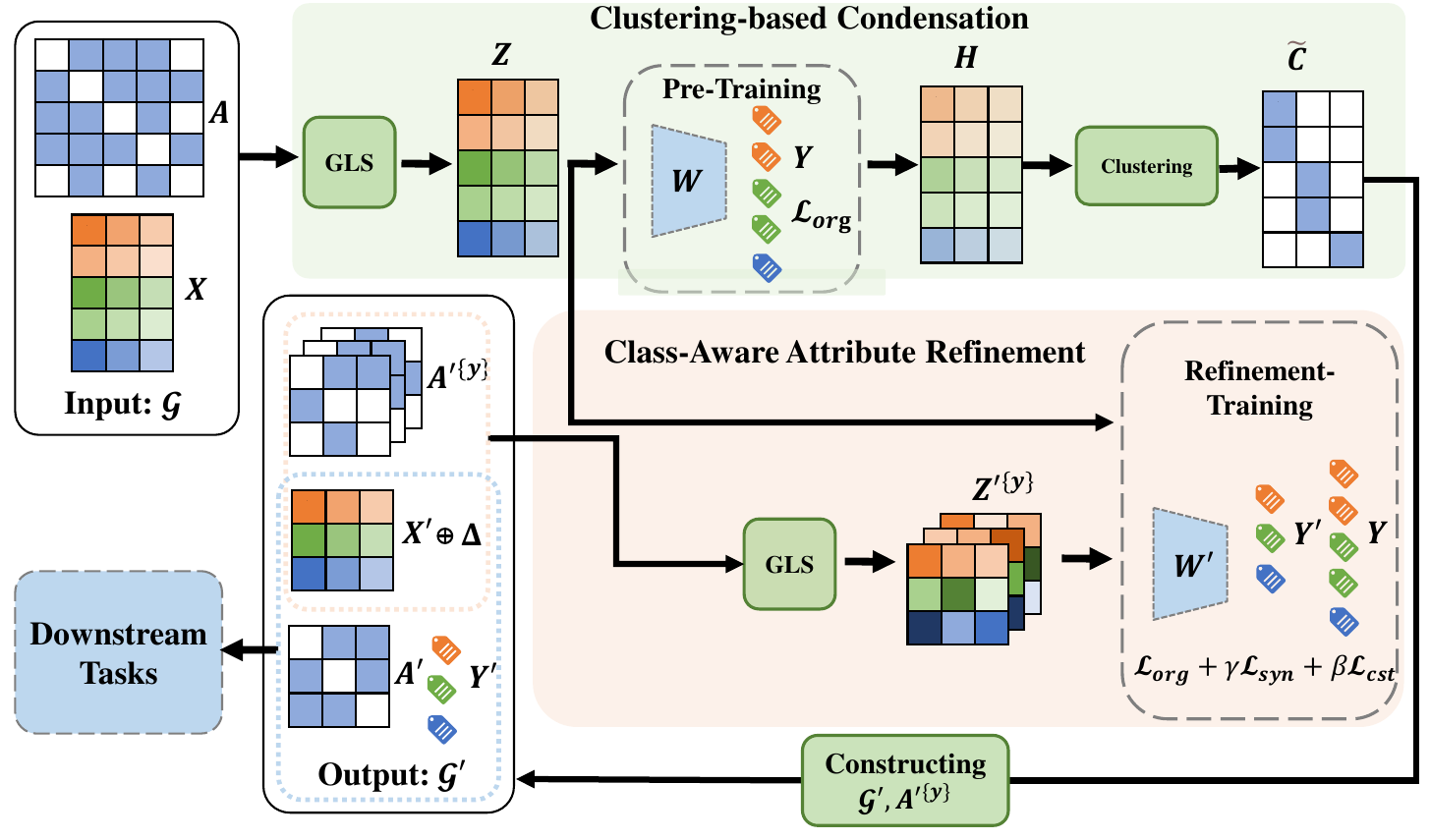}
    \vspace{-3ex}
    \caption{An overview of \algo{}.}
    \label{fig:overview}
    \vspace{-2ex}
\end{figure}

\subsection{Clustering-based Condensation}\label{sec:clustering}
Let $\{C_1,\ldots,C_n\}$ be a partition of $n$ clusters of $\G$ and each synthetic node $v_i$ in $\G^\prime$ be merged from the cluster $C_i$, i.e., 
\begin{equation}\label{eq:Hi}
{\HM^\prime_{i}}=\sum_{v_j\in  C_i}{\frac{\HM_{j}}{|C_i|}}
\end{equation}
According to our analysis in Section~\ref{sec:inspire}, a low FID between $\G$ and $\G^\prime$ can be obtained with clusters $\{C_1,\ldots,C_n\}$ that optimize
\begin{equation}\label{eq:WCSS}
\min_{C_1,C_2,\ldots,C_n} \sum_{i=1}^n\sum_{v_j\in C_i} \|\HM_j -\HM'_i\|^2_2
% + \sum_{i=1}^n{\left( \frac{N}{n}-{|C_i|}\right)^2}
\end{equation}
and meanwhile minimize the cluster size variance \(\sum_{i=1}^n{\left( \frac{N}{n}-{|C_i|}\right)^2}\). However, the latter optimization objective is likely to group nodes with various class labels into $C_i$ (i.e., the synthetic node $u_i$) when the ground-truth classes in $\G$ are imbalanced, which is often the case in practice. As exemplified in Fig.~\ref{fig:imbalance-class}, this will render some synthetic nodes with distinct synthetic class labels in $\G^\prime$ overly distant (e.g., $u_2$ and $u_4$) or close (e.g., $u_3$ and $u_4$), and thus, result in degraded condensation quality.
Instead, an ideal cluster $C_i$ (i.e., synthetic node $u_i$) should comprise original nodes associated with the same ground-truth label, which requires the representations of nodes within $C_i$ or the same ground-truth class to be highly similar. By generating such node representations, each term $\sum_{v_j\in C_i} \|\HM_j -\HM'_i\|^2_2$ in our objective in Eq.\eqref{eq:WCSS} can be implicitly minimized.
% In sum, in addition to minimizing the WCSS in Eq.\eqref{eq:WCSS}, XXXX
% homophily 

% Although FID xxx, the first term is , may result in XXX. Intuitively, with the same class labels, homophily.

\stitle{Clustering via Optimizing the WCSS and Homophily}
% Our analysis in the previous section indicates that the FID between synthetic node representations derived from clustering and the original representations can be minimized. In practice, clustering on low-dimensional representations, such as the logits produced by GNNs, tends to yield clustering results that are more closely aligned with node labels, meaning nodes within the same cluster are more likely to share the same label. However, in the context of graph condensation tasks, the synthetic node attributes must retain the same dimensionality as the original attributes to ensure that GNNs trained on the condensed graph can be directly applied to the original graph. We propose a novel and efficient methodology for performing node representation clustering and attribute synthesis that addresses these requirements concurrently. 
% Inspired by the aforementioned analysis, in addition to minimizing the WCSS in Eq.\eqref{eq:WCSS}, which can be solved using the standard $K$-Means algorithm, 
Given node representations $\HM$ of $\G$, the minimization of the WCSS in Eq.\eqref{eq:WCSS} can be efficiently solved using $K$-Means algorithm~\cite{lloyd1982least}. Next, we elucidate the construction of $\HM$ and how we can enforce nodes with the same class labels to be close in $\HM$.

\begin{lemma}\label{lem:GSL-sol}
The closed-form solution to the GLS optimization problem in Eq.~\eqref{eq:GNN-obj} is $\sum_{t=0}^{\infty}{(1-\alpha)\alpha^t\NAM^t}\XM$.
\end{lemma}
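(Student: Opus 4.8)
The plan is to recognize the GLS objective in Eq.~\eqref{eq:GNN-obj} as an unconstrained convex quadratic in $\ZM$ and simply set its gradient to zero, then expand the resulting matrix inverse as a Neumann (geometric) series. First I would rewrite the objective in compact matrix form. The fitting term is $(1-\alpha)\|\ZM-\XM\|_F^2$, and the graph Laplacian regularization term is exactly the Dirichlet-energy form, so by the identity $\sum_{(v_i,v_j)\in\EDG}\|\ZM_i/\sqrt{d(v_i)}-\ZM_j/\sqrt{d(v_j)}\|^2 = \Tr\bigl(\ZM^\top(\IM-\NAM)\ZM\bigr)$ the whole objective becomes
\begin{equation*}
f(\ZM)=(1-\alpha)\,\Tr\bigl((\ZM-\XM)^\top(\ZM-\XM)\bigr)+\alpha\,\Tr\bigl(\ZM^\top(\IM-\NAM)\ZM\bigr).
\end{equation*}

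Next I would differentiate with respect to $\ZM$. Using $\nabla_{\ZM}\Tr((\ZM-\XM)^\top(\ZM-\XM))=2(\ZM-\XM)$ and $\nabla_{\ZM}\Tr(\ZM^\top(\IM-\NAM)\ZM)=2(\IM-\NAM)\ZM$ (here the symmetry of $\IM-\NAM$ matters), the stationarity condition $\nabla_{\ZM}f=\boldsymbol{0}$ gives
\begin{equation*}
(1-\alpha)(\ZM-\XM)+\alpha(\IM-\NAM)\ZM=\boldsymbol{0}.
\end{equation*}
Collecting the $\ZM$ terms yields $\bigl((1-\alpha)\IM+\alpha(\IM-\NAM)\bigr)\ZM=(1-\alpha)\XM$, i.e. $(\IM-\alpha\NAM)\ZM=(1-\alpha)\XM$, so that $\ZM^\star=(1-\alpha)(\IM-\alpha\NAM)^{-1}\XM$.

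The remaining step is to rewrite $(\IM-\alpha\NAM)^{-1}$ as the series $\sum_{t=0}^\infty\alpha^t\NAM^t$, giving $\ZM^\star=\sum_{t=0}^\infty(1-\alpha)\alpha^t\NAM^t\XM$, which is the claimed closed form. To justify the series expansion and the fact that the stationary point is the unique global minimizer, I would note that $\NAM=\DM^{-1/2}\AM\DM^{-1/2}$ is symmetric with spectral radius at most $1$, so for $\alpha\in[0,1)$ the matrix $\alpha\NAM$ has spectral radius strictly less than $1$ and the Neumann series converges to $(\IM-\alpha\NAM)^{-1}$; moreover, since $\IM-\NAM\succeq 0$ the Hessian $(1-\alpha)\IM+\alpha(\IM-\NAM)$ is positive definite, so $f$ is strictly convex and the critical point is the unique minimizer. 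The only mild obstacle is the boundary case $\alpha=1$, where the fitting term vanishes and uniqueness can fail; I would handle the stated regime $\alpha\in[0,1)$ cleanly and treat $\alpha=1$ (or $\alpha=0$) as a trivial limiting case, but these are edge conditions rather than genuine difficulties, so the argument is essentially routine calculus plus the spectral bound $\rho(\NAM)\le 1$.
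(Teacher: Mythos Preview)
Your proposal is correct and follows essentially the same route as the paper: rewrite the Laplacian regularizer as $\Tr(\ZM^\top(\IM-\NAM)\ZM)$, set the gradient to zero to obtain $(\IM-\alpha\NAM)\ZM=(1-\alpha)\XM$, and expand the inverse via the Neumann series. If anything, you are more careful than the paper in justifying the series convergence through the spectral bound $\rho(\NAM)\le 1$ and in noting strict convexity for uniqueness.
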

As pinpointed in Section~\ref{eq:notation}, the representation learning in most GNNs can be unified into a framework optimizing the GLS problem in Eq.~\eqref{eq:GNN-obj} or its variants. 
% Specifically, 
As per the result in Lemma~\ref{lem:GSL-sol}, we then construct $\HM$ by Eq.~\eqref{eq:h=zw} as $\ZM$ is the closed-form solution to Eq.~\eqref{eq:GNN-obj} when $T\rightarrow\infty$.
\begin{equation}\label{eq:h=zw}
\HM = \ZM\WM\ \text{where $\ZM=\sum_{t=0}^{T}{(1-\alpha)\alpha^t\NAM^t}\XM$}.
\end{equation}
Particularly, the weights $\WM\in \mathbb{R}^{d\times K}$ are learned by minimizing the following cross-entropy loss:
\begin{equation}\label{eq:org-loss}
\mathcal{L}_{org} =  -\frac{1}{N}\sum^{N}_{i=1} \sum^{K}_{y=1} \YM_{i,y}\cdot\textsf{log}(\PM_{i,y})
\end{equation}
with the label predictions $\PM\in \mathbb{R}^{N\times K}$ transformed from $\HM$ via
\begin{equation}\label{eq:logit_softmax}
\PM = \textsf{Softmax}(\HM),
% p(v_i,y) = \textsf{Softmax}(\HM)_{i,y},
% \frac{ \textsf{exp}(\HM_{i,y} ) }{ \sum_{k=1}^{K}\textsf{exp}(\HM_{i,k})}, 
% y\in 1,2,\ldots,K
\end{equation}
and ground-truth labels for the training set in a supervised fashion

% we propose to construct representations $\HM$ that can predicted class labels maximizing the {\em homophily ratio} so as to .

% We adopt the WCSS as the primary objective function of clustering. The standard approach to optimize the is $K$-Means.

\begin{lemma}\label{lem:homophily-DE}
\(\Omega(\G)=1 - \frac{1}{2M}\cdot \sum_{(v_i,v_j)\in \EDG}{\left\|\frac{(\DM^{1/2}\YM)_i}{\sqrt{d_i}}-\frac{(\DM^{1/2}\YM)_j}{\sqrt{d_j}}\right\|_2^2}\).
% \Tr(\YM^\top\DM^{1/2}(\IM-\NAM)\DM^{1/2}\YM)\)
% \(\underset{\YM}{\max}{\ \Omega(\G)} \equiv \underset{\YM}{\min}{\underset{(v_i,v_j)\in \EDG}{\sum} {\left\|\frac{(\DM^{1/2}\YM)_i}{\sqrt{d(v_i)}}-\frac{(\DM^{1/2}\YM)_j}{\sqrt{d(v_j)}}\right\|_2^2}}.
% \)
% \Tr(\YM^\top\DM^{1/2}(\IM-\NAM)\DM^{1/2}\YM).
\end{lemma}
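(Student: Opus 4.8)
The plan is to recognize that the degree weighting inside the norm cancels, turning this graph-Laplacian–style sum into a plain count of heterophilic edges. First I would note that since $\DM^{1/2}$ is diagonal with $(\DM^{1/2})_{i,i}=\sqrt{d_i}$, the $i$-th row of $\DM^{1/2}\YM$ equals $\sqrt{d_i}\cdot \YM_i$, so that $\frac{(\DM^{1/2}\YM)_i}{\sqrt{d_i}}=\YM_i$. Hence each summand collapses to $\|\YM_i-\YM_j\|_2^2$, an expression in the raw one-hot label rows only, with all dependence on the degrees eliminated.

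Next I would evaluate $\|\YM_i-\YM_j\|_2^2$ by cases using the fact that each $\YM_i$ is a one-hot indicator of the class $y_i$. When $y_i=y_j$ the two rows coincide and the squared norm is $0$; when $y_i\neq y_j$ the difference $\YM_i-\YM_j$ carries a single $+1$ in coordinate $y_i$ and a single $-1$ in coordinate $y_j$, giving squared norm $2$. Thus $\|\YM_i-\YM_j\|_2^2 = 2\cdot\mathbf{1}[y_i\neq y_j]$, and summing over edges yields $\sum_{(v_i,v_j)\in\EDG}\|\YM_i-\YM_j\|_2^2 = 2\,|\{(v_i,v_j)\in\EDG : y_i\neq y_j\}|$, i.e., twice the number of heterophilic edges.

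Finally I would substitute this count. Writing the heterophilic-edge count as $M - |\{(v_i,v_j)\in\EDG : y_i=y_j\}|$ and recalling the definition $\Omega(\G) = \frac{|\{(v_i,v_j)\in\EDG : y_i=y_j\}|}{M}$, the quantity $\frac{1}{2M}\sum_{(v_i,v_j)\in\EDG}\|\YM_i-\YM_j\|_2^2$ equals $\frac{M-|\{(v_i,v_j)\in\EDG : y_i=y_j\}|}{M}=1-\Omega(\G)$, so subtracting it from $1$ recovers $\Omega(\G)$ exactly. Since every step is a direct algebraic identity, there is no genuine obstacle here; the only point that warrants care is the cancellation of the $\sqrt{d_i}$ factors — one must confirm that the particular $\DM^{1/2}$ weighting appearing in the norm (as opposed to a symmetric $\NAM$-type normalization) is precisely what makes the degree dependence vanish and leaves the clean indicator form $2\cdot\mathbf{1}[y_i\neq y_j]$.
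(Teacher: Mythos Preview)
Your proposal is correct and follows essentially the same approach as the paper: both arguments reduce the summand to $\|\YM_i-\YM_j\|_2^2$ by cancelling the $\sqrt{d_i}$ factors, then identify this with twice the heterophilic-edge indicator. The only cosmetic difference is direction---the paper starts from the definition of $\Omega(\G)$ and algebraically expands $\sum_k \YM_{i,k}\YM_{j,k}$ into $1-\frac{1}{2}\|\YM_i-\YM_j\|_2^2$ before reinserting the degree weights, whereas you begin from the Laplacian-style sum and work backward via case analysis.
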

If we regard $\HM$ as the label predictions, our Lemma~\ref{lem:homophily-DE} implies that the second term \(\sum_{(v_i,v_j)\in \EDG}{\left\|\frac{\HM_i}{\sqrt{d(v_i)}}-\frac{\HM_j}{\sqrt{d(v_j)}}\right\|_F^2}\) in GLS (Eq.~\eqref{eq:GNN-obj}) is essentially equivalent to maximizing the homophily ratio of the predicted labels over the input graph $\G$, whose true homophily ratio is usually assumed to be high in practice, i.e., adjacent nodes are very likely to share the same class labels.
Based on this homophily assumption, our way of computing $\HM$ in Eq.~\eqref{eq:h=zw} tends to render the representations of nodes with the same class labels close to each other and easier to be grouped into the same clusters by the $K$-Means, thereby minimizing the WCSS in Eq.~\eqref{eq:WCSS}.

\stitle{Constructing $\XM^{\prime}$, $\AM^{\prime}$, and $\YM^\prime$}
Denote by $\CM\in \{0,1\}^{N\times n}$ the node-cluster membership matrix, where $\CM_{j,i}=1$ if node $v_j\in C_i$ and $0$ otherwise. Accordingly, we can form a sketching matrix $\NCM$ by normalizing $\CM$:
% \begin{equation}
\(\NCM = \CM \textsf{diag}(\mathbf{1}\CM)^{-1},\)
% \end{equation}
where $\NCM_{j,i}=\frac{1}{|C_i|}$ if node $v_j\in C_i$ and $0$ otherwise. By Eq.~\eqref{eq:Hi}, \(\HM^\prime = \NCM^\top \HM\).

Intuitively, \(\HM^\prime\) should be built from the condensed adjacency matrix $\AM^\prime$ and attribute matrix $\XM^\prime$ using Eq.~\eqref{eq:h=zw}. Along this line, we can formulate the following objective function:
\begin{equation}\label{eq:XA-obj}
\min_{\AM^\prime,\XM^\prime}\left\|\NCM^\top \HM- \sum_{t=0}^{T^\prime}{(1-\alpha^\prime)\alpha^{\prime t}{\AM}^{\prime t}}\XM^\prime\WM^\prime\right\|^2_F,
\end{equation}
where hyperparameters $T^\prime$, $\alpha^\prime$, and weights $\WM^\prime$ can be configured or learned accordingly in the GNN model trained on $\G^\prime$. 
Due to the scale and complexity of the input graph structures, synthesizing the \({\AM}^{\prime}\) such that $\sum_{t=0}^{T^\prime}{(1-\alpha^\prime)\alpha^{\prime t}{\AM}^{\prime t}}$ is close to $\NCM\sum_{t=0}^{T}{(1-\alpha)\alpha^t\NAM^t}$ is highly challenging and usually engenders substantial structural information loss.
% In lieu of synthesizing a high-quality \({\AM}^{\prime}\) that can accurately preserve the connectivity patterns in $\NAM$ of $\G$ so as to optimize Eq.~\eqref{eq:XA-obj}.
As a workaround, we turn to incorporate most features in $\HM$ into $\XM^\prime$ as follows: 
% which can be simplified as $\min_{\AM^\prime,\XM^\prime}{\|\NCM\NAM^t\XM-\NAM^{\prime t}\XM^\prime\|_F^2}$ for any $0\le t\le T$. In particular, when $t=0$, we can derive that
% We then can get the synthetic attribute $\XM'$ by merged the embedding by the clustering matrix, 
\begin{equation}\label{eq:X'=C^TZ}
\XM^{\prime} = \NCM^{\top} \ZM
% \sum_{t=0}^{T}{(1-\alpha)\alpha^t\NAM^t}\XM,
% \XM.
\end{equation}
and compress $\NAM$ as $\AM^\prime$ using $\AM^\prime = \NCM^{\top}\NAM\NCM$.
% \begin{equation}\label{eq:topo_syn_1}
% \AM^\prime = \NCM^{\top}\NAM\NCM.
% \end{equation}
Additionally, the synthetic label for each node $u_i\in \G^\prime$ is constructed by
\begin{equation*}
\YM^\prime_{i,j} =
\begin{cases}
1,\quad \text{if $j=\argmax{1\le \ell\in K}{\HM^\prime_{i,\ell}}$;}\\
0,\quad \text{otherwise}.
\end{cases}
\end{equation*}

% Particularly, the attribute vector of each node $u_i\in \G^\prime$ thus can be expressed as
% \begin{equation*}
% \XM^{\prime}_i = \sum_{v_j\in V}{\pi_{C_i}(v_j) \cdot \XM_j},
% \end{equation*}
% where $\pi_{C_i}(v_j) = \sum_{v_\ell \in C_i}{\frac{d(v_\ell)\cdot\pi(v_j,v_\ell)}{d(v_j)\cdot|C_i|}}$ is the averaged PPR of node $v_j$ in $\G$ w.r.t. the nodes inside cluster $C_i$. Hence, $\XM^{\prime}_i$ can be interpreted as .

% The corresponding synthetic label $\YM'$ is the index of the largest value of each row of the clustered logits $\NCM^{\top}\HM$. 

% The clustering matrix is also informative for topology synthesis. We get the synthetic adjacency matrix $\AM'$ by using the normalized clustering matrix $\NCM$ to  compressing the original graph topology: 
% \begin{equation}
%     \AM' = \NCM^{\top}\NAM\NCM 
%     \label{eq:topo_syn_1}
% \end{equation}
% Intuitively,  Eq.(\ref{eq:topo_syn_1}) implies using the strength of connections intra and inter clusters to represent edges between synthetic nodes. 

\eat{
\stitle{Analysis or Connection to XXX}

For a supernode/cluster $C_i$, its synthetic attribute vector should be
\begin{equation}
    \XM^{\prime}_i = \sum_{v_j\in V}{\pi_{C_i}(v_j) \cdot \XM_j}
    \label{eq:x'=sum pi xm}
\end{equation}
where $\pi_{C_i}(v_j)$ stands for the total probability that node $v_j$ belongs to supernode/cluster $C_i$. We use personalized PageRank to quantify the probability. $\pi(v_\ell,v_j)=\left(\sum_{t=0}^{\infty}{(1-\alpha)\alpha^t\NAM^t}_{v_\ell,v_j}\right)$ is the personalized PageRank of $v_j$ w.r.t. $v_\ell$, which measures the probability of a random walk from $v_\ell$ stopping at $v_j$. Let $C_i$ contain an initial core set of nodes belonging to supernode/cluster $C_i$. The total probabilities $\pi_{C_i}(v_j)$ of node $v_j$ joining $C_i$ can be defined by
\begin{equation}
\pi_{C_i}(v_j) = \sum_{v_\ell \in C_i}{\frac{\pi(v_j,v_\ell)}{|C_i|}}.
\label{eq:pi = sum pi}
\end{equation}
}

\subsection{Class-Aware Attribute Refinement (\caar)}\label{sec:refinement}
% Theorem~\ref{lem:FID-upper-mn} shows that the FID between the condensed and original node representation increases when decreasing the cluster number and the minimum size of cluster. In other words, when the degree of graph condensation increases and the degree of graph clustering imbalance increases, the quality of the synthesized node attributes will decline. In order to further validate our theory, we conducted direct observations on the clustering results and node attributes.

\begin{table}[!t]
\centering
\caption{The homophilic ratio $\Omega(\G)$ and ICAD of $\XM$ , $\XM^\prime$, and $\XM' + \beta\cdot \boldsymbol{\Delta}$ on real-world graph datasets.}\label{tbl:heterophilic}
\vspace{-2ex}
\renewcommand{\arraystretch}{0.9}
\begin{small}
    \begin{tabular}{cccccc}
        \toprule
        {\bf Dataset} & {\em Cora} & {\em Citeseer} & {\em arXiv} & {\em Reddit} \\
        \midrule
        % $CoV$ & 0.55  & 0.38 & 0.79 & 0.52 & 0.68 \\
        $\Omega(\G)$ &  $0.81$ & $0.74$ & $0.66$ & $0.78$ \\
        \midrule
        $\phi(\XM, \YM)$ & $0.95$ & $0.96$ & $0.99$ & $0.99$ \\
        $\phi(\XM^\prime, \YM^\prime)$ & $0.56$  & $0.51$ & $0.97$ & 0.82 \\
        $\phi(\XM' + \beta\cdot \boldsymbol{\Delta}, \YM^\prime)$ & $0.77$ & $0.56$ & $1.00$ & $0.96$ \\
    % \begin{tabular}{cccccc}
    %     \toprule
    %     Dataset & Cora & Citeseer & arXiv & Flickr & Reddit \\
    %     \midrule
    %     % $CoV$ & 0.55  & 0.38 & 0.79 & 0.52 & 0.68 \\
    %     $\Omega(\G)$ &  0.81 & 0.74 & 0.66 & 0.33 & 0.78 \\
    %     \midrule
    %     $\phi(\XM, \YM)$ & 0.95 & 0.96 & 0.99 & 1.00 & 0.99 \\
    %     $\phi(\XM^\prime, \YM^\prime)$ & 0.56  & 0.51 & 0.97 & 1.05 & 0.82 \\
    %     $\phi(\widehat{\XM}^\prime, \YM^\prime)$ & 0.77 & 0.56 & 1.00 & 1.08 & 0.96 \\
          % \midrule
        % $\psi_{\XM}$ & 0.95 & 0.96 & 0.99 & 1.00 & 0.99 \\
        % $\psi_{\XM'}$ w/o MVR & 0.56  & 0.51 & 0.97 & 1.05 & 0.82 \\
        % $\psi_{\XM'}$ & 0.77 & 0.56 & 1.00 & 1.08 & 0.96 \\
        %  \midrule
        % $\Psi_{\XM}$ & 1.04 & 1.01 & 0.99 & 1.01 & 1.12 \\
        % $\Psi_{\XM'}$ w/o MVR & 1.52  & 1.52 & 3.41 & 1.26 &7.05 \\
        % $\Psi_{\XM'}$ & 2.38 & 1.90 & 4.76 & 1.48 & 10.26 \\
        \bottomrule
    \end{tabular}
\end{small}
\end{table}

\stitle{Heterophilic Over-smoothing Issue}
Although the clustering approach in the preceding section can produce high-quality condensed graphs, it strongly relies on the homophily assumption. However, according to the homophily ratios of real graphs in Table~\ref{tbl:heterophilic}, even in {\em homophilic graphs}, we can observe a small moiety of {\em heterophilic links}, i.e., adjacent nodes with distinct ground-truth labels.
% Eq.~\eqref{eq:h=zw}
Recall that \(\ZM\) in $\XM^\prime$ computed via Eq.~\eqref{eq:X'=C^TZ} is to minimize \(\sum_{(v_i,v_j)\in \EDG}{\left\|\frac{\ZM_i}{\sqrt{d(v_i)}}-\frac{\ZM_j}{\sqrt{d(v_j)}}\right\|_F^2}\).
As an aftermath, the feature vectors $\ZM$ of {\em heterophilic nodes} (i.e., nodes in different classes) in $\G$ will be made closer (i.e., {\em over-smoothed}) due to the existence of heterophilic links, and after the clustering operation $\NCM^{\top}\ZM$, the attribute distances of synthetic heterophilic node pairs in $\XM^\prime$ are further reduced. A figurative example to illustrate such a {\em heterophilic over-smoothing} issue is presented in Fig.~\ref{fig:HL-example}, wherein the attribute vectors in $\XM^\prime$ of synthetic nodes $u_1$ (merged from $C_i=\{v_1,\ldots,v_5\}$) and $u_2$ (merged from $C_2=\{v_6,v_7,v_8\}$) are finally similar to each other, making them hard to distinguish by GNNs.
In sum, constructing $\XM^\prime$ by Eq.~\eqref{eq:X'=C^TZ} yields over-smoothed attribute vectors and compromises condensation quality.
% synthetic attribute vectors.
% For example, as illustrated in Fig.~\ref{fig:HL-example}, the feature vector $\ZM_6$ of node $v_6$ in the original graph $\G$ will be similar to those of $v_1$-$v_5$ using the message passing in Eq.~\eqref{eq:h=zw} in the presence of heterophilic links $(v_3,v_6),(v_4,v_6)$, and $(v_5,v_6)$. This further results in the  node $u_2$
% $\XM^\prime$ the nuances of can be .
% heterophilic links exist even {\em homophilic graphs} in the real world. 

To validate this issue, we conduct an empirical statistical analysis of $\XM^\prime$ on real datasets. Given attribute vectors $\XM$, we define the {\em inter-class attribute distance} (ICAD) of $\G$ as follows:
\begin{small}
\begin{equation*}
\phi(\XM, \YM) = \frac{1}{2\sum_{x\neq y}{\|\YM_{\cdot,x}\|_1\cdot\|\YM_{\cdot,y}\|_1}}\sum_{v_i,v_j\in \G,\& y_i\neq y_j}{\left\|\frac{\XM_i}{\|\XM_i\|_2}-\frac{\XM_j}{\|\XM_j\|_2}\right\|_2^2},
\end{equation*}
\end{small}
where $\|\YM_{\cdot,x}\|_1$ (resp. $\|\YM_{\cdot,y}\|_1$) signifies the size of the $x$-th (resp. $y$-th) class. $\phi(\XM, \YM)$ quantifies the averaged distance of attribute vectors $\XM$ of heterophilic node pairs. In the same vein, we can define the ICAD of the condensed graph $\G^\prime$ with the synthetic attribute vectors $\XM^\prime$ and label matrix $\YM^\prime$.
As reported in Table~\ref{tbl:heterophilic}, we can observe a decrease in the ICAD on each dataset, i.e., $\phi(\XM^\prime, \YM^\prime)<\phi(\XM, \YM)$, which corroborates the foregoing issue of heterophilic over-smoothing in $\XM^\prime$.

\eat{
\stitle{Empirical Study of $\XM^\prime$} We measure the degree of imbalance in clustering by calculating the coefficient of variation ($CoV$) of the sizes of different clusters as Eq. \ref{eq:cov}. We found that for larger datasets with more classes, such as arXiv and Reddit, the $CoV$ is larger, indicating a higher degree of cluster imbalance. 
\begin{equation}
    CoV = \frac{\sqrt{\frac{1}{n-1} \sum_{i=1}^{n} (|C_i| -  \frac{1}{n} \sum_{i=1}^{n} |C_i| )^2}}{\frac{1}{n} \sum_{i=1}^{n} |C_i|}
    \label{eq:cov}
\end{equation}

We then measure the average cosine distance of inter-class node attributes $\psi$ and the ratio of average inter-class distance and intra class distance $\Psi$ at different stages. $\psi_{\XM}$ represents inter-class distance between original node attributes, and $\psi_{\XM'}$ w/o MVR means that between condensed node attributes after clustering without further refinement, and $\psi_{\XM'}$ is for the final synthetic node attributes. The same applies to the subscript of $\Psi$.
\begin{equation}
   \psi_{\XM} = \frac{1}{\sum_{p=1}^{K} \sum_{q=p+1}^{K} |\Y_p| \cdot |\Y_q|} \sum_{p=1}^{K} \sum_{q=p+1}^{K} \sum_{ \substack{\YM_i \in \Y_p,\\\YM_j \in \Y_q}} 1 - \frac{\XM_i^T\XM_j}{\|\XM_i\| \|\XM_j\|}
\end{equation}
\begin{equation}
   \xi_{\XM} = \frac{1}{\sum_{p=1}^{K} |\Y_p|} \sum_{p=1}^{K} \sum_{ \substack{\YM_i \in \Y_p,\\\YM_j \in \Y_p}}  1 - \frac{\XM_i^T\XM_j}{\|\XM_i\| \|\XM_j\|}
\end{equation}
\begin{equation}
    \Psi_{\XM} = \frac{\psi_{\XM}}{\xi_{\XM}}
\end{equation}

We found that $\psi_{\XM'}$ w/o MVR is generally smaller than $\psi_{\XM}$ and $\Psi_{\XM'}$ w/o MVR is larger than $\Psi_{\XM}$. This means that after clustering, both the inter-class and intra-class distances of node attributes decrease, but the decrease in inter-class distance is smaller. However, ideally, we would like the inter-class distance to remain unchanged or increase to better facilitate classification by models.

% The clustering process can bring some problems. Firstly, during the aggregation of node representations, local characteristics may be weakened and feature distributions may change, making it difficult for the model to capture subtle yet significant patterns. In addition, the simplification of structural information can result in the loss of crucial local connections, disrupting important relationships within the graph, and the redistribution of edge weights may lead to misestimation of edge importance. These issues collectively contribute to information loss and potential performance degradation in graph neural networks.

Based on the above experimental observations, it is necessary to adjust the synthesized node attributes. A simple idea is to further refine the graph obtained from condensation using gradient matching or distribution matching.
However, gradient matching involves computing and matching gradients between the original graph and a condensed graph, which can be computationally expensive and memory-intensive, especially for large graphs. To further improve efficiency while maintaining effectiveness, we design a novel attribute refinement method. We focus on attribute refinement, which is because the synthesized attributes already implicitly encode the topological information, and jointly refining both attributes and topology introduces optimization difficulties and increases the computational burden, which has been mentioned in previous studies, such as GCond and SFGC.
}

% \stitle{Refining $\XM^\prime$}

\eat{
In our refinement method, we take both the original graph and condensed graphs as inputs of GNN, and learn a refinement vector $\XM''$ to modify the attributes in the downstream tasks. During and after training, we can update the condensed attribute by: 
The learnable biases to adjust the node features of the condensed graph is primarily motivated by multi-scale information fusion. It combines local details from the original graph with global patterns from the condensed graph, utilizing learnable biases to dynamically adjust the importance of features at different scales. It enhances or suppresses specific scale-dependent features to improve the model's understanding of complex relationships while maintaining structural consistency across different scales and compensating for information loss during the graph condensation process.
}

\begin{figure}[!t]
    \centering
    \includegraphics[width=\columnwidth]{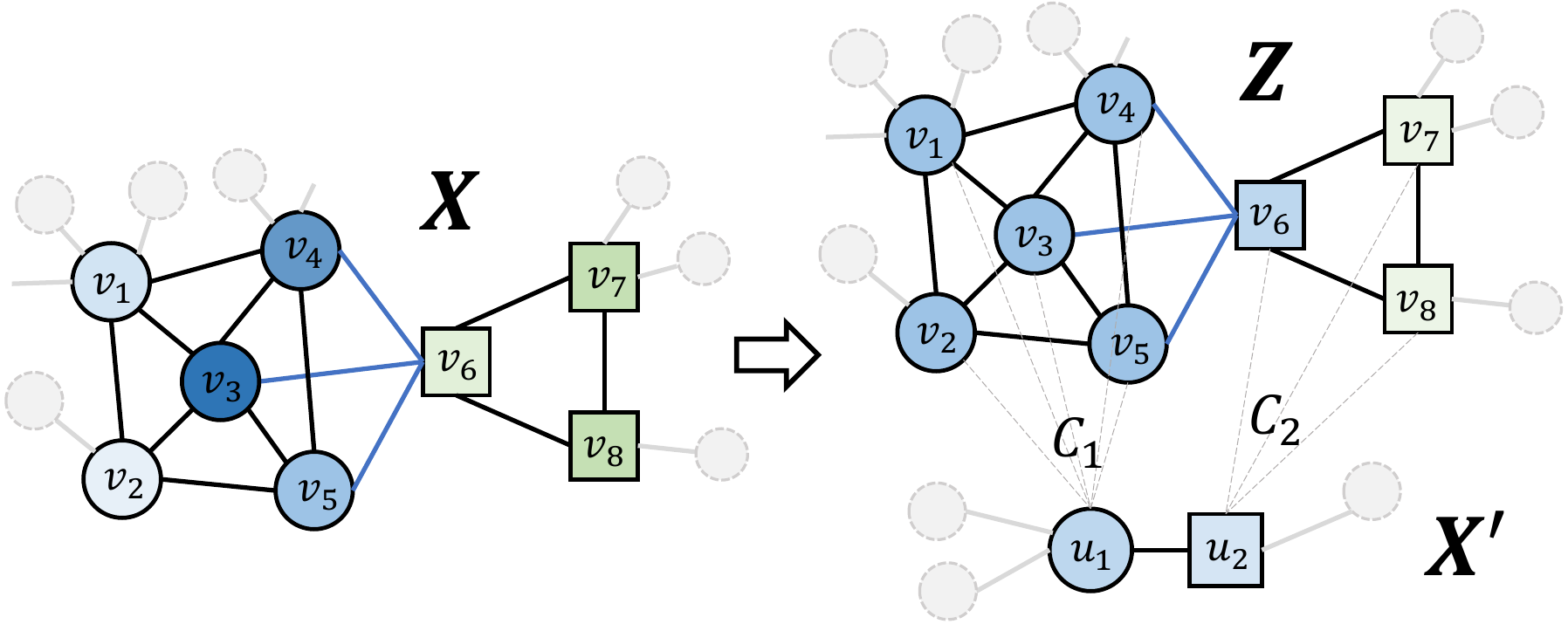}
    \vspace{-5ex}
    \caption{Example of the heterophilic over-smoothing issue}
    \label{fig:HL-example}
    \vspace{-2ex}
\end{figure}

\stitle{Class-Specific Graph Sampling and Representations}
To cope with the problematic $\XM^\prime$, we propose to learn a small attribute augmentation $\boldsymbol{\Delta}$ to inject class-relevant features into it to build a new attribute matrix for nodes and increase the ICAD, i.e.,
\begin{equation}\label{eq:X'=X'+X''}
\XM' + \beta\cdot \boldsymbol{\Delta},
\end{equation}
where $\beta$ stands for the weight of the augmentation. As displayed in Table~\ref{tbl:heterophilic}, the ICAD of the synthetic attributes of nodes with this refinement (i.e., Eq.~\eqref{eq:X'=X'+X''}) can be considerably increased, mitigating the heterophilic over-smoothing issue. 

Specifically, we sample $K$ graphs $\{\AM^{\circ(y)}\}_{y=1}^{K}$ from the original graph $\G$ for the respective classes in $\Y$, each of which aims to capture the key structural patterns pertinent to its respective class in $\G$. To achieve this goal, for each class $y\in \Y$, \caar first reweight each edge $(v_i,v_j)$ in $\G$ by the following weighting scheme:
\begin{equation}\label{eq:new-weight}
w(v_i,v_j) = \PM_{i,y}\cdot \PM_{j,y}\cdot r(v_i,v_j),
% p(v_i,y)\cdot p(v_j,y)\cdot r(v_i,v_j).
\end{equation}
% Therein, $\PM\in \mathbb{R}^{N\times K}$ is computed by
% \begin{equation}\label{eq:logit_softmax}
% \PM = \textsf{Softmax}(\HM),
% % p(v_i,y) = \textsf{Softmax}(\HM)_{i,y},
% % \frac{ \textsf{exp}(\HM_{i,y} ) }{ \sum_{k=1}^{K}\textsf{exp}(\HM_{i,k})}, 
% % y\in 1,2,\ldots,K
% \end{equation}
where $\PM_{i,y}$ (defined in Eq.~\eqref{eq:logit_softmax}) is the predicted probability that the node $v_i$ belongs to the $y$-th class.
As such, $\PM_{i,y}\cdot \PM_{j,y}$ in Eq.~\eqref{eq:new-weight} leads to a low weight $w(v_i,v_j)$ for adjacent nodes $(v_i,v_j)$ that are unlikely to share the same class label, thereby eliminating the impact of heterophilic links.

To account for the importance of each edge $(v_i,v_j)$ in the entire graph topology, we further introduce the well-known {\em effective resistance}~\cite{lovasz1993random} $r(v_i,v_j)$ for $(v_i,v_j)$ in Eq.~\eqref{eq:new-weight}. Instead of computing the exact $r(v_i,v_j)$ over $\G$, \caar approximates it on the graph $\widetilde{\G}$ constructed from $\G$ with each edge $(v_i,v_j)$ reweighted by $\textsf{cos}(\HM_i,\HM_j)$. Note that this reweighting is also to lessen the negative effects of heterophilic links, and the approximation is due to the fact that the exact calculation of $r(v_i,v_j)$ for all edges is prohibitively expensive, particularly for large graphs.
More concretely, leveraging the Lemma 4.2 in ~\cite{lai2024efficient}, we can estimate $r(v_i,v_j)$ efficiently by the following equation: 
\begin{small}
\begin{equation*}
r(v_i,v_j) \approx \frac{1}{2}\left(\frac{1}{\widetilde{d}(v_i)}+\frac{1}{\widetilde{d}(v_j)}\right)\ \text{where $\widetilde{d}(v_i) = \sum_{v_\ell\in \N(v_i)}{\textsf{cos}(\HM_i, \HM_\ell)}$}.
\end{equation*}
\end{small}

\eat{
\begin{equation}\label{eq:d-weight}
d_w(v_i)=\sum_{v_j\in \N(v_i)}{w(e_{i,j})}
\end{equation}
\begin{lemma}[~\cite{lai2024efficient}]\label{lem:ER}
Let $\G_w=(\V,\EDG_w)$ be a weighted graph whose node degrees are defined as in Eq. \eqref{eq:d-weight}. The ER $r_w(e_{i,j})$ of each edge $e_{i,j}\in \EDG_w$ is bounded by
\begin{small}
\begin{equation*}
\frac{1}{2} \left( \frac{1}{d_w(v_i)}+\frac{1}{d_w(v_j)} \right) \le r_w(e_{i,j}) \le \frac{1}{1-\lambda_2} \left( \frac{1}{d_w(v_i)}+\frac{1}{d_w(v_j)} \right),
\end{equation*}
\end{small}
where $\lambda_2\le 1$ stands for the second largest eigenvalue of the normalized adjacency matrix of $\G_w$.
% , and is usually larger than $0.9$ in practice \textnormal{\cite{mohaisen2010measuring}}.
% \footnote{All proofs appear in Appendix \ref{sec:proof}.}
\begin{proof}
We defer the proof to ~\cite{lai2024efficient}.
\end{proof}
\end{lemma}
}

After obtaining the weight as in Eq.~\eqref{eq:new-weight} for each edge in $\G$, we sample the top-$M\cdot\rho$ ($\rho\in (0,1)$) ones from $\EDG$ to construct $\AM^{\circ(y)}$ instead of retaining all of them. This is to distill the crucial structural features (or homophilic structures) in $\G$ that relate to the class $y$, while purging noisy and disruptive connections (e.g., heterophilic links).

Based on the $K$ sampled class-specific graphs $\{\AM^{\circ(y)}\}_{y=1}^{K}$, we then generate their corresponding $K$ condensed graphs through
% Then we get the corresponding condensed graphs adjacency matrices $\{ \AM'^{(k)}\}_{k=1}^{C}$ and obtain the multi-view condensed graph embedding $\{ \ZM'^{(k)}\}_{k=1}^{C}$ by
\begin{equation}\label{eq:mvg}
\AM'^{(y)} = \NCM^{\top}\AM^{\circ(y)}\NCM,
\end{equation}
which leads to $K$ classic-specific representation matrices for the $n$ synthetic nodes in $\G^\prime$:
\begin{equation}\label{eq:mvh}
\HM^{\prime (y)} = \sum_{t=0}^{T^\prime}{(1-\alpha)\alpha^t( \AM'^{(y)})^t}(\XM' + \beta\cdot \boldsymbol{\Delta})\WM^\prime.
\end{equation}
The task then is to learn the features $\boldsymbol{\Delta}$ that are commonly missing in the $K$ classic-specific representations $\{\HM^{\prime (y)}\}_{y=1}^{K}$ as augmentation attributes.

\eat{
Specifically, 
To help the refinement capture more effective information implied by the graph topology, we conduct a multi-view sparsification to the original graph. It's is mainly based on Topology Attribute Aware Sparsification(TAGS)\cite{lai2024efficient}, which can effectively preserve the spectral information of the graph. 
In TAGS, the cosine similarity between the embeddings of adjacent nodes are calculated to reweight $\NAM$, 
\begin{equation}
   \HAM_{i,j} = \NAM_{i,j}\cdot \textsf{cos}(\HM_i, \HM_j) 
    \label{eq:ebdsim}
\end{equation}

The degree of node $v_i$ on the reweighted graph is defined as $\hat{d}_i= \sum_{j\in \mathcal{N}_i} \HAM_{i,j} $ The Effective Resistance(ER)\cite{lovasz1993random} of the weighted graph can be, can be defined as: 
\begin{equation}
    \hat{r}_{i,j} = (\frac{1}{\sqrt{\hat{d}_i}}\mathbf{1}_i - \frac{1}{\sqrt{\hat{d}_j}}\mathbf{1}_j)^{T}\HLM^{+}(\frac{1}{\sqrt{\hat{d}_i}}\mathbf{1}_i - \frac{1}{\sqrt{\hat{d}_j}}\mathbf{1}_j)
    \label{eq:def_ER}
\end{equation} 
where $\HLM^{+}$ is the pseudo-inverse of the Laplacian matrix corresponding to $\HAM$. The ER is proportional to the probability of an edge in a graph spanning tree, measuring the connectivity between $v_i$ and $v_j$. To prevent the the high cost of directly computing the $\HLM^{+}$, $\hat{r}_{i,j}$ can be approximated by the following equation according to the Lemma 4.2 of \cite{lai2024efficient}: 
\begin{equation}
    \hat{r}_{i,j} \approx \frac{1}{2}(\frac{1}{\hat{d}_i}+\frac{1}{\hat{d}_j})
    \label{eq:estimate_ER}
\end{equation}
}

% Different from TAGS. which only generate a single sparsed graph, we generate a set of sparsed graphs from different views by giving a class-bias weight. 
% \begin{equation}
%     \PM_{i,y} = \frac{ \textsf{exp}(\HM_{i,y} ) }{ \sum_{k=1}^{K}\textsf{exp}(\HM_{i,k})}, y\in 1,2,\ldots,K
%     \label{eq:logit_softmax}
% \end{equation}
% \begin{equation}
%     w(e_{i,j}) \propto \PM_{i,y}\cdot \PM_{j,y}\cdot \hat{r}_{i,j}
%     \label{eq:weight}
% \end{equation}

% Then we generate multi-view sparsed graphs $\{ \AM^{\circ(k)}\}_{k=1}^{C}$ by select the edges with highest $M\rho$ weights in Eq(\ref{eq:weight}). 

\stitle{Training Losses}
To learn $\boldsymbol{\Delta}$ and weights $\WM^\prime$, \caar transforms each $\HM'^{(y)}$ into label predictions $\PM^{(y)}=\textsf{Softmax}(\HM^{\prime (y)})$, and adopt the supervision losses $\mathcal{L}_{org},\mathcal{L}_{syn}$ and consistency loss function $\mathcal{L}_{cst}$ for training:
\begin{equation}\label{eq:final-loss}
\mathcal{L}_{org} + \gamma\cdot \mathcal{L}_{syn} + \lambda\cdot\mathcal{L}_{cst}
\end{equation}
where $\gamma$ and $\lambda$ are coefficients. $\mathcal{L}_{org}$ is defined as in Eq.~\eqref{eq:org-loss} and $\mathcal{L}_{syn}$ are implemented using the cross-entropy functions over $\G^\prime$:
% Using both $\{ \ZM'^{(k)}\}_{k=1}^{C}$ and $\ZM$ to train a $\WM' \in \mathbb{R}^{D\times C}$ and get the representation $\HM$ and $\{ {\HM'}^{(k)}\}_{k=1}^{C}$. The training loss consists of two parts, supervision loss and consistency loss. Supervision loss makes the GNN to classify the nodes on both original and multi-view graphs. 
\begin{small}
\begin{equation}\label{eq:suploss}
\begin{gathered}
% \mathcal{L}_{org} =  -\frac{1}{N}\sum^{N}_{i=1} \sum^{K}_{k=1} \YM_{i,k}\cdot\textsf{log}(\PM_{i,k}) \\
\mathcal{L}_{syn} = -\frac{1}{n}\sum^{n}_{i=1} \sum^{K}_{y=1} \sum^{K}_{k=1} \YM^\prime_{i,k}\cdot\textsf{log}(\PM^{\prime (y)}_{i,k}),
\end{gathered}
\end{equation}
\end{small}
where $\mathcal{L}_{org}$ measures the difference between the predicted probability distribution $\PM$ and the actual distribution $\YM$ of ground-truth classes on $\G$, while $\mathcal{L}_{syn}$ calculates that on $\G^\prime$ with the consideration of the $K$ label predictions $\{\PM^{\prime (y)}\}_{y=1}^{K}$.

Additionally, we include the following consistency loss function $\mathcal{L}_{cst}$ \cite{feng2020graph} as a regularization to encourage an agreement among all class-specific predictions for learning a robust $\boldsymbol{\Delta}$. 
% different views by averaging them across all views.
\begin{small}
\begin{equation}\label{eq:consistency_loss}
\mathcal{L}_{cst} = \frac{1}{n\cdot K} \sum^{n}_{i=1} \sum^{K}_{y=1}\|\PM^{(y)}_{i}-\overline{\PM}_{i}\|^2_2\ \text{where $\overline{\PM} = \frac{1}{K} \sum^{K}_{y=1} {\PM}^{(y)}$}.
% \|{\PM'}_{i}^{(k)}-\hat{\PM}_{i}\|^{2}_{2}
\end{equation}
\end{small}

% encourages agreement among predictions from different views by averaging them across all views.

\eat{
\begin{equation}
\mathcal{L}_{sup} =  -\frac{1}{N}\sum^{N}_{i=1} \sum^{C}_{c=1} \YM_{i,c}\textsf{log}(\PM_{i,c})-\frac{\gamma}{N'}\sum^{N'}_{i=1} \sum^{C}_{c=1} \YM'_{i,c}\textsf{log}({\PM'}_{i,c})
\label{eq:suploss}
\end{equation}

The second term is the consistency loss \cite{feng2020graph} for multi-view condensed graph prediction. This loss is important in multi-view learning as it encourages agreement among predictions from different views by averaging them across all views.  Acting as a regularization technique, consistency loss improves the model's generalization ability and mitigates overfitting. Ultimately, it enables the model to learn more coherent and robust representations, leading to better performance in multi-view learning tasks.
\begin{equation}
\hat{\PM'}_{i} = \frac{1}{C} \sum^{C}_{k=1} {\PM'}_{i}^{(k)}
\label{eq:avg_pred}
\end{equation}
\begin{equation}
\mathcal{L}_{cst} = \frac{1}{N'C} \sum^{N'}_{i=1} \sum^{C}_{k=1} \|{\PM'}_{i}^{(k)}-\hat{\PM}_{i}\|^{2}_{2}
\label{eq:consistency_loss}
\end{equation}
}

% Finally, we introduce a hyperparameter $\lambda$ to strike an optimal balance between these two loss components, yielding the comprehensive final loss function,
% \begin{equation}
% \mathcal{L}= \mathcal{L}_{org} + \gamma \mathcal{L}_{syn} + \lambda\mathcal{L}_{cst} 
% \label{eq:final_loss}
% \end{equation}

\begin{algorithm}[h]
\caption{Clustering-based Graph Data Distillation}\label{alg:clusterGDD}
\KwIn{The original graph $\G$ and label matrix $\YM$}
\KwOut{$\XM'$, $\AM'$, and $\YM'$}
Generate $\ZM=\sum_{t=0}^{T}{(1-\alpha)\alpha^t\NAM^t}\XM$ \;
% \renchi{what is this? No this definition now also a grammar mistake.} 
\For{$i = 1$ \KwTo $E_1$}{
    Predict $\YM$ via $\HM = \WM\ZM$\;
    Compute $\mathcal{L}_{org}$ by Eq. ~\eqref{eq:org-loss};
}
Generate $\NCM$ by $K$-Means on $\HM$ with $E_2$ iterations\;
Get $\XM^{\prime} = \NCM^{\top}\ZM$, $\AM' = \NCM^{\top}\NAM\NCM $ 
\;
Generate $\YM^{\prime}$ by taking the argmax of $\NCM^{\top}\HM$\;
Initialize $\boldsymbol{\Delta}$, update $\XM'$ by $\XM' = \XM' + \beta \cdot \boldsymbol{\Delta} $ \; 
Generate $\{ \AM^{\circ(y)}\}_{y=1}^{K}$ according to Eq. ~\eqref{eq:new-weight} \;
Get $\{ \AM'^{(y)}\}_{y=1}^{K}$ and $\{ \HM'^{(y)}\}_{y=1}^{K}$ by Eq. ~\eqref{eq:mvg} and~\eqref{eq:mvh}\;
\For{$i = 1$ \KwTo $E_3$}{
    Predict $\YM$ via $\HM\WM'$ \;
    Predict $\YM'$ via $\HM^{'(y)}\WM'$\;
    Compute $\mathcal{L}_{org}$, $\mathcal{L}_{syn}$ and $\mathcal{L}_{cst}$ as Eq.~\eqref{eq:org-loss},~\eqref{eq:suploss},~\eqref{eq:consistency_loss} \;
}
\Return $\XM'$, $\AM'$, $\YM'$  \label{line:return}
\end{algorithm}

\section{Pseudo-code and Time Complexity}
\label{sec:pseudo}
This section presents the algorithm of \algo{} as detailed in Algorithm \ref{alg:clusterGDD}. Consider an original graph with $M$ edges, $N$ nodes, 
$d$-dimensional attributes, and $K$ classes. Let $n$ denote the clustering number (i.e., synthetic node count), $E_1,E_2,E_3$ denote epochs for pretraining, clustering, and refinement, $T,T'$ denote propagation iterations in pretraining and refinement. The computational complexity of our framework includes: propagation ($\mathcal{O}\big(MdT\big)$), mini-batch pretraining ($\mathcal{O}\big(E_1NdK\big)$), K-means clustering ($\mathcal{O}\big(E_2Nnd\big)$)\footnote{For large-scale datasets, the mini-batch K-means algorithm is employed with a computational complexity of $\mathcal{O}\big(E_2bnd\big)$, where $b$ is the batch size.}, generating $\XM'$, $\AM'$, and $\YM'$ ($\mathcal{O}\big(Nd + M + NK\big)$), multi-view graphs generation ($\mathcal{O}\big(K^2M+KMlogM+KM\big)$), propagation on condensed graphs ($\mathcal{O}\big(Kn^2dT'\big)$), and refinement training ($\mathcal{O}\big(E_3NKd + E_3K^2nd\big)$). Assuming $E_1,E_2,E_3=\mathcal{O}\big(E\big)$, $T,T'=\mathcal{O}\big(1\big)$, $K=O\big(1\big)$, the time complexity simplifies to $\mathcal{O}\big( ENnd + n^2d + Md + M\log M \big)$.  In fact, our method has a much lower time complexity compared to the baselines, which is analyzed in Appendix~\ref{sec:appendix-time-com}. This is the reason why our method has a very low time cost empirically as shown in the following experiments.

% The computational complexity of our framework comprises the following key operations: propagation with $\mathcal{O}(Mdt_1)$ time, mini-batch training requiring $\mathcal{O}(E_1Nd)$ time, K-means clustering consuming $\mathcal{O}(E_2NnK)$ time, Condensed attribute and adjacency matrix for $\XM'$, $\AM'$, and $\YM'$ with combined costs $\mathcal{O}(Nd + M + Nn + NK)$ time, Multi-view graph generation where sampled variants demand $\mathcal{O}(K(MK + M\log M))$ and condensed versions need $\mathcal{O}(K(M + Nn))$ time, propagation on condensed multi-view graphs scaling as $\mathcal{O}(Kn^2dt_2)$, and Refinement training requiring $\mathcal{O}(E_3(Nd + Knd))$ time. To simplify the formula for complexity, we abbreviate $E_1$, $E_2$, and $E_3$ as $E$, $T$ and $T'$ as $T$, and $K$ is usually small, the time complexity of \algo{} can be written as $\mathcal{O}\big( ENd + ENn + n^2dt + Mdt + M\log M \big)$. In fact, our method has a much lower time complexity compared to the baselines, which is analyzed in Appendix~\ref{sec:pseudo}. This is the reason why our method has a very low time cost empirically as shown in the following experiments. 

\begin{table}[!t]
\centering
\renewcommand{\arraystretch}{1.0}
% \begin{footnotesize}
% \captionsetup{justification=centering}
\caption{Statistics of Datasets.}\label{tbl:exp-data}
\vspace{-2mm}
% \resizebox{\columnwidth}{!}{%
\begin{tabular}{l|r|r|r|c}
	\hline
	{\bf Dataset} & \multicolumn{1}{c|}{\bf $N$ } & \multicolumn{1}{c|}{\bf $M$ } & \multicolumn{1}{c|}{\bf $d$ } & \multicolumn{1}{c}{\bf $K$} \\
	% \hline
	\hline
    {\em Cora} & $2,708$ &  $5,429$ & $1,433$ & $7$ \\
    {\em Citeseer} & $3,327$ & $4,732$ & $3,703$ & $6$ \\
    {\em Flickr} & $89,250$ & $899,756$ & $128$ & $7$  \\
    {\em arXiv} & $169,343$ & $1,166,243$ & $128$ & $40$ \\
    {\em Reddit} & $232,965$ & $57,307,946$ & $602$ & $41$ \\
    \hline
\end{tabular}%
% }
\vspace{-2ex}
% \end{footnotesize}
\end{table}

\section{Experiments}

In this section, we experimentally evaluate our proposed GDD method \algo{} against nine competitors over five real graphs in terms of node classification performance and efficiency. For reproducibility, the source code and datasets are available at \url{https://github.com/HKBU-LAGAS/ClustGDD}. 
% \renchi{complete missing information}
% All experiments were conducted on a Linux machine equipped with an Intel(R) Xeon(R) Gold 6226 CPU @ 2.70GHz and a 32 GB Nvidia Tesla V100 GPU.

% \begin{table}[!t]
% \centering
% \renewcommand{\arraystretch}{1.0}
% % \begin{footnotesize}
% % \captionsetup{justification=centering}
% \caption{Statistics of Datasets.}\label{tbl:exp-data}
% \vspace{-2mm}
% % \resizebox{\columnwidth}{!}{%
% \begin{tabular}{l|r|r|r|c}
% 	\hline
% 	{\bf Dataset} & \multicolumn{1}{c|}{\bf $N$ } & \multicolumn{1}{c|}{\bf $M$ } & \multicolumn{1}{c|}{\bf $d$ } & \multicolumn{1}{c}{\bf $K$} \\
% 	% \hline
% 	\hline
%     {\em Cora} & $2,708$ &  $5,429$ & $1,433$ & $7$ \\
%     {\em Citeseer} & $3,327$ & $4,732$ & $3,703$ & $6$ \\
%     {\em Flickr} & $89,250$ & $899,756$ & $128$ & $7$  \\
%     {\em arXiv} & $169,343$ & $1,166,243$ & $128$ & $40$ \\
%     {\em Reddit} & $232,965$ & $57,307,946$ & $602$ & $41$ \\
%     \hline
% \end{tabular}%
% % }
% \vspace{-2ex}
% % \end{footnotesize}
% \end{table}

% \todo{ Insert gradient matching: DosCond, KRR: \texttt{GC-SNTK}, distribution mathching : GCDM,SimGC,  strcture aware: \texttt{SGDD} }

\begin{table*}[htbp]
\renewcommand{\arraystretch}{0.9}
  \centering
  \caption{Node classification performance (mean accuracy ($\%$) , $\pm$ standard deviation) and average dataset synthesis time (second) of \algo{} and baselines. (best is highlighted in blue and runner-up in light-blue). The results of methods with $*$ are taken from prior works.}
  \label{tab: effectiveness}
  \vspace{-2ex}
  \resizebox{\linewidth}{!}{
\addtolength{\tabcolsep}{-0.3em}
    \begin{tabular}{lcccccccccccccc}
    \toprule
    \multirow{2}[4]{*}{\textbf{Dataset}} & \multirow{2}[4]{*}{\textbf{Ratio}} & \multicolumn{3}{c}{\textbf{Coreset Methods}} & \multicolumn{8}{c}{\textbf{GDD Methods}} & \multirow{2}[4]{*}{\makecell{\textbf{Whole} \\ \textbf{Dataset}}} \\
    
    \cmidrule(lr){3-5} \cmidrule(lr){6-13} &       & \makecell{\texttt{Random} } & \makecell{\texttt{Herding} } & \makecell{\texttt{K-Center}} & \makecell{\texttt{Gcond}} & \makecell{\texttt{SFGC}}  & \makecell{\texttt{SGDD}*} &\makecell{\texttt{GCSR}} & \makecell{\texttt{GC-SNTK}*} & \makecell{\texttt{GDEM}}& \makecell{\algo{}-$\XM$} & \makecell{\algo{}}& \\
    \midrule
    \multirow{3}[2]{*}{{\em Cora}} & $1.30\%$ & $62.3_{\pm 1.1}$ & $68.0_{\pm0.6}$ & $62.5_{\pm0.7}$ & $80.6_{\pm0.8}$ &  $79.9_{\pm0.5}$ & $79.1_{\pm1.3}$ & $79.7_{\pm1.4}$& \cellcolor{DarkBlue}{$81.7_{\pm0.7}$} & $78.7_{\pm3.1}$ & \cellcolor{Blue}{$81.0_{\pm0.4}$}  & $80.8_{\pm0.3}$ & \multirow{3}[2]{*}{$81.1_{\pm0.4}$} \\
          & $2.60\%$ & $73.1_{\pm0.9}$  & $73.4_{\pm0.4}$ & $73.3_{\pm0.5}$ & $81.1_{\pm0.5}$ &  $80.7_{\pm0.3}$ & $79.0_{\pm1.9}$ &$81.1_{\pm0.4}$ & $81.5_{\pm0.7}$ & $77.7_{\pm1.0}$ & \cellcolor{Blue}{$81.6_{\pm0.6}$} & \cellcolor{DarkBlue}{$\mathbf{81.7_{\pm0.5}}$} &  \\  
          & $5.20\%$ & $76.5_{\pm0.4}$  & $76.6_{\pm0.4}$ &  $77.0_{\pm0.4}$ & $80.7_{\pm0.6}$ & $80.3_{\pm0.4}$ & $80.2_{\pm0.8}$ &$80.8_{\pm0.7}$ & $81.3_{\pm0.2}$ &  $80.3_{\pm1.0}$ & \cellcolor{Blue}{$82.4_{\pm0.5}$} & \cellcolor{DarkBlue}{{$83.0_{\pm0.3}$}} &  \\
    \midrule
    \multirow{3}[2]{*}{{\em Citeseer}} & $0.90\%$ & $49.7_{\pm0.7}$ & $55.8_{\pm0.6}$ & $48.8_{\pm1.1}$ & $72.3_{\pm0.5}$ & $70.5_{\pm0.4}$ & $71.5_{\pm0.9}$ & $71.1_{\pm0.9}$ & $64.8_{\pm0.7}$ & $72.9_{\pm0.7}$& \cellcolor{DarkBlue}{$73.5_{\pm0.2}$}  & \cellcolor{DarkBlue}{$73.5_{\pm0.2}$} & \multirow{3}[2]{*}{$71.8_{\pm0.3}$} \\
          & $1.80\%$ & $57.7_{\pm0.7}$ & $68.2_{\pm0.4}$ & $65.2_{\pm0.6}$ & \cellcolor{Blue}{$72.5_{\pm0.6}$} & $71.8_{\pm0.1}$ & $71.2_{\pm0.7}$ & $71.4_{\pm0.5}$ & $65.9_{\pm0.2}$ & \cellcolor{DarkBlue}{$74.4_{\pm0.3}$} & \cellcolor{DarkBlue}{$74.4_{\pm0.2}$} & \cellcolor{DarkBlue}{$74.4_{\pm0.2}$} &  \\
          & $3.60\%$ & $69.0_{\pm0.5}$ & $69.0_{\pm0.8}$ & $69.5_{\pm0.5}$ & $70.9_{\pm0.5}$ &  $71.1_{\pm0.5}$ & $70.9_{\pm1.2}$ & $73.0_{\pm0.6}$ &$66.3_{\pm0.5}$&$74.1_{\pm1.1}$& \cellcolor{Blue}{$74.5_{\pm0.5}$} & \cellcolor{DarkBlue}{$74.6_{\pm0.3}$} &  \\
    \midrule
    \multirow{3}[2]{*}{{\em arXiv}} & $0.05\%$ & $43.8_{\pm1.5}$ & $51.8_{\pm1.2}$ & $48.0_{\pm1.4}$ & $56.4_{\pm2.5}$ & $63.3_{\pm0.7}$ &  $59.6_{\pm0.5}$& $60.7_{\pm0.7}$ &  \cellcolor{DarkBlue}{$64.2_{\pm0.2}$} & $61.8_{\pm1.8}$ & \cellcolor{Blue}{$63.7_{\pm0.4}$}  & \cellcolor{Blue}{$63.7_{\pm0.1}$} & \multirow{3}[2]{*}{$71.1_{\pm0.2}$} \\
          & $0.25\%$ & $57.0_{\pm0.7}$  & $60.5_{\pm0.7}$ & $56.4_{\pm0.8}$ & $60.7_{\pm2.2}$ &  $67.3_{\pm0.7}$ &  $61.7_{\pm0.3}$ & $64.4_{\pm0.3}$ &  $65.1_{\pm0.8}$ & $61.8_{\pm0.4}$ & \cellcolor{Blue}{$67.4_{\pm0.3}$}  & \cellcolor{DarkBlue}{$67.5_{\pm0.1}$} & \\
          & $0.50\%$ & $58.8_{\pm0.8}$  & $60.5_{\pm0.4}$ & $60.1_{\pm0.5}$ & $60.9_{\pm2.0}$ & \cellcolor{DarkBlue}{$67.9_{\pm0.2}$} &  $58.7_{\pm0.6}$ & $66.4_{\pm0.3}$ &  $65.4_{\pm0.5}$ & $63.3_{\pm0.5}$ & $67.6_{\pm0.1}$ & \cellcolor{Blue}{$67.7_{\pm0.2}$} &  \\
    \midrule
    \multirow{3}[2]{*}{{\em Flickr}} &$0.10\%$ & $42.0_{\pm0.3}$ & $40.5_{\pm0.7}$ & $41.3_{\pm0.7}$ & $46.3_{\pm0.6}$ & $46.7_{\pm0.2}$ & $46.1_{\pm0.3}$ & $46.6_{\pm0.2}$ & $46.7_{\pm0.1}$ & $45.7_{\pm0.6}$ & \cellcolor{Blue}{$46.8_{\pm0.1}$} & \cellcolor{DarkBlue}{$46.9_{\pm0.1}$} & \multirow{3}[2]{*}{$46.9_{\pm0.1}$} \\
          & $0.50\% $& $42.9_{\pm0.5}$ & $42.7_{\pm0.4}$ & $43.6_{\pm0.3}$ &  \cellcolor{Blue}{$46.9_{\pm0.1}$} &  \cellcolor{DarkBlue}{${47.2_{\pm0.1}}$} & $45.9_{\pm0.4}$ & $46.7_{\pm0.1}$ & $46.8_{\pm0.1}$ &  $46.1_{\pm0.6}$ & $46.8_{\pm0.1}$ & \cellcolor{Blue}{$46.9_{\pm0.1}$}  & \\
          & $1.00\% $& $42.8_{\pm0.6}$ & $44.9_{\pm0.2}$ & $43.9_{\pm0.5}$ & \cellcolor{DarkBlue}{$47.1_{\pm0.3}$} & \cellcolor{Blue}{${46.9_{\pm0.2}}$} & $46.4_{\pm0.2}$ & \cellcolor{Blue}{$46.9_{\pm0.3}$} & $46.5_{\pm0.2}$ & $46.4_{\pm0.4}$ & \cellcolor{Blue}{$46.9_{\pm0.2}$} & \cellcolor{DarkBlue}{$47.1_{\pm0.1}$}&  \\
    \midrule
    \multirow{3}[2]{*}{{\em Reddit}} & $0.05\%$ & $44.8_{\pm1.4}$ & $52.2_{\pm1.3}$ & $48.5_{\pm0.7}$ & $87.2_{\pm0.9}$ & $78.8_{\pm1.3}$ & $84.2_{\pm0.7}$ &  \cellcolor{Blue}$90.5_{\pm0.3}$ & $74.3_{\pm0.5}$ & \cellcolor{DarkBlue}{$91.4_{\pm0.4}$} &  $89.8_{\pm0.1}$  & $89.8_{\pm0.1}$ & \multirow{3}[2]{*}{$94.1_{\pm0.0}$} \\ 
          & $0.10\%$ & $60.2_{\pm1.3}$ & $61.4_{\pm1.0}$ & $49.0_{\pm1.2}$ & $89.1_{\pm0.8}$ & $80.7_{\pm1.9}$ & $90.7_{\pm0.1}$ & \cellcolor{Blue}$91.2_{\pm0.4}$ &{$74.8_{\pm0.7}$}  & \cellcolor{DarkBlue}{$91.5_{\pm0.4}$} & \cellcolor{DarkBlue}{$91.5_{\pm0.1}$} & \cellcolor{DarkBlue}{$91.5_{\pm0.1}$} & \\
          & $0.50\%$ &$ 79.3_{\pm1.0} $& $82.1_{\pm0.2}$ & $70.3_{\pm0.7}$ & $90.1_{\pm0.7}$ & $87.1_{\pm0.3}$ & $-$ & $84.6_{\pm1.1}$ & $85.2_{\pm1.2}$  & $91.4_{\pm0.7}$ & \cellcolor{Blue}{$91.6_{\pm0.0}$} & \cellcolor{DarkBlue}{$91.7_{\pm0.0}$}  & \\
    \midrule
    \multicolumn{2}{c}{\textbf{Avg. Time}} & $92.4$ & $92.6$ & $92.0$ & $8,150.4$ & $104,999.0$ & $46,424.8$ & $4,350.4$ & $3,045.4$ & $1,316.4$ & \cellcolor{DarkBlue}{56.7} & \cellcolor{Blue}{$56.8$} &\\
    \bottomrule
    \end{tabular}
}
\end{table*}

\subsection{Experiment Settings}
\stitle{Datasets}
% We tested our method on node classification problems, following the commonly used settings from previous papers. 
Table \ref{tbl:exp-data} lists the statistics of the five real graph datasets used in our experiments. {\em Cora}~\cite{sen2008collective}, {\em Citeseer}~\cite{sen2008collective}, and {\em arXiv}~\cite{hu2020open} are the classic datasets for transductive node classification, while {\em Flickr} \cite{zeng2019graphsaint} and {\em Reddit} \cite{hamilton2017inductive} are for inductive tasks. 
% More details regarding the datasets can be found in Appendix~\ref{sec:dataset-detail}.

% We employed Cora \cite{}, Citeseer \cite{}, arXiv \cite{}, Flickr \cite{}, and Reddit \cite{} as datasets. The first three are transductive datasets, while the latter two are inductive datasets.
% The number of nodes, edges, feature dimensions, and classes for each graph are shown in the Table \ref{table:dataset}. 
% The ratio in Table represents the proportion of nodes in the distilled graph compared to the original graph. To gain details of the dataset, please refer to the Appendix~\ref{sec:dataset-detail}.

\stitle{Baselines and Settings}
We evaluate our \algo{} against three classic coreset methods: \texttt{Random} \cite{welling2009herding}, \texttt{Herding} \cite{welling2009herding} and \texttt{K-Center} \cite{sener2017active}, and six GDD baselines as categorized below:
\begin{itemize}[leftmargin=*]
\item {\em Gradient Matching}: \texttt{Gcond} \cite{jin2021graph}, \texttt{SGDD} \cite{yang2023does}, and \texttt{GCSR} \cite{liu2024graph};
% \item {\em Distribution Matching}: SimGC \cite{xiao2024simple};
\item {\em Trajectory Matching}: \texttt{SFGC} \cite{zheng2024structure};
\item {\em Eigenbasis Matching}: \texttt{GDEM} \cite{liugraph};
\item {\em Performance Matching}: \texttt{GC-SNTK} \cite{wang2024fast}
% \item structure-aware method: \texttt{SGDD} \cite{yang2023does} and \texttt{GCSR} \cite{liu2024graph};
\end{itemize}
% Gradient-matching: \texttt{Gcond} \cite{jin2021graph}, DosCond \cite{jin2022condensing}, Distribution-matching: GCDM \cite{liu2022graph}, SimGC \cite{xiao2024simple}, Trajectory-matching: \texttt{SFGC} \cite{zheng2024structure}, Structure-aware method: \texttt{SGDD} \cite{yang2023does}, \texttt{GCSR} \cite{liu2024graph}, Eigen-matching: \texttt{GDEM} \cite {liugraph},Regression-based method: \texttt{GC-SNTK} \cite{wang2024fast}. 
% Among them, \texttt{Gcond}, \texttt{SFGC}, \texttt{GCSR}, and \texttt{GDEM} directly provide the synthetic data, and we obtain the experimental results using the synthetic data of these four methods, applying the source code and hyperparameter settings provided by the respective papers. 
% For all evaluated methods, we adopt the evaluation protocol (e.g., condensation ratios, ) in previous works~\cite{gao2024rethinking}.
For a fair comparison, we follow the evaluation protocol and settings (e.g., condensation ratios, GNNs, etc.) commonly adopted in previous works~\cite{jin2021graph,liugraph} for all evaluated methods.
We reproduce the performance of \texttt{Gcond}, \texttt{SFGC}, \texttt{GCSR}, and \texttt{GDEM} using the source codes, hyperparameter settings, and synthetic datasets provided by the respective authors. As for the rest of the baselines, we directly use the results reported in  \cite{liu2024graph, wang2025efficient}.
For the interest of space, we defer the details of datasets, implementation, and hyper-parameter settings to Appendix~\ref{sec:more-data-imp-details}, and additional experiments on hyper-parameter analysis and visualization to Appendix~\ref{sec:add-exp}.

\subsection{Condensation Effectiveness Evaluation}
\eat{\renchi{refine the writing. too many paragraphs. need to explain Table \ref{tab: effectiveness} carefully: what datasets tested, what condensation ratios used, reported results are node classification accuracies, best and runner-up are highlighted in blue and darker shades indicate better performance, the results of GNNs on whole datasets are provided. Then list your observations. Explain our performance, e.g., consistently outperform others on different datasets and condensation ratios, give examples (the specific improvements on two examples datasets), comparable to those on the whole datasets. explain why our performance is so good, demonstrate the effectiveness of our proposed what techniques.}}
We conduct comprehensive comparison of node classification performance across various coreset and distillation methods. We train \texttt{GCN} on the synthetic graphs and record the test accuracy for evaluation. The ratio is defined as $n/N$ for transductive datasets, and it is defined as $n/N_{\mathcal{T}}$ for the inductive ones, where $N_{\mathcal{T}}$ is the number of training nodes. 

We can summarize several key points from the results shown in Table \ref{tab: effectiveness}. 
Firstly, Our \algo{} and \algo{}-$\XM$ obtain overall highest/second-highest accuracy on most datasets, and sometimes higher than the results of the whole datasets. For instance, on {\em Citeseer}, the average accuracy of \algo{} on three ratios is $74.2\%$, which is $2.4\%$ higher than original result. We suggest that the success of \algo{} can be primarily attributed to two key factors: during the clustering phase, \algo{} effectively integrates information from both the original attributes and the graph topology through GLS. Through pretraining and clustering, the FID between the condensed attributes and the original attributes is minimized. On the other hand, during the refinement phase, the condensed attributes are further improved by co-training on original and condensed graphs, reducing the Heterophilic Over-smoothing Issue.
Secondly, the  results of \algo{} are slightly better than \algo{}-$\XM$, indicating that the synthetic attributes capture most of the original graph's key information, with the synthetic topology offering supplementary advantages. We also find distillation methods consistently outperform traditional methods, demonstrating the overall effectiveness and robustness of distillation methods.

We present the average time costs of coreset and GDD methods across five datasets with condensation ratios of $2.6\%$, $1.8\%$, $0.05\%$, $0.10\%$, and $0.10\%$, respectively. Experimental findings reveal a notable trade-off: while traditional graph distillation methods achieve high node classification accuracy, their computational overhead remains prohibitive (ranging from $10^3$ to $10^4$ seconds). Conversely, coreset methods demonstrate remarkable efficiency (approximately 92 seconds) at the cost of diminished accuracy. Our proposed \algo{} method effectively bridges this gap, achieving state-of-the-art accuracy in merely 56.8 seconds. Detailed per-dataset performance metrics are available in Appendix \ref{sec:add-exp}.

\subsection{Cross-architecture Generalization}
\eat{\renchi{need to refer to specific data statistics as examples for illustration.}}
To verify the generalization of our distilled graph, we use different GNN architectures to train on synthetic graphs and test their performance on the original graph, and the results are shown in Table \ref{tab:gnn_performance}. 
\algo{} achieves the overall highest average, and the lowest std, showing its stability and effectiveness when datasets and GNNs architecture vary. For example, on the {\em arXiv} dataset, the average accuracy of our method is $2.8\%$ higher than that of the second-place, \texttt{GCSR}, and the std is reduced by $1.7\%$. This validates that \algo{} has the superior generalization ability across GDDs. This may be because the closed-form solution of GLS in the clustering and refinement is more stable than other specific GNNs. 

\subsection{Ablation Study}
\stitle{Varying Clustering Methods}
\eat{\renchi{update and refine the writing}}
We use different GNNs architectures to replace the closed-form solution of GLS as the backbone in the pretraining and refinement stages. In general, \texttt{SGC} and \texttt{APPNP} perform well on the {\em Cora} and {\em Citeseer}. \texttt{ChebyNet} shows good performance on the {\em arXiv} and {\em Reddit} datasets. \texttt{MLP} has the highest accuracy on the {\em Flickr} dataset. Our backbone consistently outperforms the others across all datasets, this may be attributed to several key factors. Firstly, it excels at capturing the global structure of graphs, allowing it to model long-range dependencies and overall connectivity patterns effectively, which is crucial in datasets where important features are spread across the entire graph. Secondly, it assigns weighted influence to nodes based on their importance, enabling the model to focus more on key nodes that drive the graph dynamics and aggregate their features more meaningfully. These combined strengths enable the personalized PageRank backbone to consistently deliver superior performance across diverse datasets.

\begin{table}[!t]
\centering
\caption{Generalization of GDD methods across GNNs.}
\label{tab:gnn_performance}
\vspace{-2ex}
\renewcommand{\arraystretch}{0.9}
\begin{small}
\resizebox{\linewidth}{!}{
\addtolength{\tabcolsep}{-0.3em}
\begin{tabular}{ccccccccccc}
\toprule
\multirow{2}{*}{\textbf{Dataset}} & \multirow{2}{*}{\textbf{Method}} & \multicolumn{5}{c}{\textbf{GNN Architectures}} & \multirow{2}{*}{\textbf{Avg.}} & \multirow{2}{*}{ \textbf{Std.}} \\
\cmidrule{3-7}
 &  & \texttt{GCN} & \texttt{SGC} & \texttt{APPNP} & \texttt{ChebyNet} & \texttt{BernNet} &   \\
\midrule
\multirow{4}{*}{{\em Cora}} 
%  & \texttt{Gcond} & 81.1±0.5  & 81.0±0.8  & 80.2±0.9 & 63.9±2.1 & 60.8±5.5 & 81.0±1.0 & 74.7 & 8.8     \\
%  & \texttt{SFGC} & 80.7±0.3 & 80.7±0.3 & 80.2±0.5  & 70.4±1.1 & 81.8±0.6 & 80.2±1.4  & 79.0 & 3.9   \\
%  & \texttt{GCSR} & 81.1±0.4 & 81.4±0.5  & 81.4±0.3 & 71.2±1.0 & 82.3±0.3 & 82.0±0.6 & 79.9 & 3.9  \\
%  & \texttt{GDEM} & 77.7±1.0 & 77.8±0.8 & 78.0±0.8 & 66.5±0.8  & 77.3±1.5  & 78.2±1.3 & 75.9 & 4.2 \\
%  & \algo{} & 81.7±0.5  & 82.1±0.5 &  81.7±0.3  & 73.7±0.7  & 82.3±0.4 & 82.3±0.4 & 80.6 & 3.1 \\
% \midrule
% \multirow{4}{*}{Citeseer} & \texttt{Gcond} &72.5±0.6 &71.7±0.6  & 71.6±1.7  & 62.5±1.7 & 33.5±3.4  & 70.3±1.3  & 63.7 &13.9\\
%  & \texttt{SFGC} & 71.8±0.1 &71.8±0.4 &  71.3±0.5 & 65.1±0.9 & 70.3±1.0 & 66.6±2.4 & 69.5  &  2.7 \\
%  & \texttt{GCSR} & 71.4±0.5 &71.6±1.3  & 71.6±1.2 & 66.2±1.5 & 66.0±2.5 & 63.9±0.7 & 68.5  & 3.2   \\
%  & \texttt{GDEM} & 74.4±0.2 & 74.6±0.4 & 74.7±0.5  &71.2±0.6  & 73.1±0.1 & 72.1±0.8 & 73.4 & 1.3   \\
%  & \algo{} & 74.4±0.2 &74.3±0.6 & 74.6±0.3 & 72.1±0.5 & 73.3±0.4 & 73.1±0.5 & 73.6 &  0.9  \\
% \midrule
% \multirow{4}{*}{arXiv} & \texttt{Gcond} & 56.4±2.5 & 46.4±2.6 & 58.0±1.4  & 42.2±0.6 & 46.4±2.6 & 59.1±1.4   & 51.4 & 6.6\\
%  & \texttt{SFGC} & 63.3±0.7 & 60.7±1.6  & 55.7±1.4 & 43.9±0.6   &61.6±0.8  & 60.2±1.4  & 57.6& 6.5 \\
%  & \texttt{GCSR} & 60.7±0.7 & 55.7±2.1 & 58.7±0.7 & 42.7±0.5 & 61.4±0.3 & 61.3±0.5  & 56.8 & 6.6\\
%  & \texttt{GDEM} & 61.8±1.8  & 60.4±2.7 & 59.2±2.4 &  44.4±0.5 & 60.5±1.0  & 56.8±1.6  & 57.2 & 5.9\\
%  & \algo{} & 63.7±0.1 & 61.9±0.4 & 62.4±0.3   &  44.0±0.7  & 64.0±0.1  & 63.3±1.1  & 59.9& 7.1 \\
& \texttt{Gcond} & $81.1$  & $81.0$  & $80.2$ & $60.8$ & $81.0$ & $76.8 $&${8.0}$     \\
& \texttt{SFGC} & $80.7$ & $80.7$ & $80.2$  & $81.8$ & $80.2$  & $80.7$&${0.6}$   \\
& \texttt{GCSR} & $81.1$ & $81.4$  & $81.4$ & $82.3$ & $82.0$ & $81.6$&${0.4}$  \\
& \texttt{GDEM} & $77.7$ & $77.8$ & $78.0$  & $77.3$  & $78.2$ & $77.8$&${0.3}$ \\
& \algo{} & $81.7$  & $82.1$ &  $81.7$  & $82.3$ & $82.3$ & $82.0$&${0.3}$ \\
\midrule
\multirow{4}{*}{{\em Citeseer}} & \texttt{Gcond} & $72.5$ & $71.7$  & $71.6$  & $33.5$  & $70.3$  & $63.9$&${15.2}$\\
& \texttt{SFGC} & $71.8$ & $71.8$ &  $71.3$ & $70.3$ & $66.6$ & $70.4$&${2.0}$ \\
& \texttt{GCSR} & $71.4$ & $71.6$  & $71.6$ & $66.0$ & $63.9$ & $68.9$&${3.3}$   \\
& \texttt{GDEM} & $74.4$ & $74.6$ & $74.7$ & $73.1$ & $72.1$ & $73.8$&${1.0}$   \\
& \algo{} & $74.4$ & $74.3$ & $74.6$ & $73.3$ & $73.1$ & $73.9$&${0.6}$  \\
\midrule
\multirow{4}{*}{{\em arXiv}} & \texttt{Gcond} & $56.4$ & $46.4$ & $58.0$  & $46.4$ & $59.1$   & $53.3$&${5.7}$\\
& \texttt{SFGC} & $63.3$ & $60.7$  & $55.7$  & $61.6$  & $60.2$  & $60.3$&${2.5}$ \\
& \texttt{GCSR} & $60.7$ & $55.7$ & $58.7$  & $61.4$ & $61.3$  & $59.6$&${2.2}$\\
& \texttt{GDEM} & $61.8$  & $60.4$ & $59.2$ & $60.5$  & $56.8$  & $59.7$&${1.7}$\\
& \algo{} & $63.7$ & $61.9$ & $62.4$ & $64.0$  & $63.3$  & $63.1$&${0.8}$ \\
% \midrule
% \multirow{4}{*}{Flickr} & \texttt{Gcond} & 46.3±0.6 & 46.3±0.3  & 44.9±0.6 & 43.2±0.7    & 41.8±0.8  & 38.1±1.3 & 43.4 & 2.9 \\
%  & \texttt{SFGC} & 46.7±0.2 & 46.5±0.3 & 46.7±0.3  &  45.1±0.5  & 46.4±0.4  & 45.4±0.5  & 46.1 & 0.6\\
%  & \texttt{GCSR} & 46.6±0.2 & 46.5±0.2 & 46.6±0.2  &   45.4±0.3 & 46.2±0.2 & 45.6±0.7 & 46.2 & 0.5\\
%  & \texttt{GDEM} & 45.7±0.6  & 44.5±0.6  & 44.4±0.8  & 40.2±2.5   & 44.2±0.6  & 40.1±1.9 & 43.2 & 2.2 \\
%  & \algo{} & 46.9±0.1 & 46.9±0.1  & 46.7±0.0 & 46.0±0.1   & 46.5±0.1 & 46.4±0.3 & 46.6 & 0.3 \\
% \midrule
% \multirow{4}{*}{Reddit} & \texttt{Gcond} & 89.1±0.8 & 89.9±0.9  & 87.5±1.0 & 43.6±0.7& 21.1±3.7 & 87.3±1.0 & 69.8  & 27.2   \\
%  & \texttt{SFGC} & 80.7±1.9  & 62.4±3.4  & 62.6±3.0 & 44.7±0.1 & 78.5±1.1 & 75.0±0.6 & 67.3 & 12.4  \\
%  & \texttt{GCSR} & 91.2±0.4 & 89.3±0.7 & 87.2±0.7 & 48.1±0.6  & 70.4±5.9 & 85.5±1.2 & 78.6  & 15.2  \\
%  & \texttt{GDEM} & 91.5±0.4 & 91.4±0.3 & 91.0±0.4 & 59.8±0.5 & 85.8±1.9 & 82.2±3.6 & 83.6 & 11.2 \\
%  & \algo{} & 91.5±0.1 & 90.7±0.2 & 88.7±0.2 & 46.1±0.4  & 87.5±1.0 & 83.9±1.4 & 81.4  & 16.0\\
\midrule
\multirow{4}{*}{{\em Flickr}} & \texttt{Gcond} & $46.3$ & $46.3$  & $44.9$    & $41.8$  & $38.1$ & $43.5$&${3.2}$ \\
 & \texttt{SFGC} & $46.7$ & $46.5$ & $46.7$ & $46.4$  & $45.4$  & $46.3$&${0.5}$\\
 & \texttt{GCSR} & $46.6$ & $46.5$ & $46.6$ & $46.2$ & $45.6$ & $46.3$&${0.4}$\\
 & \texttt{GDEM} & $45.7$  & $44.5$  & $44.4$  & $44.2$  & $40.1$ & $43.8$&${1.9}$ \\
 & \algo{} & $46.9$ & $46.9$  & $46.7$ & $46.5$ & $46.4$ & $46.7$&${0.2}$ \\
\midrule
\multirow{4}{*}{{\em Reddit}} & \texttt{Gcond} & $89.1$ & $89.9$  & $87.5$ & $21.1$ & $87.3$ & $75.0$&${27.0}$   \\
 & \texttt{SFGC} & $80.7$  & $62.4$  & $62.6$ & $78.5$ & $75.0$ & $71.8$&${7.8}$  \\
 & \texttt{GCSR} & $91.2$ & $89.3$ & $87.2$ & $70.4$ & $85.5$ & $84.7$&${7.4}$  \\
 & \texttt{GDEM} & $91.5$ & $91.4$ & $91.0$ & $85.8$ & $82.2$ & $88.4$ &${3.8}$ \\
 & \algo{} & $91.5$ & $90.7$ & $88.7$ & $87.5$ & $83.9$ & $88.5$&${2.7}$\\
\bottomrule
\end{tabular}
}
\vspace{-2ex}
\end{small}
\end{table}

\stitle{Attribute Refinement}
\eat{\renchi{update/remove the terms, e.g., TGAS, sparsification, multi-view}}
We then explore the impact of modules in \caar. Firstly, we remove the refinement (w/o \caar), using synthetic attributes and topology from the clustering stage to train GNNs, it leads to a decline in GNN performance across all datasets, especially on datasets with a higher compression ratio and class imbalance, such as {\em Reddit} and {\em arXiv}.
Secondly, we replace the multiple representations $\{\HM^{\prime (y)}\}_{y=1}^{K}$ created via class-specific sampled graphs in \caar by a single $\HM^{\prime}$ (w/o sampling). Due to the lack of information from different views, the performance of GNN declines across all datasets, especially on large graphs such as {\em Reddit} and {\em arXiv}. 
Then, we remove the consistency loss (w/o $\mathcal{L}_{cst}$). It indeed affects the quality of the distilled graph. It's because the consistency of predictions enhances the robustness of the refinement.

\begin{table}[!t]
\centering
\caption{Ablation study results.}
\vspace{-2ex}
\resizebox{\linewidth}{!}{
\addtolength{\tabcolsep}{-0.2em}
\begin{tabular}{@{}llccccc@{}}
\toprule
& \textbf{Modules} & {\em Cora} & {\em Citeseer} & {\em arXiv} & {\em Flickr} & {\em Reddit} \\ \midrule 
% \multirow{8}{*}{\centering Condensation via Clustering}  
  % & \texttt{Random} & 31.9\pm0.0 & 22.1\pm2.2  & 2\pm0.2 & 7\pm0.2 & 10.0\pm1.0 \\
  & w/ \texttt{MLP} & $77.8_{\pm0.7}$  & $72.3_{\pm0.3}$ & $58.7_{\pm0.4}$  & $45.5_{\pm0.4}$  & $81.6_{\pm0.3}$  \\
  & w/ \texttt{GCN}  & $78.9_{\pm 0.5}$ & $70.9_{\pm 0.7}$  & $60.9_{\pm 0.3}$ & $ 43.6_{\pm 1.1}$  & $87.7_{\pm 0.4}$  \\ 
& w/ \texttt{SGC}  & $81.7_{\pm 0.5}$  & $72.3_{\pm 0.3}$  & $57.5_{\pm 0.7}$  & $45.1_{\pm 0.7}$ & $77.7_{\pm 0.3}$ \\ 
& w/ \texttt{APPNP}  & $80.6_{\pm 0.2}$  & $72.2_{\pm 0.3}$  & $59.4_{\pm 0.6}$ & $41.4_{\pm 1.2}$ & $86.3_{\pm 0.1}$  \\ 
& w/ \texttt{BernNet}  & $79.2_{\pm 0.4}$  & $72.0_{\pm 0.1}$ & $58.7_{\pm 0.5}$  & $45.0_{\pm 0.2}$ & $84.6_{\pm 0.4}$ \\ 
& w/ \texttt{ChebyNet}  & $79.5_{\pm 0.4}$ & $72.1_{\pm 0.6}$ & $61.4_{\pm 0.2}$  & $45.8_{\pm 0.1}$  & $87.2_{\pm 0.4}$ \\
  % & w/ \texttt{GCN}  & $78.9\pm0.5$ & $70.9\pm0.7$  & $60.9\pm0.3$ & $ 43.6\pm1.1$  & $87.7\pm0.4$  \\ 
  % & w/ \texttt{SGC}  & $81.7\pm0.5$  & $72.3\pm0.3$  & $57.5\pm0.7$  & $45.1\pm0.7$ & $77.7\pm0.3$ \\ 
  % & w/ \texttt{APPNP}  & $80.6\pm0.2$  & $72.2\pm0.3$  & $59.4\pm0.6$ & $41.4\pm1.2$ & $86.3\pm0.1$  \\ 
  % & w/ \texttt{BernNet}  & $79.2\pm0.4$  & $72.0\pm0.1$ & $58.7\pm0.5$  & $45.0\pm0.2$ & $84.6\pm0.4$ \\ 
  % & w/ \texttt{ChebyNet}  & $79.5\pm0.4$ & $72.1\pm0.6$ & $61.4\pm0.2$  & $45.8\pm0.1$  & $87.2\pm0.4$ \\ 
  % & GSD-based (Ours) & 81.4\pm0.5  & 74.1\pm0.2  & 62.2\pm0.4  & 49.6\pm0.3  & 90.8\pm0.2  \\ 
  % & w/ SVR & 81.5\pm0.2  & 74.1\pm0.2  & 62.6\pm0.3 & 46.8\pm0.1 & 90.4\pm0.1  \\ 
   % & w rand views & 81.8\pm0.5  & 74.1\pm0.3  & 63.7\pm0.3 & 46.9\pm0.1  &  91.4\pm0.0  \\
  % & w/o $\mathcal{L}_{cst}$ & 81.6\pm0.5 & 74.3\pm0.2 & 63.5\pm0.3 & 46.7\pm0.1   & 91.4\pm0.1 \\
  \midrule
% \multirow{3}{*}{\centering Multi-view Refinement} 
  & w/o \caar  & $81.3_{\pm 0.4}$ & $74.0_{\pm 0.3}$ & $61.9_{\pm 0.4}$ & $46.3_{\pm 0.2}$  & $87.6_{\pm 0.2}$ \\ 
  & w/o sampling & $81.5_{\pm0.2}$  & $74.1_{\pm0.2}$  & $62.6_{\pm0.3}$ & $46.8_{\pm0.1}$ & $90.4_{\pm0.1}$  \\ 
  & w/o $\mathcal{L}_{cst}$ & $81.6_{\pm0.5}$ & $74.3_{\pm0.2}$ & $63.5_{\pm0.3}$ & $46.7_{\pm0.1}$   & $91.4_{\pm0.1}$ \\
\midrule
& \algo{} & $81.7_{\pm 0.5}$ & $74.4_{\pm 0.2}$ & $63.7_{\pm 0.1}$ & $46.9_{\pm 0.1}$   & $91.5_{\pm 0.1}$\\
  \bottomrule
\end{tabular}
}
\vspace{-2ex}
\end{table}

\subsection{Hyper-parameter Analysis}

Next, we conduct hyper-parameter analysis in our method on {\em arXiv} and {\em Reddit}, evaluating the impact of various hyperparameters on the performance of our machine learning model in Fig.~\ref{fig:hyperparameter}.  First, we explore the propagation number $T$ in of the pretraining stage, which is critical for controlling the degree of information integration between nodes at different distances in the graph. The model’s performance initially increases with $T$ and then decreases on both {\em arXiv} and {\em Reddit}. This means that aggregating information from nearby neighbors is beneficial, while excessive message passing may lead to over-smoothing, resulting in a decline in performance. Then we tune another hyper-parameter $\alpha$,  $\alpha$ represents the weight between exploring neighbors and back to a starting node. A larger $\alpha$ means the model is more likely to explore, leading to results that are influenced by the overall graph structure, which is important for the task on relatively larger graphs. We find that when generating multi-view representations, a certain degree of sparsity $\rho$ can help the model achieve optimal performance, and our method is relatively robust to varying levels of sparsity. We find that the strength of the refinement on attributes, represented by $\beta$, significantly affect the quality of the distilled graph attributes, thereby influencing the model’s performance. In refinement, we find that adjusting the coefficient 
$\gamma$ for the supervision loss has a greater impact than adjusting the coefficient $\lambda$ for the consistency loss. This suggests that the supervision loss on multi‐view graphs may be more important.

 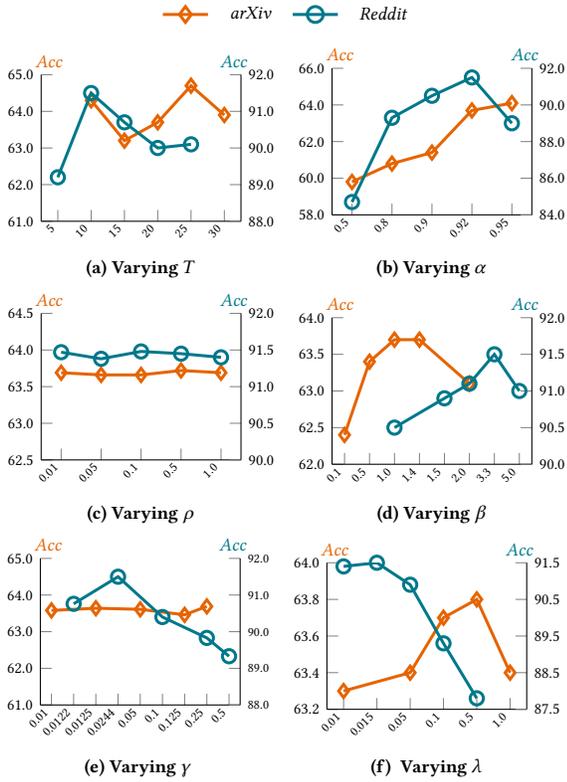
\begin{figure}[!t]
\centering
\begin{small}
\begin{tikzpicture}
    \begin{customlegend}[legend columns=2,
        legend entries={{\em arXiv},{\em Reddit}},
        legend style={at={(0.45,1.35)},anchor=north,draw=none,font=\footnotesize,column sep=0.2cm}]
    \addlegendimage{line width=0.4mm,mark size=3pt,mark=diamond, color=myorange-new}
    \addlegendimage{line width=0.4mm,mark size=3pt,mark=o, color=mygreen-new}
    \end{customlegend}
\end{tikzpicture}
\\[-\lineskip]
\vspace{-2ex}
\subfloat[Varying $T$]{
    \begin{tikzpicture}[scale=1,every mark/.append style={mark size=2.5pt}]
        \begin{axis}[
            height=\columnwidth/2.4,
            width=\columnwidth/2.0,
            ylabel={\it \textcolor{myorange-new}{Acc}},
            xmin=0.5, xmax=6.5,
            ymin=61, ymax=65,
            xtick={1,2,3,4,5,6},
            ytick={61.0,62.0,63.0,64.0,65.0},
            xticklabel style = {font=\tiny, rotate=45, anchor=east},
            yticklabel style = {font=\scriptsize, rotate=0, anchor=east},
            xticklabels={5,10,15,20,25,30},
            yticklabels={61.0,62.0,63.0,64.0,65.0},
            every axis y label/.style={font=\footnotesize,at={(current axis.north west)},right=1mm,above=0mm},
            axis y line*=left,
            axis x line*=bottom,
            legend style={fill=none,font=\small,at={(0.02,0.99)},anchor=north west,draw=none},
        ]
        \addplot[line width=0.4mm, mark=diamond, color=myorange-new] 
            plot coordinates {
                (2, 64.3)
                (3, 63.2)
                (4, 63.7)
                (5, 64.7)
                (6, 63.9)
        };
        \end{axis}
        \begin{axis}[
            height=\columnwidth/2.4,
            width=\columnwidth/2.0,
            ylabel={\it \textcolor{mygreen-new}{Acc}},
            xmin=0.5, xmax=6.5,
            ymin=88, ymax=92,
            xtick={1,2,3,4,5,6},
            ytick={88.0,89.0,90.0,91.0,92.0},
             xticklabel style = {font=\tiny, anchor=east},
            yticklabel style = {font=\scriptsize, rotate=0, anchor=west},
            xticklabels={5,10,15,20,25,30},
            yticklabels={88.0,89.0,90.0,91.0,92.0},
            every axis y label/.style={font=\footnotesize,at={(current axis.north east)},left=1mm,above=0mm},
            axis y line*=right,
            axis x line=none,
            legend style={fill=none,font=\small,at={(0.98,0.99)},anchor=north east,draw=none},
        ]
        \addplot[line width=0.4mm, mark=o, color=mygreen-new]  
            plot coordinates {
                (1, 89.2)
                (2, 91.5)
                (3, 90.7)
                (4, 90.0)
                (5, 90.1)
        };
        \end{axis}
    \end{tikzpicture}
    }
\subfloat[Varying $\alpha$]{
    \begin{tikzpicture}[scale=1,every mark/.append style={mark size=2.5pt}]
        \begin{axis}[
            height=\columnwidth/2.4,
            width=\columnwidth/2.0,
            ylabel={\it \textcolor{myorange-new}{Acc}},
            xmin=0.5, xmax=5.5,
            ymin=58, ymax=66,
            xtick={1,2,3,4,5},
            ytick={58.0,60.0,62.0,64.0,66.0},
            xticklabel style = {font=\tiny, rotate=45, anchor=east},
            yticklabel style = {font=\scriptsize, rotate=0, anchor=east},
            xticklabels={0.5,0.8, 0.9,0.92, 0.95},
            yticklabels={58.0,60.0,62.0,64.0,66.0},
            every axis y label/.style={font=\footnotesize,at={(current axis.north west)},right=1mm,above=0mm},
            axis y line*=left,
            axis x line*=bottom,
            legend style={fill=none,font=\small,at={(0.02,0.99)},anchor=north west,draw=none},
        ]
        \addplot[line width=0.4mm, mark=diamond, color=myorange-new] 
            plot coordinates {
                (1, 59.8)
                (2, 60.8)
                (3, 61.4)
                (4, 63.7)
                (5, 64.1)
        };
        \end{axis}
        \begin{axis}[
            height=\columnwidth/2.4,
            width=\columnwidth/2.0,
            ylabel={\it \textcolor{mygreen-new}{Acc}},
            xmin=0.5, xmax=5.5,
            ymin=84, ymax=92,
            xtick={1,2,3,4,5},
            ytick={84,86,88,90,92},
             xticklabel style = {font=\tiny, rotate=45, anchor=east},
              yticklabel style = {font=\scriptsize, rotate=0, anchor=west},
            xticklabels={0.5,0.8, 0.9,0.92, 0.95},
            yticklabels={84.0,86.0,88.0,90.0,92.0},
            every axis y label/.style={font=\footnotesize,at={(current axis.north east)},left=1mm,above=0mm},
            axis y line*=right,
            axis x line=none,
            legend style={fill=none,font=\small,at={(0.98,0.99)},anchor=north east,draw=none},
        ]
        \addplot[line width=0.4mm, mark=o, color=mygreen-new]  
            plot coordinates {
                (1, 84.7)
                (2, 89.3)
                (3, 90.5)
                (4, 91.5)
                (5, 89.0)
        };
        \end{axis}
    \end{tikzpicture}
}
\vspace{-3mm}
\subfloat[Varying $\rho$]{
    \begin{tikzpicture}[scale=1,every mark/.append style={mark size=2.5pt}]
        \begin{axis}[
            height=\columnwidth/2.4,
            width=\columnwidth/2.0,
            ylabel={\it \textcolor{myorange-new}{Acc}},
            xmin=0.5, xmax=5.5,
            ymin=62.5, ymax=64.5,
            xtick={1,2,3,4,5},
            ytick={62.5,63.0,63.5,64.0,64.5},
            xticklabel style = {font=\tiny, rotate=45, anchor=east},
            yticklabel style = {font=\scriptsize, rotate=0, anchor=east},
            xticklabels={0.01,0.05, 0.1,0.5, 1.0},
            yticklabels={62.5,63.0,63.5,64.0,64.5},
            every axis y label/.style={font=\footnotesize,at={(current axis.north west)},right=1mm,above=0mm},
            axis y line*=left,
            axis x line*=bottom,
            legend style={fill=none,font=\small,at={(0.02,0.99)},anchor=north west,draw=none},
        ]
        \addplot[line width=0.4mm, mark=diamond, color=myorange-new] 
            plot coordinates {
                (1, 63.69)
                (2, 63.66)
                (3, 63.66)
                (4, 63.72)
                (5, 63.69)
        };
        \end{axis}
        \begin{axis}[
            height=\columnwidth/2.4,
            width=\columnwidth/2.0,
            ylabel={\it \textcolor{mygreen-new}{Acc}},
            xmin=0.5, xmax=5.5,
            ymin=90.0, ymax=92,
            xtick={1,2,3,4,5},
            ytick={90.0,90.5, 91.0,91.5,92.0},
             xticklabel style = {font=\tiny, rotate=45, anchor=east},
              yticklabel style = {font=\scriptsize, rotate=0, anchor=west},
            xticklabels={0.5,0.8, 0.9,0.92, 0.95},
            yticklabels={90.0,90.5, 91.0,91.5,92.0},
            every axis y label/.style={font=\footnotesize,at={(current axis.north east)},left=1mm,above=0mm},
            axis y line*=right,
            axis x line=none,
            legend style={fill=none,font=\small,at={(0.98,0.99)},anchor=north east,draw=none},
        ]
        \addplot[line width=0.4mm, mark=o, color=mygreen-new]  
            plot coordinates {
                (1, 91.47)
                (2, 91.38)
                (3, 91.48)
                (4, 91.45)
                (5, 91.40)
        };
        \end{axis}
    \end{tikzpicture}
    }
\subfloat[Varying $\beta$]{
    \begin{tikzpicture}[scale=1,every mark/.append style={mark size=2.5pt}]
        \begin{axis}[
            height=\columnwidth/2.4,
            width=\columnwidth/2.0,
            ylabel={\it \textcolor{myorange-new}{Acc}},
            xmin=0.5, xmax=8.5,
            ymin=62.0, ymax=64.0,
            xtick={1,2,3,4,5,6,7,8},
            ytick={62.0, 62.5 ,63.0, 63.5, 64.0},
            xticklabel style = {font=\tiny, rotate=45, anchor=east},
            yticklabel style = {font=\scriptsize, rotate=0, anchor=east},
            xticklabels={0.1, 0.5,1.0, 1.4, 1.5, 2.0,3.3, 5.0},
            yticklabels={62.0, 62.5 ,63.0, 63.5, 64.0},
            every axis y label/.style={font=\footnotesize,at={(current axis.north west)},right=1mm,above=0mm},
            axis y line*=left,
            axis x line*=bottom,
            legend style={fill=none,font=\small,at={(0.02,0.99)},anchor=north west,draw=none},
        ]
        \addplot[line width=0.4mm, mark=diamond, color=myorange-new] 
            plot coordinates {
                (1, 62.4)
                (2, 63.4)
                (3, 63.7)
                (4, 63.7)
                (6, 63.1)
        };
        \end{axis}
        \begin{axis}[
            height=\columnwidth/2.4,
            width=\columnwidth/2.0,
            ylabel={\it \textcolor{mygreen-new}{Acc}},
            xmin=0.5, xmax=8.5,
            ymin=90.0, ymax=92.0,
            xtick={1,2,3,4,5,6,7,8},
            ytick={90.0,90.5,91.0,91.5,92.0},
             xticklabel style = {font=\tiny, rotate=45, anchor=east},
              yticklabel style = {font=\scriptsize, rotate=0, anchor=west},
            xticklabels={0.5,0.8, 0.9,0.92, 0.95},
            yticklabels={90.0,90.5,91.0,91.5,92.0},
            every axis y label/.style={font=\footnotesize,at={(current axis.north east)},left=1mm,above=0mm},
            axis y line*=right,
            axis x line=none,
            legend style={fill=none,font=\small,at={(0.98,0.99)},anchor=north east,draw=none},
        ]
        \addplot[line width=0.4mm, mark=o, color=mygreen-new]  
            plot coordinates {
                (3, 90.5)
                (5, 90.9)
                (6, 91.1)
                (7, 91.5)
                (8, 91.0)
        };
        \end{axis}
    \end{tikzpicture}
    }
\vspace{-3mm}
\subfloat[Varying $\gamma$]{
    \begin{tikzpicture}[scale=1,every mark/.append style={mark size=2.5pt}]
        \begin{axis}[
            height=\columnwidth/2.4,
            width=\columnwidth/2.0,
            ylabel={\it \textcolor{myorange-new}{Acc}},
            xmin=0.5, xmax=9.5,
            ymin=61.0, ymax=65.0,
            xtick={1,2,3,4,5,6,7,8,9},
            ytick={60.0,61.0,62.0,63.0,64.0,65.0},
            xticklabel style = {font=\tiny, rotate=45, anchor=east},
            yticklabel style = {font=\scriptsize, rotate=0, anchor=east},
            xticklabels={0.01,0.0122,0.0125,0.0244,0.05,0.1,0.125,0.25,0.5},
            yticklabels={60.0,61.0,62.0,63.0,64.0,65.0},
            every axis y label/.style={font=\footnotesize,at={(current axis.north west)},right=1mm,above=0mm},
            axis y line*=left,
            axis x line*=bottom,
            legend style={fill=none,font=\small,at={(0.02,0.99)},anchor=north west,draw=none},
        ]
        \addplot[line width=0.4mm, mark=diamond, color=myorange-new] 
            plot coordinates {
                (1, 63.58)
                (3, 63.64)
                (5, 63.61)
                (7, 63.46)
                (8, 63.69)
        };
        \end{axis}
        \begin{axis}[
            height=\columnwidth/2.4,
            width=\columnwidth/2.0,
            ylabel={\it \textcolor{mygreen-new}{Acc}},
            xmin=0.5, xmax=9.5,
            ymin=88.0, ymax=92.0,
            xtick={1,2,3,4,5,6,7,8,9},
            ytick={88.0,89.0,90.0,91.0,92.0},
            xticklabel style = {font=\tiny, rotate=45, anchor=east},
            xticklabel style = {font=\tiny, rotate=0, anchor=west},
            xticklabels={0.01,0.0122,0.0125,0.0244,0.05,0.1,0.125,0.25,0.5},
            yticklabels={88.0,89.0,90.0,91.0,92.0},
            every axis y label/.style={font=\footnotesize,at={(current axis.north east)},left=1mm,above=0mm},
            axis y line*=right,
            axis x line=none,
            legend style={fill=none,font=\small,at={(0.98,0.99)},anchor=north east,draw=none},
        ]
        \addplot[line width=0.4mm, mark=o, color=mygreen-new]  
            plot coordinates {
                (2, 90.76)
                (4, 91.50)
                (6, 90.40)
                (8, 89.83)
                (9, 89.33)
        };
        \end{axis}
    \end{tikzpicture}
    }
\subfloat[ Varying $\lambda$]{
    \begin{tikzpicture}[scale=1,every mark/.append style={mark size=2.5pt}]
        \begin{axis}[
            height=\columnwidth/2.4,
            width=\columnwidth/2.0,
            ylabel={\it \textcolor{myorange-new}{Acc}},
            xmin=0.5, xmax=6.5,
            ymin=63.2, ymax=64.0,
            xtick={1,2,3,4,5,6},
            ytick={63.0,63.2,63.4,63.6,63.8, 64.0},
            xticklabel style = {font=\tiny, rotate=45, anchor=east},
            yticklabel style = {font=\scriptsize, rotate=0, anchor=east},
            xticklabels={0.01, 0.015, 0.05, 0.1, 0.5, 1.0},
            yticklabels={63.0,63.2,63.4,63.6,63.8, 64.0},
            every axis y label/.style={font=\footnotesize,at={(current axis.north west)},right=1mm,above=0mm},
            axis y line*=left,
            axis x line*=bottom,
            legend style={fill=none,font=\small,at={(0.02,0.99)},anchor=north west,draw=none},
        ]
        \addplot[line width=0.4mm, mark=diamond, color=myorange-new] 
            plot coordinates {
                (1, 63.3)
                (3, 63.4)
                (4, 63.7)
                (5, 63.8)
                (6, 63.4)
        };
        \end{axis}
        \begin{axis}[
            height=\columnwidth/2.4,
            width=\columnwidth/2.0,
            ylabel={\it \textcolor{mygreen-new}{Acc}},
            xmin=0.5, xmax=6.5,
            ymin=87.5, ymax=91.5,
            xtick={1,2,3,4,5,6},
            ytick={87.5, 88.5, 89.5, 90.5, 91.5},
             xticklabel style = {font=\tiny, rotate=45, anchor=east},
              yticklabel style = {font=\scriptsize, rotate=0, anchor=west},
            xticklabels= {0.01,0.015, 0.05,0.5, 1.0},
            yticklabels= {87.5, 88.5, 89.5, 90.5, 91.5},
            every axis y label/.style={font=\footnotesize,at={(current axis.north east)},left=1mm,above=0mm},
            axis y line*=right,
            axis x line=none,
            legend style={fill=none,font=\small,at={(0.98,0.99)},anchor=north east,draw=none},
        ]
        \addplot[line width=0.4mm, mark=o, color=mygreen-new]  
            plot coordinates {
                (1, 91.4)
                (2, 91.5)
                (3, 90.9)
                (4, 89.3)
                (5, 87.8)
        };
        \end{axis}
    \end{tikzpicture}
    }
\end{small}
 \vspace{-3mm}
\caption{Hyper-parameter Analysis.} 
\label{fig:hyperparameter}
\vspace{-1ex}
\end{figure}

\section{Related Work}
\stitle{Graph Data Distillation (GDD)}
Unlike conventional {\em graph reduction} approaches, such as {\em coreset selection}, {\em graph sparsification}, and {\em coarsening}, GDD achieves encouraging performance due to the mechanism of leveraging the gradients/representation of GNNs in downstream tasks for graph dataset synthesis, which can be roughly categorized into several types. Among gradient matching, \texttt{GCond} \cite{jin2021graph},  \texttt{DosCond} \cite{jin2022condensing} align gradient signals between original and condensed graphs to ensure model performance transfer. \texttt{SGDD} \cite{yang2023does}, and \texttt{GCSR} \cite{liu2024graph} are another two structure-aware gradient-based methods, preserving essential topology information. However, gradient matching needs nested optimization, making the balance between high condensation costs and low condensation quality difficult.  
Training trajectory matching approaches like \texttt{SFGC} \cite{zheng2024structure} preserve GNN learning consistency through long-term trajectory alignment but incur high computational costs in expert model training. Distribution matching methods (\texttt{GCDM} \cite{liu2022graph}, \texttt{SimGC} \cite{xiao2024simple}) iteratively align representation distributions between synthetic and original graphs, resulting in suboptimal efficiency. Eigen matching methods such as \texttt{GDEM} \cite{liugraph} maintain graph spectral properties through eigen-decomposition alignment, yet struggle with scalability for large graphs. Performance matching (\texttt{GC-SNTK}  \cite{wang2024fast}, \texttt{KiDD} \cite{xu2023kernel}) provide closed-form solutions via kernel regression but suffer from prohibitive memory demands for kernel matrix storage. Gradient-free methods like \texttt{CGC} \cite{gao2024rethinking} employ clustering-based feature matching but sacrifice task-specific adaptability by avoiding gradient optimization. While existing works explore diverse graph condensation paradigms, they exhibit limitations in computational efficiency (trajectory/eigen decomposition costs), memory consumption (kernel matrices), or optimization flexibility (lack of gradient guidance). These bottlenecks motivate us to design an efficient, structure-aware, and optimization-friendly GDD method.
\eat{\renchi{where is \cite{fang2024exgc}?}: Because EXGC is hard to grouped into traditional gradient matching, distribution matching, etc, and the code quality of EXGC is poor, We should probably not elaborate on EXGC}

\stitle{Data Distillation}
Dataset distillation~\cite{radosavovic2018data,sachdeva2023data,lei2023comprehensive} is a technique that compresses the knowledge of a large dataset into a smaller, synthetic dataset, enabling models to be trained with less data while maintaining comparable performance to models trained on the original dataset. It is not only applied to graph data, but also widely used in other fields e.g., such as visual datasets \cite{yin2024squeeze,yin2024dataset} and text datasets \cite{tao2024textual,li2021data}.For example, \texttt{DaLLME} \cite{tao2024textual} compresses two text datasets {\em IMDB} and {\em AG-News} to $0.1\%$ via language model embedding, with text classification loss decreases less than $10\%$.  
Designing unified data distillation methods for multi-modal data will be a promising direction. 
% \texttt{CDA} \cite{yin2024dataset} compresses {\em ImageNet-1K} to $16\%$ of its original size, with only about a $6\%$ loss in image classification accuracy.

\eat{\renchi{too short. only a definition? three or four more lines needed.}}

\eat{\renchi{rephrase ``Data Distillation'' in Sec 7 in \url{https://arxiv.org/pdf/2310.09202} or Sec 2 in \url{https://arxiv.org/pdf/2403.07294}.}}

\eat{
\stitle{Graph Data Augmentation}
{\em Graph data augmentation} (GDA) is a technique that systematically modifies graph elements (e.g., node attributes and edge connections) to enhance learning models' performance on downstream tasks. By generating diverse yet semantically consistent graph variations, these methods effectively improve model generalizability and robustness against structural noise and data scarcity. Technically, graph augmentation includes (1) Topology modification through edge addition/deletion \cite{zhao2021data,arnaiz2022diffwire}, weight adjustment \cite{karasuyama2017adaptive}, or subgraph sampling \cite{zheng2020robust}, and (2) Attribute transformation via noise injection \cite{velickovic2019deep}, feature masking \cite{thakoor2021large} or addition \cite{lai2024efficient,liu2022local}. A prominent application of GDA lies in graph self-supervised learning \cite{feng2020graph,zhu2021graph,zhang2023spectral,zhang2024graph}, where GDA creates multi-view graphs for calculating consistency loss or contrastive loss objectives. Current studies mainly focus on improving raw graph data through augmentation techniques.
\renchi{refine the writing.}
}

\eat{\renchi{Introdcue those not discussed in Section~\ref{sec:GDD-background} and more details. Mention \texttt{CGC}, how it works, and say although it is efficient, its performance is inferior due to XXX.}}
% \renchi{give a categorization based on Appendix C in \url{https://arxiv.org/pdf/2310.09202} and Section 2.2 in \url{https://arxiv.org/pdf/2405.13707}. Better to include the formulas therein. For each category, point out their limitations and why they are slow. Lastly, mention \texttt{CGC}, how it works, and say although it is efficient, its performance is inferior due to XXX.}

\eat{\renchi{More comprehensive reviews of existing works can be found in \cite{gao2025graph,xu2024survey}.}}

\section{Conclusion}
\label{conclusion}
This paper proposes a simple yet effective approach \algo{} for GDD. \algo{} achieves superior performance in condensation effectiveness and efficiency over previous GDD solutions on real datasets through two major contributions: a simple clustering method minimizing the WCSS and a lightweight module augmenting synthetic attributes with class-relevant features. As for future work, we intend to extend our solutions to handle graphs that are heterogeneous or from multiple modalities.

%%
%% The acknowledgments section is defined using the "acks" environment
%% (and NOT an unnumbered section). This ensures the proper
%% identification of the section in the article metadata, and the
%% consistent spelling of the heading.
\begin{acks}
This work is supported by the National Natural Science Foundation of China (No. 62302414), the Hong Kong RGC ECS grant (No. 22202623), and the Huawei Gift Fund.
\end{acks}

\balance

%%
%% The next two lines define the bibliography style to be used, and
%% the bibliography file.
% \pagebreak
\bibliographystyle{ACM-Reference-Format}
\bibliography{main}

%%
%% If your work has an appendix, this is the place to put it.
\clearpage
\newpage
\appendix
\section{A Detailed Discussion of Existing GDD Works}
\label{appendix:formalize_GDD}
We here provide a concise formalization of some representative GDD methods. 

\stitle{Gradient matching} e.g. \texttt{GCond} \cite{jin2021graph} optimizes the synthetic graphs by minimizing the difference between the gradients of GNNs on original and synthetic graphs. 
\begin{small}
\begin{equation*}
\begin{gathered}
\min_{\G^\prime} \mathbb{E}_{ {\boldsymbol{\Theta}} \sim \mathcal{P}_{\boldsymbol{\Theta}}} \left[ 
      \sum^{L}_{l=1} D\left( \nabla_{\boldsymbol{\Theta}} \mathcal{L}(\textsf{GNN}_{\boldsymbol{\Theta}}(\mathbf{A}', \mathbf{X}'), \mathbf{Y}'), \nabla_{\boldsymbol{\Theta}} \mathcal{L}(\textsf{GNN}_{\boldsymbol{\Theta}}(\mathbf{A}, \mathbf{X}), \mathbf{Y}) \right) \right] \\
s.t. \quad \boldsymbol{\Theta}_{\G^\prime} = \argmin{ \boldsymbol{\Theta}}{\mathcal{L}(\textsf{GNN}_{\boldsymbol{\Theta}}(\AM^\prime, \XM^\prime), \YM^\prime)}
\end{gathered}
\end{equation*}
\end{small}
where $L$ is the number of matching steps. In fact, \texttt{GCond} involves a nested optimization. It optimizes the synthetic dataset within the inner loop and trains the GNNs on the original graph in the outer loop, which makes the training process very time-consuming. 

\stitle{Distribution matching} \texttt{GCDM} \cite{liu2022graph} minimizes the distance between distribution of GNN each layer representations,
\begin{equation*}
\begin{gathered}
 \min_{\G^\prime} \mathbb{E}_{ {\boldsymbol{\Theta}} \sim \mathcal{P}_{\boldsymbol{\Theta}}} \left[
    \sum^{T}_{t=1} D\left(\textsf{GNN}_{\boldsymbol{\Theta}}(\mathbf{A}', \mathbf{X}'), \textsf{GNN}_{\boldsymbol{\Theta}}(\mathbf{A}, \mathbf{X}) \right)
    \right]  
\end{gathered}
\end{equation*}
where $T$ is the number of GNN layers. \texttt{GCDM} has lower complexity than \texttt{GCond}, which only needs to calculate the distance between the distributions of the original graph and the synthetic graph, while \texttt{GCond} needs to calculate gradients and update the model parameters in each iteration, which is more expensive. \texttt{GCDM} doesn't fully utilize label information and still needs alternative training on GNN parameter and synthetic graph, resulting in limited performance. 

\stitle{Trajectory matching} \texttt{SFGC} \cite{zheng2024structure} optimizes the synthetic graph by minimizing the difference between the trajectories of GNNs on original and synthetic graphs.
\begin{equation*}
\begin{gathered}
\min_{\G^\prime} \mathbb{E}_{ {\boldsymbol{\Theta}}^{*,i}_{t} \sim \mathcal{P}_{\boldsymbol{\Theta}_{\G}}} \left[ 
     \mathcal{L}_{meta-tt}( {\boldsymbol{\Theta}^{*}_{t}} {|^{p}_{t=t_0}} , {\tilde{\boldsymbol{\Theta}}_{t}} {|^{q}_{t=t_0}})
     \right] \\
s.t. \quad \tilde{\boldsymbol{\Theta}}^*_{\G^\prime} = \argmin{ \tilde{\boldsymbol{\Theta}}}{\mathcal{L}(\textsf{GNN}_{\tilde{\boldsymbol{\Theta}}}(\AM^\prime, \XM^\prime), \YM^\prime)}
\end{gathered}
\end{equation*}
where $\boldsymbol{\Theta}^{*}_{t}|^{p}_{t=t_0}$ are the parameters of $\textsf{GNN}$ on $\G$ in the training interval $[{\boldsymbol{\Theta}}^{*,i}_{t_0}, {\boldsymbol{\Theta}}^{*,i}_{t_0 + p}]$ and  $\tilde{\boldsymbol{\Theta}}_{t}|^{q}_{t=t_0}$ are the parameters of $\textsf{GNN}$ on $\G'$ in the training interval $[{ \tilde{\boldsymbol{\Theta}}}_{t_0}, \tilde{\boldsymbol{\Theta}}_{t_0 + q}]$. Training experts to obtain trajectories is time-consuming, and experts may occupy a large amount of memory.

\stitle{Eigen matching} \texttt{GDEM} \cite{liugraph} firstly conducts {\em singular value decomposition} (SVD) to the original adjacency matrix $\AM$ to get eigen values $\Lambda \in \mathbb{R}^{K}$ and eigen-basis $\UM \in \mathbb{R}^{N \times K}$. Then \texttt{GDEM} aims to synthetic $\XM'$, $\YM'$ and the synthetic eigen-basis $\UM'$, which can be formalized as below,
\begin{small}
    \begin{equation*}
\begin{gathered}
\min_{\G^\prime}
    \sum^{K}_{k=1} D\left( (\XM^{\top}\UM_k)(\XM^{\top}\UM_k)^{\top}, (\XM'^{\top}\UM'_k)(\XM'^{\top}\UM'_k)^{\top} \right) + D\left( \UM'\UM'^{\top}, \IM \right)  \\+
    D'\left(\YM^{\top}\AM\XM, \YM'^{\top}\sum^K_{k=1}(1-\Lambda_k)\UM'_k\UM'^{\top}_k\XM'\right) 
\end{gathered}
\end{equation*}
\end{small}
The primary computational cost of \texttt{GDEM} lies in the SVD decomposition. As the size of the graph increases, the time required for SVD decomposition grows rapidly, far exceeding the time needed for the matching process.

\stitle{Performance matching} \texttt{GC-SNTK} \cite{wang2024fast} matches the performance of models on original and synthetic graphs by {\em kernel ridge regression }(KRR), which obtains the closed-form solution without bi-level optimization process. It can be written as:  
\begin{equation*}
\begin{gathered}
 \min_{\G^\prime} \mathcal{L}_{pm} = \frac{1}{2}\|\mathcal{K}_{\G\G^\prime}(\mathcal{K}_{\G^\prime\G^\prime}+ \epsilon\IM)^{-1}\YM'- \YM\|^2_F
\end{gathered}
\end{equation*}
where $\mathcal{K}_{\G\G^\prime} \in \mathbb{R}^{N \times n}$ and $\mathcal{K}_{\G^\prime\G^\prime} \in \mathbb{R}^{n \times n}$ are two graph kernels, which can be obtained by iterative aggregation on the graph. The high computational and space demands of KRR for graph data mainly stem from the quadratic cost of computing graph kernels, the quadratic storage requirement for the kernel matrix, and the cubic computation needed to invert it, making KRR challenging for large-scale graph datasets.

\stitle{Training free} \texttt{CGC} \cite{gao2024rethinking} is a recently proposed method for fast and high quality graph condensation. It utilizes \texttt{SGC} feature propagation to get node embedding, and uses multi-layer embedding combination for label prediction via MSE. 
\begin{equation*}
\begin{gathered}
 \argmin{\WM} \|\frac{1}{K}\sum^K_{k=1}\NAM^k\XM\WM- \YM\|^2_2
\end{gathered}
\end{equation*}
which has a closed form solution of $\WM$:
$\hat{\WM} = (\frac{1}{T}\sum^T_{t=1}\NAM^t\XM)^{\dagger} \YM $

\texttt{CGC} doing clustering on the class-partitioned embedding $\HM$ to get class-wise aggregation $\RM^{(y)}$, then synthetic the condensed node embedding by $\HM'^{(y)} = \RM^{(y)}\HM^{(y)}$. Through on condensed embedding similarity, the condensed adjacency matrix $\AM'$ can be obtained. Finally, the synthetic node attribute $\XM'$ follows:
\begin{equation*}
\begin{gathered}
 \argmin{\XM'} \mathcal{L} = \|\NAM^T\XM'- \YM\|^2_2 + \alpha \textsf{tr}(\XM'^{\top}\LM'\XM')
\end{gathered}
\end{equation*}
which has the closed-form solution:
\begin{equation*}
\begin{gathered}
\XM' = ((\NAM'^T)^{\top}\NAM'^T + \alpha\LM')^{-1}{(\NAM'^T)}^{\top}\HM'
\end{gathered}
\end{equation*}.

A notable similarity between our \algo{} and \texttt{CGC} is that they both employ clustering to generate node attributes efficiently. However, compared to \texttt{CGC}, our \algo{} has some advantages. In terms of motivation, we choose clustering because we have discovered the inherent theoretical connection between clustering and FID, an indicator for measuring the quality of synthetic graphs, which is different from the starting point of \texttt{CGC}. In terms of specific implementation, there are several key differences. First, \texttt{CGC} incorporates  \texttt{SGC} propagation, which might miss some local information on large graph. Our method, on the other hand, uses the closed-form solution of GLS on the graph to balance local and global information, leading to better node attributes.
Second, to address the issue of heterophilic over-smoothing from clustering, we refine the synthetic node attributes quickly and effectively. While our method sacrifices some speed within an acceptable range, the trade-off is well worth it, as we maintain a higher level of effectiveness.

\begin{table*}[!t]
\caption{Hyperparameters of \algo{}}
\label{tab:hyperparams}
\vspace{-2ex}
\renewcommand{\arraystretch}{0.8}
\begin{small}
\begin{tabular}{ccc ccc ccc cccc c}
\toprule
{\bf Dataset} & {\bf Ratio} & $T$  &  $\alpha$  &  $E_1$  & $\beta$ &  $\rho$ &  $T'$  &  $E_3$  &  $\gamma$ & $\lambda$ & dropout & $H$ \\
\midrule
\multirow{3}{*}{\em Cora}   
& $1.3\%$   & $5$   & $0.8$    & $80$   & $0.01$    & $0.06$     & $2$  & $2000$ & $7.0$ & $0.1$ & $0.6$ & $256$   \\
& $2.6\%$    & $5$   & $0.8$    & $80$   & $0.01$    & $0.4$    & $2$ & $2000$  & $7.0$ & $0.1$ & $0.6$ & $256$ \\
& $5.2\%$   & $20$ & $0.8$ & $80$ & $0.01$ & $0.4$ & $2$ & $2000$ & $7.0$ & $0.1$ & $0.6$ & $256$ \\
\midrule
\multirow{3}{*}{\em Citeseer}   
& $0.9\%$ & $2$ & $0.8$ & $120$ & $0.01$ & $0.06$ & $1$ & $80$ & $6.0$ & $0.1$ & $0.7$ & $256$ \\
& $1.80\%$    & $2$   & $0.5$    & $120$   & $0.01$   & $0.21$ & $1$   & $200$  & $0.3$ & $0.1$ & $0.8$ & $128$ \\
& $3.60\%$   & $2$   & $0.5$    & $120$   & $0.01$    & $0.2$ & $1$   & $200$  & $5.4$  & $0.1$ & $0.7$ & $256$ \\
% & $1.80\%$    & 2   & 0.5    & 120   & 0.01   & 0.21 & 1   & 200  & 0.05 & 0.1 & 0.8 & 128 \\
% & $3.60\%$   & 2   & 0.5    & 120   & 0.01    & 0.2 & 1   & 200  & 0.9  & 0.1 & 0.7 & 256 \\
\midrule
\multirow{3}{*}{\em arXiv} 
% & $0.05\%$   & 20  & 0.92  & 1000   & 1.4   & 0.1 & 10   & 1000  & 0.0125 & 0.1 & 0.6 & 256   \\
% & $0.25\%$    & 18  & 0.91   & 1000   & 1.9   & 0.1 & 10 & 1000  & 0.025 & 0.1 & 0.6 & 256  \\
% & $0.50\%$    & 20  & 0.92   & 1000   & 1.0   & 0.1 & 10  & 1000  & 0.025 & 0.1 & 0.6  & 256 \\
& $0.05\%$   & $20$  & $0.92$  & $1000$   & $1.4$   & $0.1$ & $10$   & $1000$  & $0.5$ & $0.1$ & $0.6$ & $256$   \\
& $0.25\%$    & $18$  & $0.91$   & $1000$   & $1.9$   & $0.1$ & $10$ & $1000$  & $1.0$ & $0.1$ & $0.6$ & $256$  \\
& $0.50\%$    & $20$  & $0.92$   & $1000$   & $1.0$   & $0.1$ & $10$  & $1000$  & $1.0$ & $0.1$ & $0.6$  & $256$ \\
\midrule
\multirow{3}{*}{\em Flickr}     
% & $0.10\%$   & 2  & 0.8  & 2000   & 0.2   & 0.5 & 2   & 2000  & 0.8 & 0.8 & 0.6 & 256   \\
% & $0.50\%$    & 2  & 0.8   & 2000   & 0.2   & 0.5 & 1 & 2000  & 0.8 & 1.0 & 0.6 & 256  \\
% & $1.00\%$    & 2  & 0.8   & 2000   & 0.2   & 0.5 & 2  & 2000  & 0.8 & 0.8 & 0.6 & 256  \\
& $0.10\%$   & $2$  & $0.8$  & $2000$   & $0.2$   & $0.5$ & $2$   & $2000$  & $5.6$ & $0.8$ & $0.6$ & $256$   \\
& $0.50\%$    & $2$  & $0.8$   & $2000$   & $0.2$   & $0.5$ & $1$ & $2000$  & $5.6$ & $1.0$ & $0.6$ & $256$  \\
& $1.00\%$    & $2$  & $0.8$   & $2000$   & $0.2$   & $0.5$ & $2$  & $2000$  & $5.6$ & $0.8$ & $0.6$  & $256$ \\
\midrule
\multirow{3}{*}{\em Reddit}   
& $0.05\%$   & $10$  & $0.92$  & $800$   & $3.5$   & $0.1$ & $7$   & $600$  & $1.6$ & $0.015$ & $0.6$ & $512$  \\
& $0.10\%$   & $10$  & $0.92$   & $800$   & $3.3$   & $0.1$ & $7$ & $600$  & $1.0$ & $0.015$ & $0.6$ & $512$  \\
& $0.50\%$   & $20$  & $0.95$   & $800$   & $0.8$   & $0.1$ & $10$  & $1000$  & $2.1$ & $0.01$ & $0.6$ & $256$  \\
% & $0.05\%$   & 10  & 0.92  & 800   & 3.5   & 0.1 & 7   & 600  & 0.038 & 0.015 & 0.6 & 512  \\
% & $0.10\%$   & 10  & 0.92   & 800   & 3.3   & 0.1 & 7 & 600  & 0.024 & 0.015 & 0.6 & 512  \\
% & $0.50\%$   & 20  & 0.95   & 800   & 0.8   & 0.1 & 10  & 1000  & 0.05 & 0.01 & 0.6 & 256  \\
\bottomrule
\end{tabular}
\end{small}
\end{table*}

\section{Time Complexity Analysis of Graph Distillation Methods}
\label{sec:appendix-time-com}

We here calculate some graph distillation methods for comparison. For \texttt{GCond}\cite{jin2021graph}, let $T$ is the number of GNN layers, and $r$ is the number of sampled neighbors per node. Denote the outer-loop be $L_o$ and inner loop as $L_i$, the training epoch be $E$. The total complexity of \texttt{GCond} is $\mathcal{O}\big(EL_oL_ir^TNd^2 + EL_oL_in^2d^2\big)$. 

% To simplify, we abbreviate $L_o$, $L_i$ as $L$, the time complexity of \texttt{GCond} is $\mathcal{O}\big(EL^2r^TNd^2 + EL^2n^2d^2\big)$. 

For \texttt{SFGC}, let the product of the number of experts $L_n$ and the training epochs of experts $E$, and the number of outer-loop $L_o$ and the inner loop be $L_i$. The total complexity is $\mathcal{O}\big(EL_nMdT+EL_nNd^2T+L_oL_ind^2T\big)$. 

% To simplify, we abbreviate $L_n$, $L_t$, $L_o$ and $L_i$ as $L$,the time complexity of \texttt{SFGC} is $\mathcal{O}\big(L^2MdT+L^2Nd^2T+L^2nd^2T\big)$. \taiyan{$L_n$ and $L_t$ is different and other papers all do the differentiate. }

For \texttt{GCSR}, denote the number of experts as $L_n$, the training epochs of experts as $E_t$, the number of meta matching $L_o$ and GNNs training epochs be $E_i$, the total complexity is $\mathcal{O}\big( E_tL_nr^TNd^2 + E_iL_ond^2T +n^3 \big)$. 

% Similar to the above, the time complexity of \texttt{GCSR} can be written as $\mathcal{O}\big( L^2r^TNd^2 + L^2nd^2T +n^3 \big)$. \taiyan{omit one term, $\mathcal{O}\big(n^2d)$}

For \texttt{GDEM}, let the eigen basis number be $N_k$, the eigen-basis matching training epoch is $E$, the time complexity of \texttt{GDEM} can be written as $\mathcal{O}\big( N_kN^2 + N_kNd + Md + N_kEn^2 + N_kEd^2 \big)$. 

% \taiyan{Refer to the paper's analysis, the complexity analysis omit one term $\mathcal{O}\big(N_kLnd)$. See the original paper's analysis in page 12} 

To simplify, we abbreviate $L_n$, $L_t$, $L_o$ and $L_i$ as $L$, and $E, E_t, E_i$ as $E$. \algo{} has better time complexity in terms of polynomial order ($\mathcal{O}\big( ENnd + n^2d + Md + M\log M \big)$). Compared to \texttt{GCond}, \texttt{SFGC}, \texttt{GCSR}, and \texttt{GDEM}, the complexity terms of \algo{} do not include higher-order terms such as $EL$, $L^2$ or $N_kN^2$, which can lead to significantly higher computational costs as the scale of the data increases. Therefore, \algo{} outperforms other algorithms in terms of efficiency when dealing with large-scale graph data, especially when $N$ and $d$ are large. 

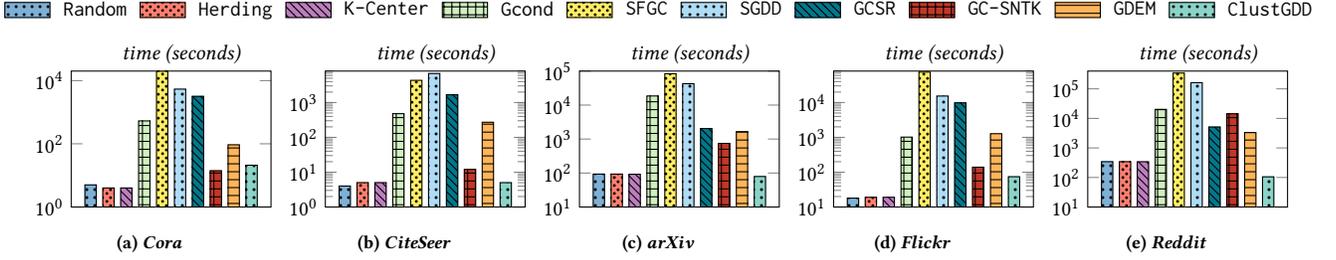
\begin{figure*}
\centering
\begin{small}
\begin{tikzpicture}
\begin{customlegend}[
        legend entries={\texttt{Random},\texttt{Herding}, \texttt{K-Center}, \texttt{Gcond},{\texttt{SFGC}}, {\texttt{SGDD}}, {\texttt{GCSR}}, {\texttt{GC-SNTK}}, {\texttt{GDEM}},\algo{}},
        legend columns=10,
        area legend,
        legend style={at={(0.45,0.65)},anchor=north,draw=none,font=\small,column sep=0.1cm,
        }]
        \addlegendimage{preaction={fill, myblue}, pattern=dots} % 
        \addlegendimage{preaction={fill, myred}, pattern={crosshatch dots}} %
        \addlegendimage{preaction={fill, mypurple}, pattern=north west lines} %
        \addlegendimage{preaction={fill, mygreen}, pattern={grid}}  
        \addlegendimage{preaction={fill, myyellow}, pattern={crosshatch dots}}
        \addlegendimage{preaction={fill, myblue2}, pattern=dots} %
        \addlegendimage{preaction={fill, mygreen-new}, pattern=north west lines}
        \addlegendimage{preaction={fill, myred2}, pattern=grid} %
        \addlegendimage{preaction={fill, myorange}, pattern=horizontal lines} 
        \addlegendimage{preaction={fill, mycyan}, pattern=dots} 
    \end{customlegend}
\end{tikzpicture}
\\[-\lineskip]
\vspace{-2ex}
\subfloat[{\em Cora}]{
\begin{tikzpicture}[scale=1]
\begin{axis}[
    height=\columnwidth/2.5,
    width=\columnwidth/2.0,
    xtick=\empty,
    ybar=2.5pt,
    bar width=0.15cm,
    enlarge x limits=true,
    ylabel={\em time (seconds)},
    xticklabel=\empty,
    ymin=1,
    ymax=20000,
    log origin y=infty,
    log basis y={10},
    ymode=log,
    xticklabel style = {font=\small},
    yticklabel style = {font=\small},
    every axis y label/.style={font=\small,at={{(0.25,1.0)}},right=8mm,above=0mm},
    ]
\addplot [preaction={fill, myblue}, pattern={dots}] coordinates {(1,5) }; % 
\addplot [preaction={fill, myred}, pattern={crosshatch dots}] coordinates {(1,4) }; % 
\addplot [preaction={fill, mypurple}, pattern={north west lines}] coordinates {(1,4) }; % 
\addplot [preaction={fill, mygreen}, pattern={grid}] coordinates {(1,529) }; 
\addplot [preaction={fill, myyellow}, pattern={crosshatch dots}] coordinates {(1,19768) }; 
\addplot [preaction={fill, myblue2}, pattern=dots] coordinates {(1,5372) };% 
\addplot [preaction={fill, mygreen-new}, pattern=north west lines] coordinates {(1,3166) };
\addplot [preaction={fill, myred2}, pattern={grid}] coordinates {(1,14) }; % 
\addplot [preaction={fill, myorange}, pattern=horizontal lines] coordinates {(1,93) }; 
\addplot [preaction={fill, mycyan}, pattern=dots] coordinates {(1,21) }; 

\end{axis}
\end{tikzpicture}
}
\subfloat[{\em CiteSeer}]{
\begin{tikzpicture}[scale=1]
\begin{axis}[
    height=\columnwidth/2.5,
    width=\columnwidth/2.0,
    xtick=\empty,
    ybar=2.5pt,
    bar width=0.15cm,
    enlarge x limits=true,
    ylabel={\em time (seconds)},
    xticklabel=\empty,
    ymin=1,
    ymax=8000,
    log origin y=infty,
    log basis y={10},
    ymode=log,
    xticklabel style = {font=\small},
    yticklabel style = {font=\small},
    every axis y label/.style={font=\small,at={{(0.25,1.0)}},right=8mm,above=0mm},
    ]

\addplot [preaction={fill, myblue}, pattern={dots}] coordinates {(1,4) }; % 
\addplot [preaction={fill, myred}, pattern={crosshatch dots}] coordinates {(1,5) }; % 
\addplot [preaction={fill, mypurple}, pattern={north west lines}] coordinates {(1,5) }; % 
\addplot [preaction={fill, mygreen}, pattern={grid}] coordinates {(1,479) }; 
\addplot [preaction={fill, myyellow}, pattern={crosshatch dots}] coordinates {(1,4334) }; 
\addplot [preaction={fill, myblue2}, pattern=dots] coordinates {(1,6795) };% 
\addplot [preaction={fill, mygreen-new}, pattern=north west lines] coordinates {(1,1670) };
\addplot [preaction={fill, myred2}, pattern={grid}] coordinates {(1,12) }; % 
\addplot [preaction={fill, myorange}, pattern=horizontal lines] coordinates {(1,271) }; 
\addplot [preaction={fill, mycyan}, pattern=dots] coordinates {(1,5) };

\end{axis}
\end{tikzpicture}
}
\subfloat[{\em arXiv}]{
\begin{tikzpicture}[scale=1]
\begin{axis}[
    height=\columnwidth/2.5,
    width=\columnwidth/2.0,
    xtick=\empty,
    ybar=2.5pt,
    bar width=0.15cm,
    enlarge x limits=true,
    ylabel={\em time (seconds)},
    xticklabel=\empty,
    ymin=10,
    ymax=100000,
    log origin y=infty,
    log basis y={10},
    ymode=log,
    ytick={10,100,1000,10000,100000},
    xticklabel style = {font=\small},
    yticklabel style = {font=\small},
    every axis y label/.style={font=\small,at={{(0.25,1.0)}},right=8mm,above=0mm},
    ]

\addplot [preaction={fill, myblue}, pattern={dots}] coordinates {(1,93) }; % 
\addplot [preaction={fill, myred}, pattern={crosshatch dots}] coordinates {(1,93) }; % 
\addplot [preaction={fill, mypurple}, pattern={north west lines}] coordinates {(1,92) }; % 
\addplot [preaction={fill, mygreen}, pattern={grid}] coordinates {(1,18509) }; 
\addplot [preaction={fill, myyellow}, pattern={crosshatch dots}] coordinates {(1,83441) }; 
\addplot [preaction={fill, myblue2}, pattern=dots] coordinates {(1,42508) };% 
\addplot [preaction={fill, mygreen-new}, pattern=north west lines] coordinates {(1,2037) };
\addplot [preaction={fill, myred2}, pattern={grid}] coordinates {(1,745) }; % 
\addplot [preaction={fill, myorange}, pattern=horizontal lines] coordinates {(1,1647) }; 
\addplot [preaction={fill, mycyan}, pattern=dots] coordinates {(1,79) }; 

\end{axis}
\end{tikzpicture}
}%
\subfloat[{\em Flickr}]{
\begin{tikzpicture}[scale=1]
\begin{axis}[
    height=\columnwidth/2.5,
    width=\columnwidth/2.0,
    xtick=\empty,
    ybar=2.5pt,
    bar width=0.15cm,
    enlarge x limits=true,
    ylabel={\em time (seconds)},
    xticklabel=\empty,
    ymin=10,
    ymax=80000,
    log origin y=infty,
    log basis y={10},
    ymode=log,
    xticklabel style = {font=\small},
    yticklabel style = {font=\small},
    every axis y label/.style={font=\small,at={{(0.25,1.0)}},right=8mm,above=0mm},
    ]

\addplot [preaction={fill, myblue}, pattern={dots}] coordinates {(1,18) }; % 
\addplot [preaction={fill, myred}, pattern={crosshatch dots}] coordinates {(1,19) }; % 
\addplot [preaction={fill, mypurple}, pattern={north west lines}] coordinates {(1,19) }; % 
\addplot [preaction={fill, mygreen}, pattern={grid}] coordinates {(1,1015) }; 
\addplot [preaction={fill, myyellow}, pattern={crosshatch dots}] coordinates {(1,74780) }; 
\addplot [preaction={fill, myblue2}, pattern=dots] coordinates {(1,15355) };% 
\addplot [preaction={fill, mygreen-new}, pattern=north west lines] coordinates {(1,9783) };
\addplot [preaction={fill, myred2}, pattern={grid}] coordinates {(1,139) }; % 
\addplot [preaction={fill, myorange}, pattern=horizontal lines] coordinates {(1,1273) }; 
\addplot [preaction={fill, mycyan}, pattern=dots] coordinates {(1,74) }; 

\end{axis}
\end{tikzpicture}
}
\subfloat[{\em Reddit}]{
\begin{tikzpicture}[scale=1]
\begin{axis}[
    height=\columnwidth/2.5,
    width=\columnwidth/2.0,
    xtick=\empty,
    ybar=2.5pt,
    bar width=0.15cm,
    enlarge x limits=true,
    ylabel={\em time (seconds)},
    xticklabel=\empty,
    ymin=10,
    ymax=400000,
    log origin y=infty,
    log basis y={10},
    ymode=log,
    ytick={10,100,1000,10000,100000},
    xticklabel style = {font=\small},
    yticklabel style = {font=\small},
    every axis y label/.style={font=\small,at={{(0.25,1.0)}},right=8mm,above=0mm},
    ]

\addplot [preaction={fill, myblue}, pattern={dots}] coordinates {(1,342) }; % 
\addplot [preaction={fill, myred}, pattern={crosshatch dots}] coordinates {(1,342) }; % 
\addplot [preaction={fill, mypurple}, pattern={north west lines}] coordinates {(1,340) }; % 
\addplot [preaction={fill, mygreen}, pattern={grid}] coordinates {(1,20220) }; 
\addplot [preaction={fill, myyellow}, pattern={crosshatch dots}] coordinates {(1,342672) }; 
\addplot [preaction={fill, myblue2}, pattern=dots] coordinates {(1,162093) };% 
\addplot [preaction={fill, mygreen-new}, pattern=north west lines] coordinates {(1,5096) };
\addplot [preaction={fill, myred2}, pattern={grid}] coordinates {(1,14317) }; % 
\addplot [preaction={fill, myorange}, pattern=horizontal lines] coordinates {(1,3298) }; 
\addplot [preaction={fill, mycyan}, pattern=dots] coordinates {(1,105) }; 

\end{axis}
\end{tikzpicture}
}
\end{small}
\vspace{-2ex}
\caption{Computational time comparison } \label{fig:efficiency}
\vspace{-1ex}
\end{figure*}

\section{Datasets and Implementations Details}\label{sec:more-data-imp-details}

\subsection{Dataset Deatils}\label{sec:dataset-detail}
% Here we give detailed information of the datasets we use. 
\stitle{{\em Cora}}~\cite{sen2008collective} is a citation network consisting of 2,708 papers with 5,429 citation links, where nodes represent papers and edges for citations. The node attributes are 1433-dimensional vectors obtained by using the bag-of-words representation of the abstract. It has 7 categories of different paper topics. {\em Cora} has a training set of 140 nodes ($5.2\%$), a validation set of 500 nodes ($18.5\%$), and a test set of 1,000 nodes ($36.9\%$) for transductive node classification, with the remaining nodes typically used for unsupervised or semi-supervised learning. 

\stitle{{\em Citeseer}}~\cite{sen2008collective} is another citation network. Similar to {\em Cora}, it contains papers and citations of computer science, with 3,327 nodes and 4,732 edges. It has 3,703-dimensional attributes and 6 topic classes. {\em Citeseer} includes a training set of 120 nodes ($3.6\%$), a validation set of 500 nodes ($15.0\%$), and a test set of 1,000 nodes ($30.0\%$) for transductive node classification. 

\stitle{{\em arXiv}}~\cite{hu2020open} is from the Open Graph Benchmark (OGB). It includes 169,343 {\em arXiv} papers divided into 40 subjects, and 1,166,243 citations. It has 128-dimensional Word2Vec node attributes.  
The {\em arXiv} dataset is for transductive node classification with temporally splits. Its training set consisting of papers published up to 2017 (90,941 nodes, $53.7\%$), the validation set consisting of papers in 2018 (29,799 nodes, $17.6\%$), and the test set including papers from 2019 (48,603 nodes, $28.7\%$), emphasizing temporal generalization. 

\stitle{{\em Flickr}}~\cite{zeng2019graphsaint} is a social network consisting of 89,250 users and 899,75\\6 interactions. In this network, nodes stand for users, edges indicate follower relationships, and the node attributes are 500-dimensional visual features extracted from the images uploaded by users. The nodes can be categorized into 7 classes. {\em Flickr} is randomly split into a training set of 44,625 nodes ($50\%$), a validation set of 22,312 nodes ($25\%$), and a test set of 22,313 nodes ($25\%$), used for evaluating inductive node classification task. 

\stitle{{\em Reddit}}~\cite{hamilton2017inductive} is another social network, with 232,965 users and 11,606,919 interactions. The users are grouped into 41 sub{\em Reddit} categories. User activity and text  are represented by 602-dimensional node attributes.The {\em Reddit} dataset is also randomly divided, with a training set of 152,410 nodes ($65.4\%$), a validation set of 23,699 nodes ($10.2\%$), and a test set of 55,334 nodes ($23.7\%$), designed for large-scale inductive node classification tasks.

\subsection{Implementation Details}\label{sec:implement-detail}
We implement the GNN models and graph distillation by PyTorch Geometric. We collect graphs {\em Cora}, {\em Citeseer}, {\em arXiv}, {\em Flickr} and {\em Reddit} from as \texttt{GCond}. We obtain the synthetic graphs generated by \texttt{GCond}\footnote{\url{https://github.com/ChandlerBang/GCond}},
\texttt{SFGC}\footnote{\url{https://github.com/Amanda-Zheng/SFGC}},
\texttt{GCSR}\footnote{\url{https://github.com/zclzcl0223/GCSR}}, and
\texttt{GDEM}\footnote{\url{https://github.com/liuyang-tian/GDEM}}. Main experiments are conducted on a Linux machine equipped with an Intel(R) Xeon(R) Gold 6226 CPU @ 2.70GHz and a 32 GB Nvidia Tesla V100 GPU. 

We set learning rate of GNNs to $0.01$, the weight decay is $5e-4$. For ease of training, $\WM$ and $\WM'$ are two three-layer linear layers, with the middle layer having a dimension of $H$ is in $\{128,256,512\}$. The dropout is in $\{0.6, 0.7, 0.8\}$. 

We implement K-Means based on the Sklearn, the default maximum number of iterations $E_2$ for the K-means algorithm is $300$. The algorithm will stop iterating if it converges earlier, meaning the change in cluster centers falls below the specified tolerance threshold, which is set to $1e-4$ by default. For large datasets {\em arXiv} and {\em Reddit}, we use Mini-Batch K-Means as an alternative, the batch size is set to $1000$. 

In the distillation stage, we conduct an extensive search to identify the optimal hyper-parameters. Specifically, we explore the following ranges for each hyper-parameter: the propagation times of the closed-form solution of GLS in clustering \( T \) are set to \(\{2, 5, 10, 15, \\18, 20, 25\}\); the coefficient in the propagation \(\alpha\) is varied across \(\{0.5, 0.8, 0.9, 0.91, 0.92, 0.95, 0.98\}\); the number of pretraining epochs \( E_1 \) is tested with values in \(\{80, 100, 120, 800, 1000, 2000\}\); the combination weight of the refinement vector \(\beta\) is adjusted within the range \([0, 4]\); the propagation times in the refinement stage \( T'\) are set to \(\{1, 2, 5, 7, 10\}\); the sampling rate in multi-view sparsification \(\rho\) is varied within the range \([0, 0.5]\); the number of refinement epochs \( E_2 \) is tested with values in \(\{80, 100, 120, 800, 1000, 2000\}\); the weight of the supervision loss on the multi-view condensed graphs \(\gamma\)  is adjusted within the range \([0, 10]\) and the weight of the consistency loss \(\lambda\) is adjusted within the range \([0, 1]\). This comprehensive search strategy allow us to fine-tune the model's performance by systematically evaluating the impact of each hyper-parameter on the overall results. We record the hyper-parameters in Table~\ref{tab:hyperparams}. 

For the evaluation stage, the evaluation model is set to \texttt{GCN} by default, which has two layers with $256$ hidden dimensions, and dropout$=0.5$, learning rate$=0.01$, weight decay$=1e-5$, the training epochs is $600$. In the generalization test, other GNN models have the same hidden dimensions, number of layers, learning rate, and dropout as \texttt{GCN}.

\section{Additional Experiments}\label{sec:add-exp}

\subsection{Detailed Dataset Synthesis Time}

In Fig.~\ref{fig:efficiency}, we display the time costs required for GDD by \algo{} and the other four competitive baselines (i.e., \texttt{Gcond}, \texttt{SFGC}, \texttt{GCSR}, and \texttt{GDEM}) on all five datasets with a condensation ratio of $2.6\%$, $1.8\%$, $0.05\%$, $0.10\%$, and $0.10\%$, respectively. The results for other condensation ratios are quantitatively similar and thus are omitted. The $x$-axis corresponds to different GDD methods, while the $y$-axis represents the condensation time in seconds (s) in a logarithmic scale. As shown in Fig.~\ref{fig:efficiency}, the computational time needed by \algo{} is significantly (often orders of magnitude) less than those by the competitors on all datasets. For example, on {\em Citeseer}, \algo{} takes about 5s, which stands in stark contrast to the 479s needed by \texttt{Gcond} and 4334s for \texttt{SFGC}. On the {\em arXiv} dataset, the running time of \algo{} is around 79s, while the others are up to more than 10,000 seconds.

\begin{table}[H]
    \centering
    \caption{Statistics of original and condensed datasets.}
\vspace{-2ex}
\resizebox{\linewidth}{!}{
    \begin{tabular}{ccccc}
        \toprule
        {\em Cora} & Original & \texttt{GCond} & \texttt{GCSR} & \algo{}  \\
        \midrule
        Nodes & 2,708 & 70 & 70 & 70 \\
        Edges & 5,429  & 4830 & 4900 & 1384 \\
        Avg Degree & 2.00  & 69.0 & 70.0 & 19.77  \\
        Weighted Homophily & 0.81 & 0.53 & 0.94 & 0.77  \\
        Storage(MB) & 14.90 & 0.40  & 0.40 & 0.40   \\
        \bottomrule
    \end{tabular}
}
\resizebox{\linewidth}{!}{
    \begin{tabular}{ccccc}
        \toprule
        {\em arXiv}  & Original & \texttt{GCond} & \texttt{GCSR} & \algo{}  \\
        \midrule
        Nodes & 169,343   & 90 & 90 & 90   \\
        Edges & 1,166,243  & 8010 & 8084 & 5384  \\
        Avg Degree & 6.89  & 89.00 & 89.82 & 59.82  \\
        Weighted Homophily & 0.65 & 0.08 & 0.08 & 0.58  \\
        Storage(MB) & 100.40 & 0.08  & 0.08 & 0.08  \\
        \bottomrule
    \end{tabular}
}
\resizebox{\linewidth}{!}{
    \begin{tabular}{ccccc}
        \toprule
        {\em Reddit}  & Original & \texttt{GCond} & \texttt{GCSR} & \algo{}  \\
        \midrule
        Nodes & 232,965 & 153 & 153 & 153  \\
        Edges & 57,307,946  & 9,190  & 23,409 & 18,093 \\
        Avg Degree & 245.99  & 60.07 & 153.00 & 118.25  \\
        Weighted Homophily & 0.78 & 0.10 & 0.21 & 0.65  \\
        Storage(MB) & 435.50 & 0.44 & 0.44  & 0.44  \\
        \bottomrule
    \end{tabular}
}
\label{tab:stat}
\vspace{-1ex}
\end{table}

\subsection{Statistics of the Synthetic Graphs}
% Fig.~\ref{fig:Acc-FID}
As shown in Table~\ref{tab:stat}, we compare some statistics of the original datasets and synthetic graphs obtained by different graph distillation methods, \texttt{GCond}, \texttt{GCSR}, and \algo{}, including the number of nodes, edges, average degree, weighted homophily, and the memory required for storage. It should be noted that both \texttt{GCond} and \texttt{GCSR} provide synthetic graphs that have not been sparsified. These methods significantly reduce the number of nodes and edges, thereby decreasing storage requirements. For larger datasets, such as {\em arXiv} and {\em Reddit}, the reduction in storage is particularly noticeable. We also compare the weighted homophily of different graphs, which is the ratio of the sum of the weights of all edges between same class nodes to the sum of the weights of all edges. We find that \algo{} is closer to the weighted homophily of the original graph than \texttt{GCond} and \texttt{GCSR}, reflecting that our synthetic graph retains key information about the topology of the original graph.

\begin{figure}[H]
    \centering
    \subfloat[{\em Cora} w/o prop]{%
        \includegraphics[width=0.16\textwidth]{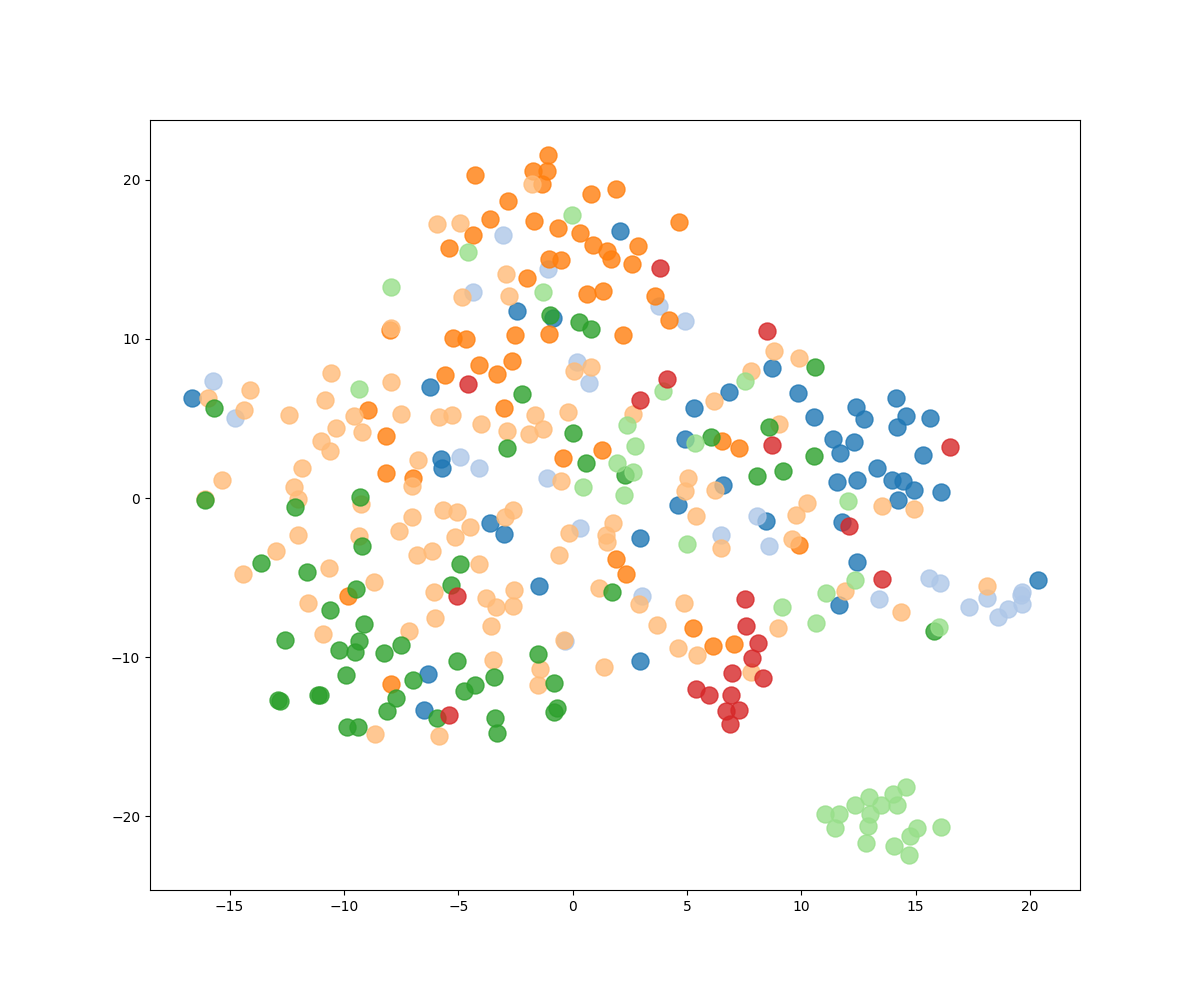}
    }
    \subfloat[{\em Cora} w prop]{%
        \includegraphics[width=0.16\textwidth]{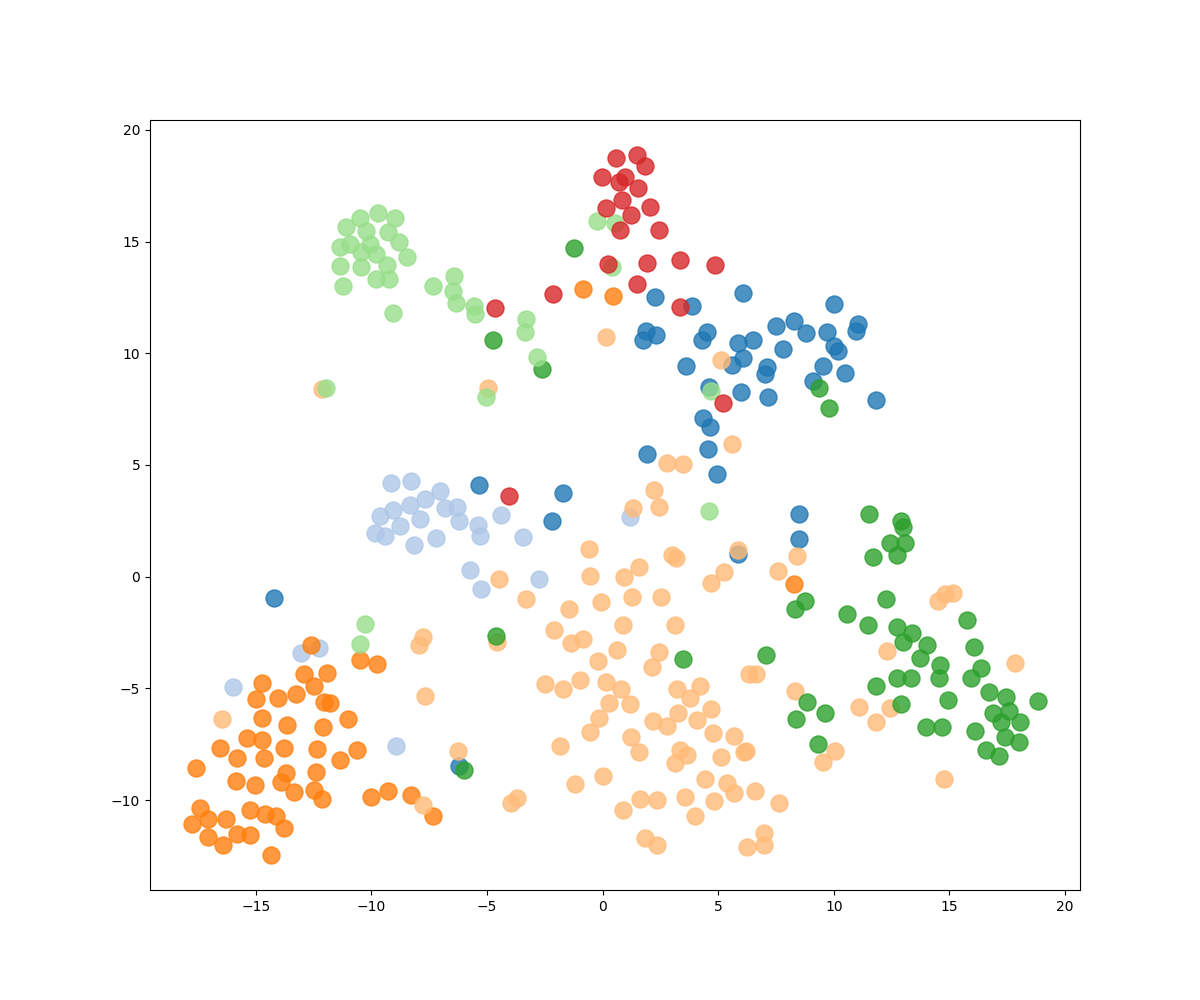}
    }
    \subfloat[{\em Cora} syn]{%
        \includegraphics[width=0.16\textwidth]{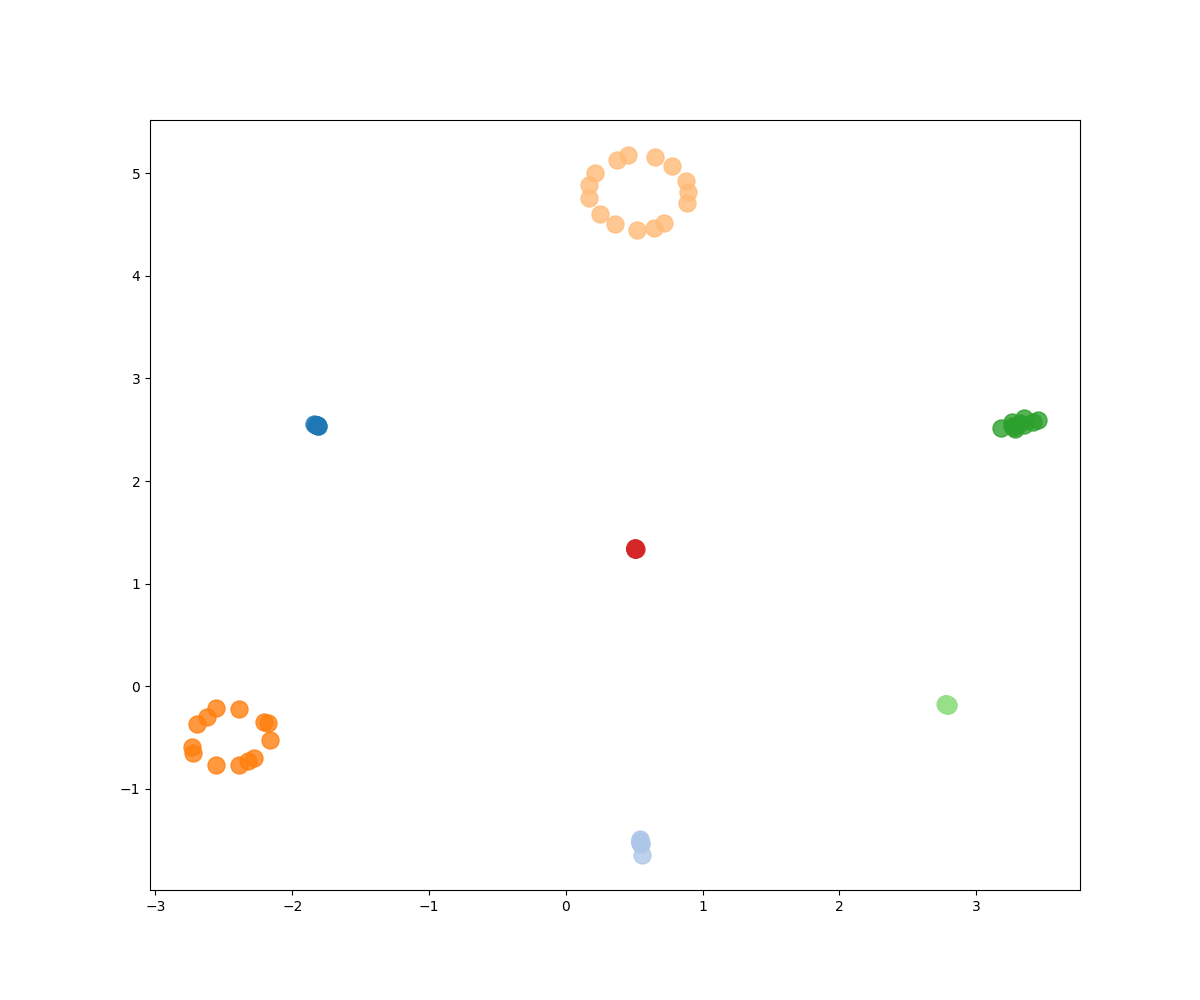}
    }

    \subfloat[{\em arXiv} w/o prop]{%
        \includegraphics[width=0.16\textwidth]{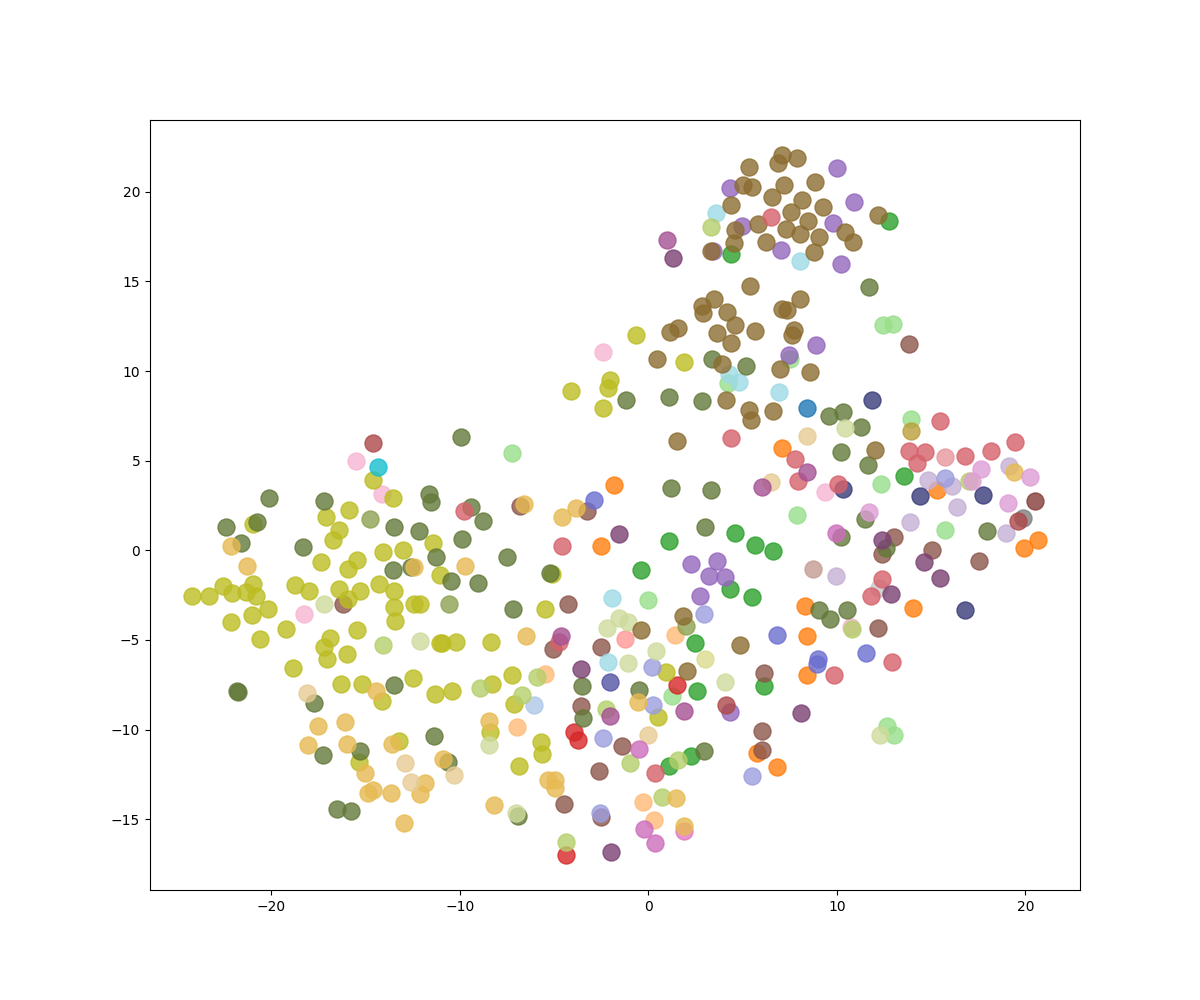}
    }
    \subfloat[{\em arXiv} w prop]{%
        \includegraphics[width=0.16\textwidth]{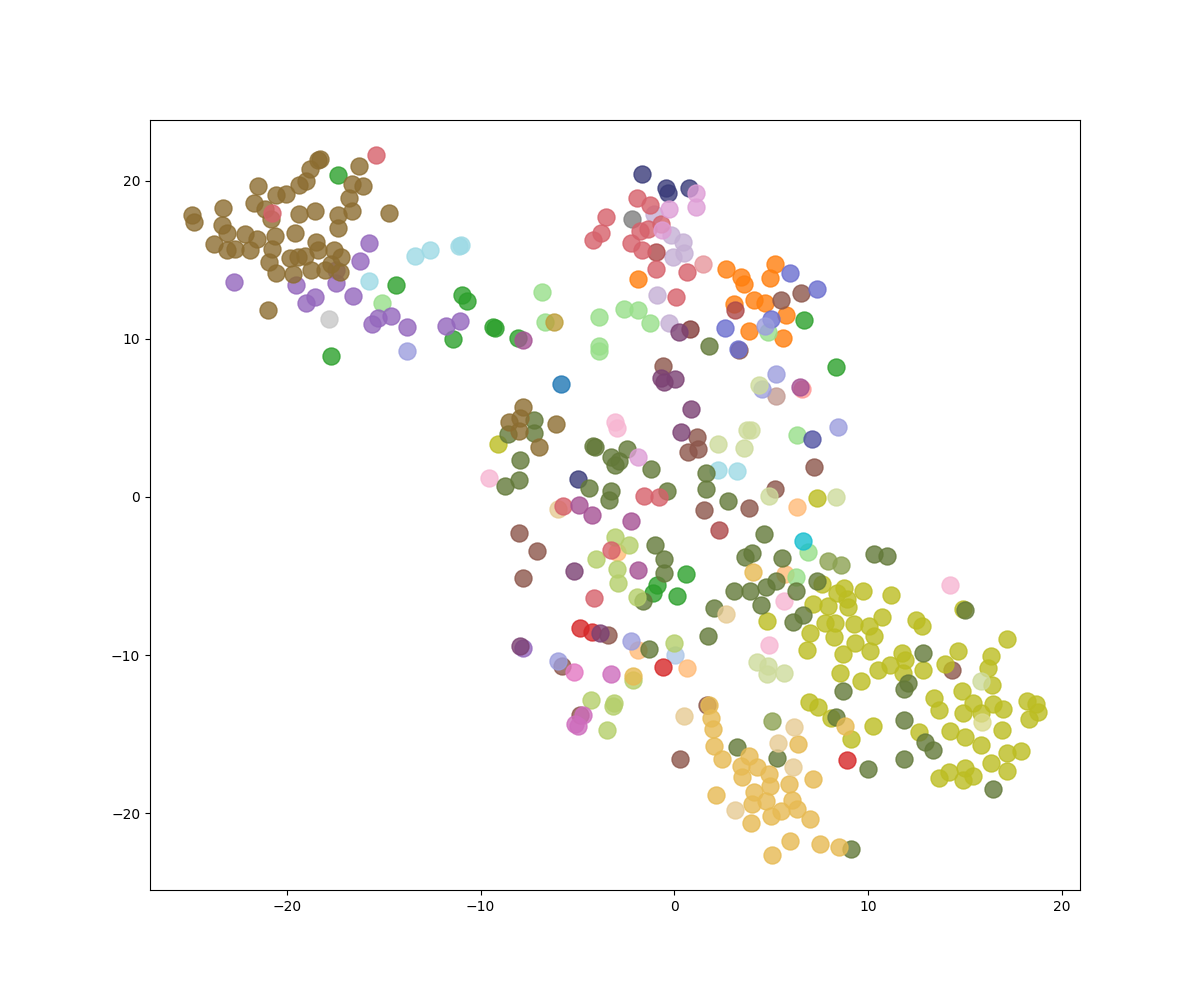}
    }
    \subfloat[{\em arXiv} syn]{%
        \includegraphics[width=0.16\textwidth]{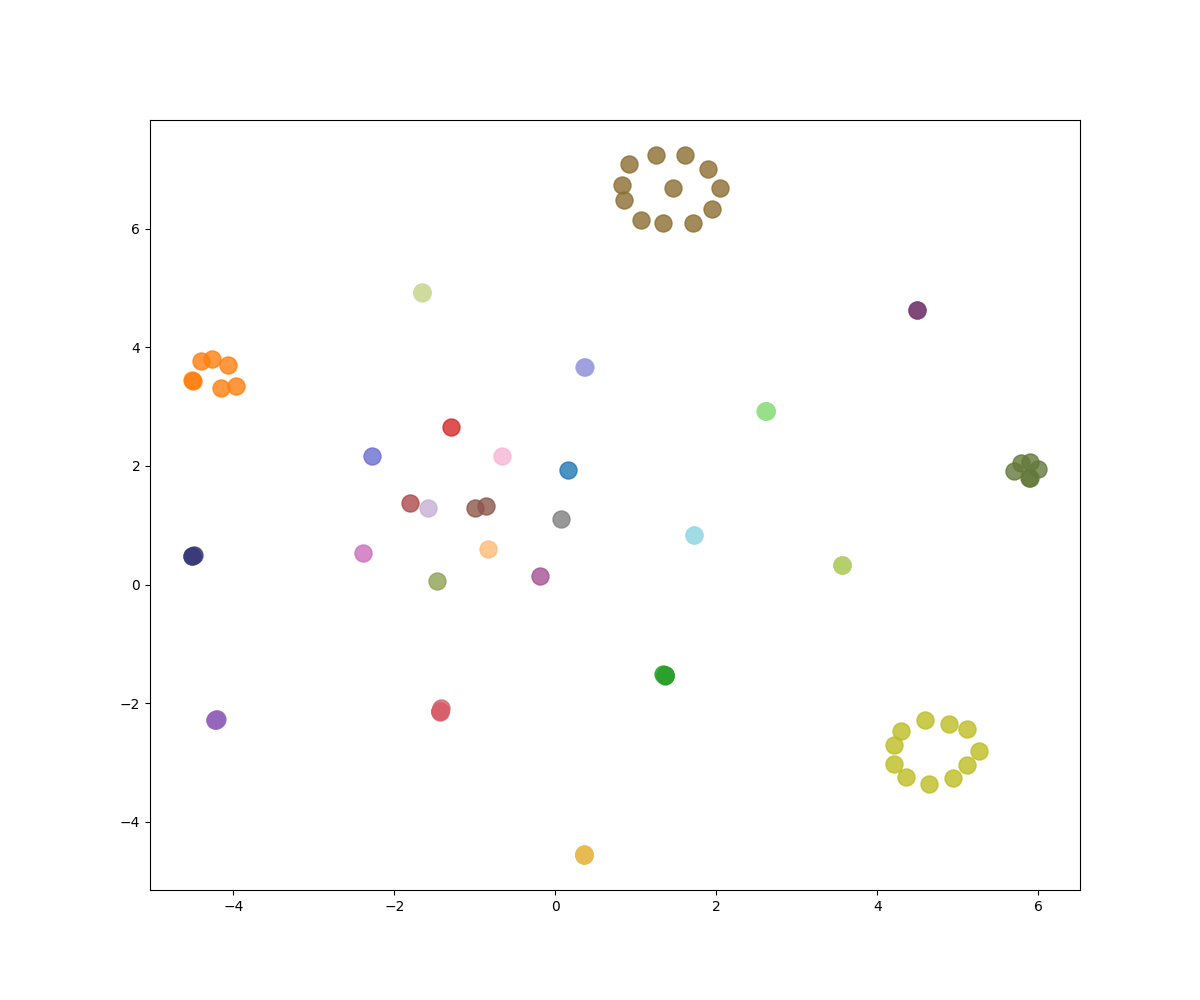}
    }

    \subfloat[{\em Reddit} w/o prop]{%
        \includegraphics[width=0.16\textwidth]{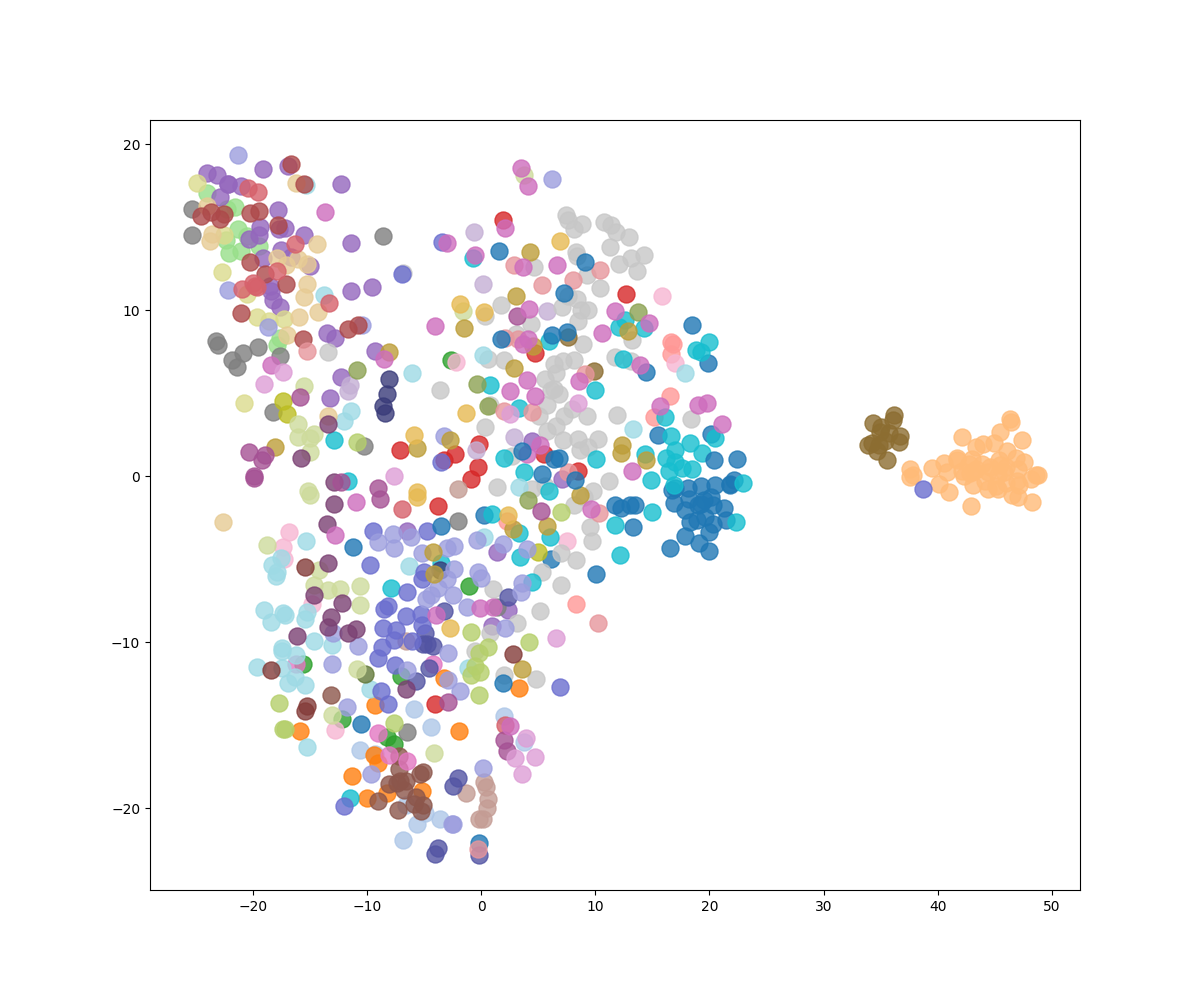}
    }
    \subfloat[{\em Reddit} w prop]{%
        \includegraphics[width=0.16\textwidth]{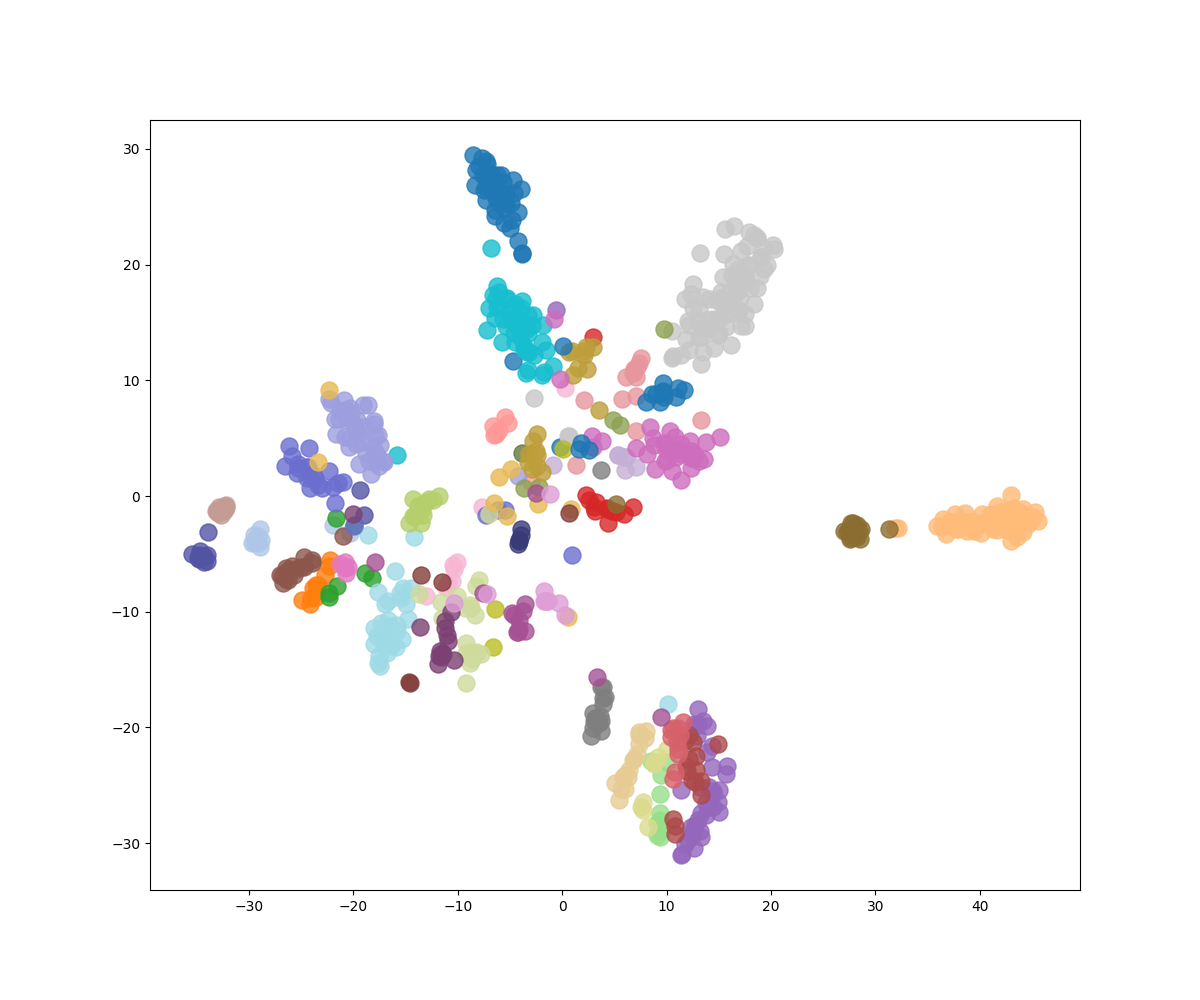}
    }
    \subfloat[{\em Reddit} syn]{%
        \includegraphics[width=0.16\textwidth]{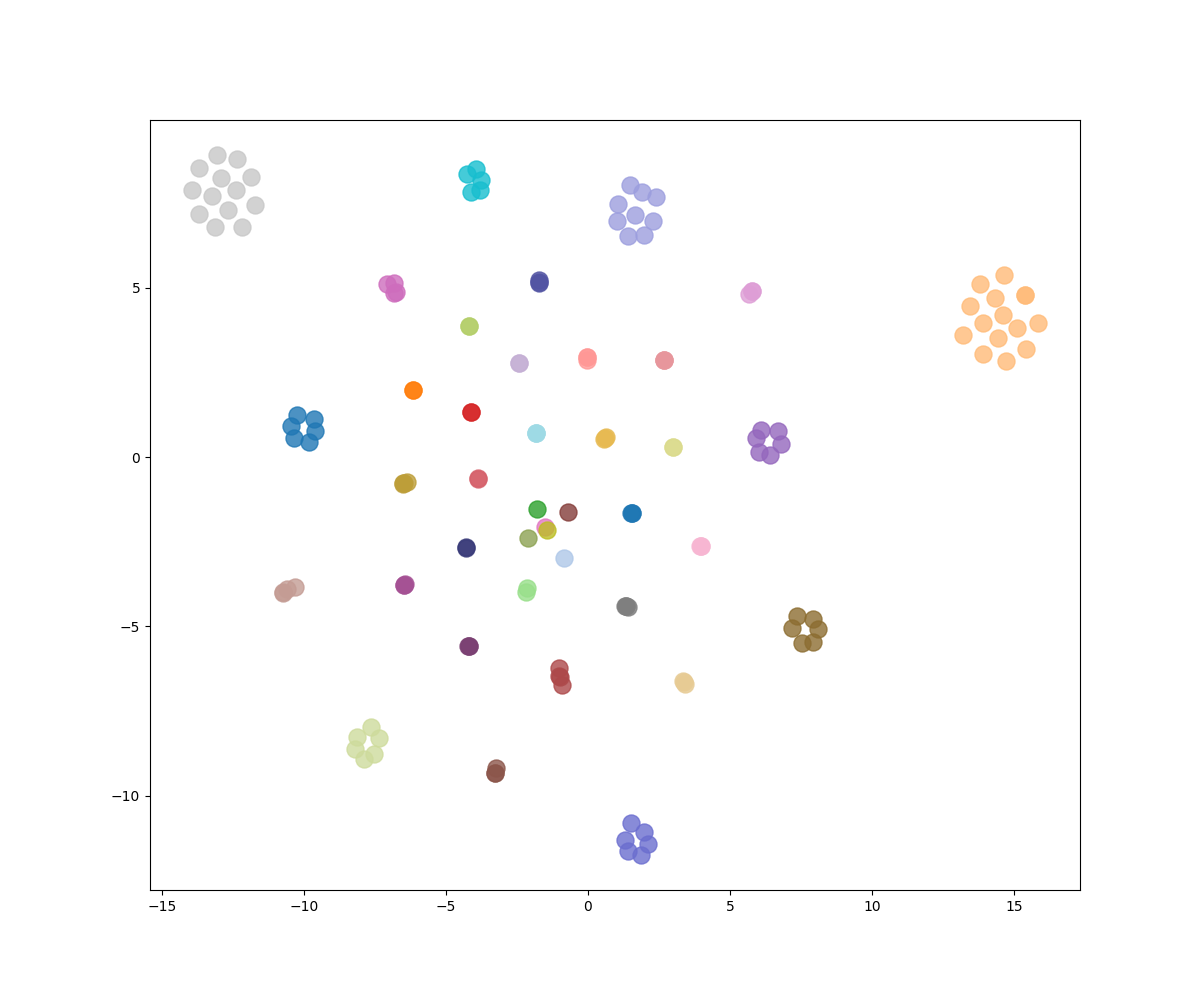}
    }
     \caption{The visualization of prediction logits on non-propagated/propagated/synthetic attributes}
    \vspace{-1ex}
    \label{fig:subfigures_prop}
\end{figure}

\subsection{Visualization}

\stitle{Propagation and Clustering}
We first employ the T-SNE algorithm to visualize the logits of node attributes that have and have not undergone propagation after passing through a linear layer during the pretraining phase, as well as the logits of the synthetic node attributes after being processed by the \texttt{GCN}. As shown in Fig.~\ref{fig:subfigures_prop}, scatters of different colors represent nodes from different categories. We can clearly observe that the logits obtained from attributes processed by propagation exhibit much clearer classification boundaries compared to those that have not undergone propagation processing. Regarding the logits from the \texttt{GCN} on the synthetic graph, although they are fewer in number, they display the clearest classification boundaries and also exhibit diversity within the classes. This reflects the high quality of the synthetic nodes attributes.

\stitle{Synthetic Graph Visualization}
Next, we visualize the synthetic subgraph in Fig.~\ref{fig:subfigures_vis}. We choose to visualize edges with weights above a certain threshold, with the color of the edges becoming darker as the weight increases. We find that the intra-class and inter-class connections of nodes in the synthetic subgraph reflect the homophily of the original graph, while the density of these connections indicates the density of the original graph. This demonstrates that our method has successfully synthetic informative graph data.

\begin{figure}[H]
    \centering
    \subfloat[{\em Cora}]{%
        \includegraphics[width=0.25\textwidth]{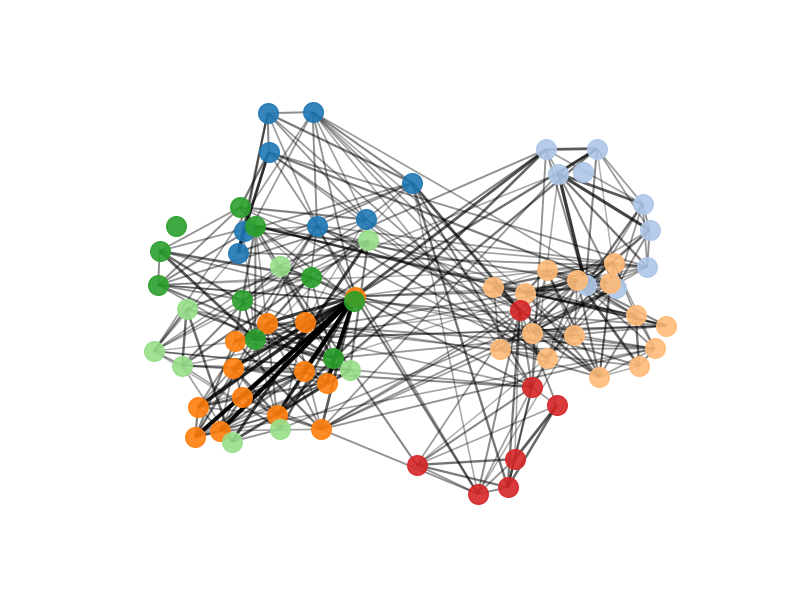}
    }
    \subfloat[{\em Citeseer}]{%
        \includegraphics[width=0.25\textwidth]{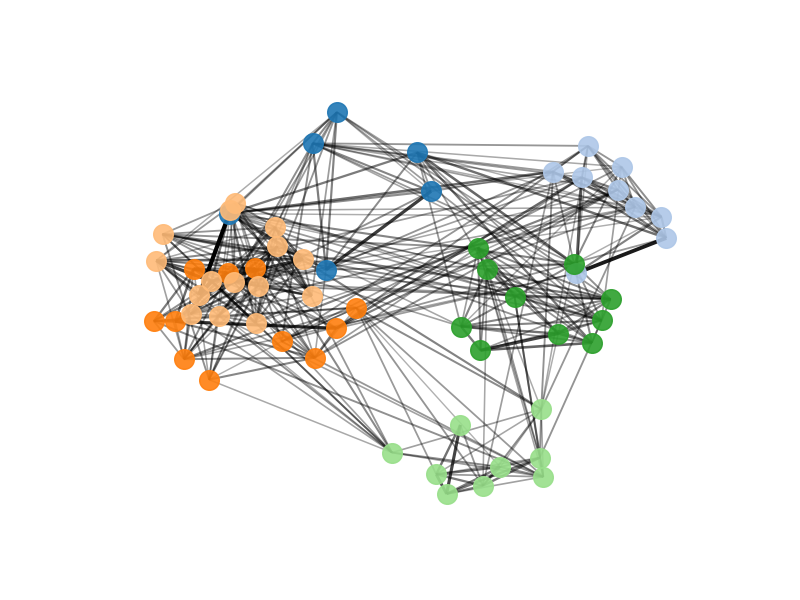} 
    }
    
    \subfloat[{\em arXiv}]{%
        \includegraphics[width=0.25\textwidth]{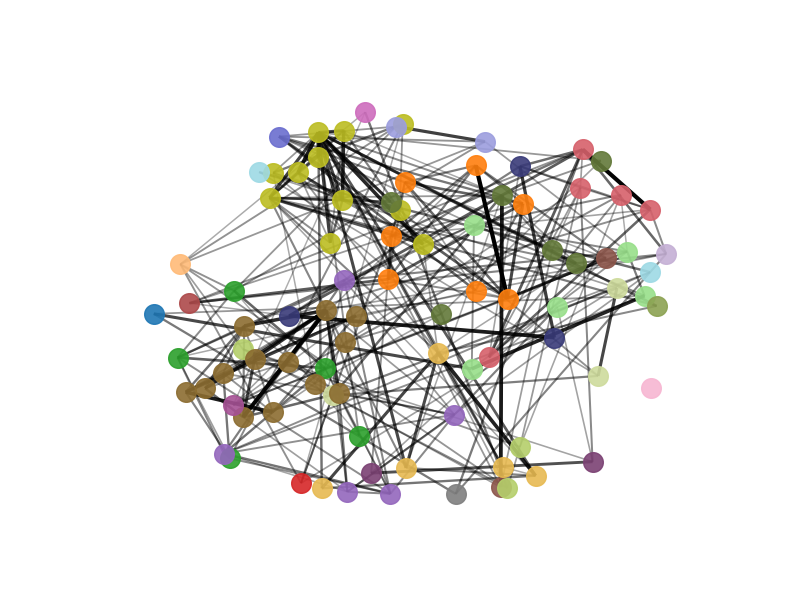}
    }
    \subfloat[{\em Reddit}]{%
        \includegraphics[width=0.25\textwidth]{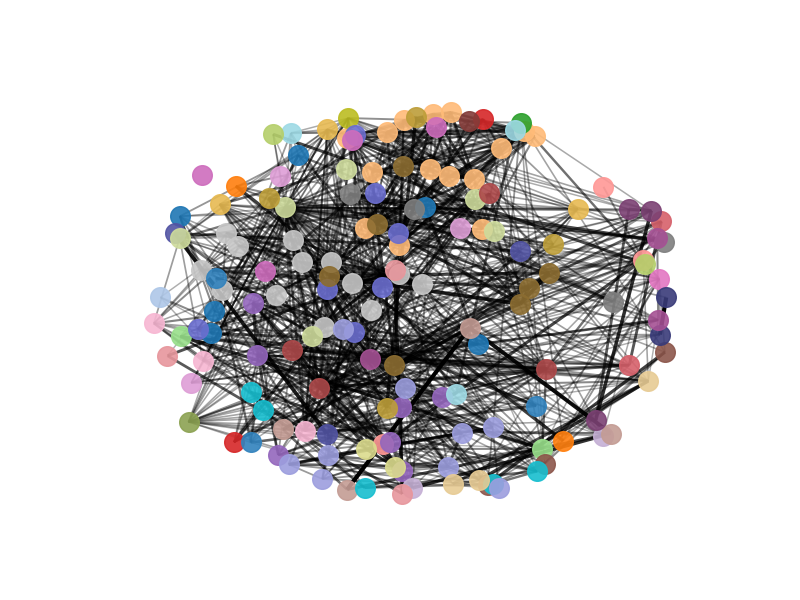}
    }
     \caption{The visualization synthetic graphs}
    \vspace{2ex}
    \label{fig:subfigures_vis}
    % \vspace{-1ex}
\end{figure}

\section{Theoretical Proofs}
\label{sec:proof}
\begin{proof}[\bf Proof of Lemma~\ref{lem:mu}]
Using the Cauchy–Schwarz inequality, we can get
\begin{small}
\begin{align*}
& \|\muvec^{\text{org}} - \muvec^{\text{syn}}\|^2_2 =\left\|\frac{1}{N}\sum_{v_j\in  \G}{\HM_{j}} - \frac{1}{n}\sum_{u_i\in  \G'}{\HM'_{i}} \right\|^2_2 \\
&= \left\|\frac{1}{N}\sum_{i=1}^{n}{\sum_{v_j\in  C_i}{\HM_{j}}} - \frac{1}{n}\sum_{u_i\in  \G'}{\HM'_{i}} \right\|^2_2 = \left\|\sum_{i=1}^{n}{\left(\frac{{\HM'_{i}}}{n} - \sum_{v_j\in  C_i}{\frac{\HM_{j}}{N}}\right) }\right\|^2_2\\
&= \left\|\sum_{i=1}^{n}{\frac{1}{n}\cdot \left({{\HM'_{i}}} - \sum_{v_j\in  C_i}{\frac{n}{N}\cdot \HM_{j}}\right) }\right\|^2_2 \le \frac{1}{n}\sum_{i=1}^n\left\|{{\HM'_{i}}}- \sum_{v_j\in  C_i}{\frac{n}{N}\cdot \HM_{j}} \right\|^2_2\\
% & \le n \sum_{i=1}^{n} \left\|\frac{{\HM'_{i}}}{n} - \sum_{v_j\in  C_i}{\frac{\HM_{j}}{N}} \right\|^2_2
& = \frac{1}{n}\sum_{i=1}^n\left\|\sum_{v_j\in  C_i}{\left(\frac{1}{|C_i|}-\frac{n}{N}\right)\cdot \HM_{j}} \right\|^2_2.
\end{align*}
\end{small}
Since the Euclidean norm $\|\cdot\|_2^2$ is a convex function, using Jensen's Inequality leads to
\begin{small}
\begin{align*}
\frac{1}{n}\sum_{i=1}^n\left\|\sum_{v_j\in  C_i}{\left(\frac{1}{|C_i|}-\frac{n}{N}\right) \HM_{j}} \right\|^2_2 & \le \frac{1}{n}\sum_{i=1}^n\sum_{v_j\in  C_i}{\left(\frac{1}{|C_i|}-\frac{n}{N}\right)^2 \left\|\HM_{j} \right\|^2_2} \\
& = \frac{1}{n}\sum_{i=1}^n\sum_{v_j\in  C_i}{\left(\frac{1}{|C_i|}-\frac{n}{N}\right)^2} \\
& = \sum_{i=1}^n{\left( \frac{1}{n}-\frac{|C_i|}{N}\right)^2} = \frac{1}{N^2}\sum_{i=1}^n{\left( \frac{N}{n}-{|C_i|}\right)^2}.
\end{align*}
\end{small}
The lemma is proved.
\end{proof}

\begin{proof}[\bf Proof of Lemma~\ref{lem:conv-bound}]
First, according to the definitions in Eq.~\eqref{eq:conv}, $\SigM^{\textnormal{org}}$ and $\SigM^{\textnormal{syn}}$ are symmetric and positive semi-definite.
By their positive semi‐deﬁniteness and Araki–Lieb–Thirring inequality~\cite{araki1990inequality}, we have the following upper bound:
\begin{align}
% & \Tr\left({\SigM^{\textnormal{org}}}^2 + {\SigM^{\textnormal{syn}}}^2 - 2(\SigM^{\textnormal{org}} \SigM^{\textnormal{syn}})\right) \\
& \Tr(\SigM^{\textnormal{org}}) + \Tr(\SigM^{\textnormal{syn}}) - 2\Tr((\SigM^{\textnormal{org}} \SigM^{\textnormal{syn}})^{\frac{1}{2}}) \notag\\
& \leq \Tr(\SigM^{\textnormal{org}}) + \Tr(\SigM^{\textnormal{syn}}) - 2\Tr({\SigM^{\textnormal{org}}}^{\frac{1}{2}} {\SigM^{\textnormal{syn}}}^{\frac{1}{2}}) \notag\\
& = \Tr\left( ({\SigM^{\textnormal{org}}}^{\frac{1}{2}}-{\SigM^{\textnormal{syn}}}^{\frac{1}{2}})^2 \right) \notag = \|{\SigM^{\textnormal{org}}}^{1/2}-{\SigM^{\textnormal{syn}}}^{1/2}\|^2_F \notag\\
& \le \|{\SigM^{\textnormal{org}}}^{1/2}\|^2_F+\|{\SigM^{\textnormal{syn}}}^{1/2}\|^2_F = \Tr(\SigM^{\textnormal{org}}) + \Tr(\SigM^{\textnormal{syn}}) \label{eq:org-syn-bound}.
% & \le \sum_{i=1}^d{(\sqrt{\lambda_i(\SigM^{\textnormal{org}})}-\sqrt{\lambda_i(\SigM^{\textnormal{syn}})})^2}\\
% & \le \sum_{i=1}^d{({\lambda_i(\SigM^{\textnormal{org}})}-{\lambda_i(\SigM^{\textnormal{syn}})})^2} \\
% & \le \Tr\left( {\SigM^{\textnormal{org}}}^2 + {\SigM^{\textnormal{syn}}}^2 - 2\SigM^{\textnormal{org}} \SigM^{\textnormal{syn}} \right) \\
% & \leq \|\SigM^{\textnormal{org}} - \SigM^{\textnormal{syn}}\|_F^2 = {\sum_{i=1}^d\sum_{j=1}^d{(\SigM^{\textnormal{org}}_{i,j} - \SigM^{\textnormal{syn}}_{i,j})^2}}.
\end{align}

Next, we need the following two lemmata.
\begin{lemma}\label{lem:org-conv}
The following equation holds:
\begin{small}
\begin{equation}\label{eq:trsigorg}
\Tr(\SigM^{\text{org}}) = \frac{1}{N} \sum_{i=1}^n\sum_{v_j\in C_i} \|\HM_j -\HM'_i\|^2_2 +  \sum_{i=1}^n \frac{|C_i|}{N}  \|\HM'_i - \muvec^{\text{org}}\|^2_2
\end{equation}    
\end{small}
\end{lemma}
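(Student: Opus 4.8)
The plan is to recognize Eq.~\eqref{eq:trsigorg} as the classical within/between decomposition of the total sum of squares (the ``law of total variance''), in which each $\HM'_i$ plays the role of a cluster centroid. First I would unpack the trace on the left-hand side: since $\SigM^{\text{org}} = \frac{1}{N}\hat{\HM}^\top\hat{\HM}$ with $\hat{\HM}_j = \HM_j - \muvec^{\text{org}}$, we have $\Tr(\SigM^{\text{org}}) = \frac{1}{N}\|\hat{\HM}\|_F^2 = \frac{1}{N}\sum_{v_j\in\G}\|\HM_j - \muvec^{\text{org}}\|_2^2$. Because the clusters $\{C_1,\dots,C_n\}$ partition $\V$, I can regroup this single sum over $v_j$ as the double sum $\frac{1}{N}\sum_{i=1}^n\sum_{v_j\in C_i}\|\HM_j - \muvec^{\text{org}}\|_2^2$.

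Next, within each cluster I would insert the centroid $\HM'_i$ through the split $\HM_j - \muvec^{\text{org}} = (\HM_j - \HM'_i) + (\HM'_i - \muvec^{\text{org}})$ and expand the squared norm. This produces a within-term $\|\HM_j-\HM'_i\|_2^2$, a between-term $\|\HM'_i-\muvec^{\text{org}}\|_2^2$, and a cross term $2\langle \HM_j-\HM'_i,\ \HM'_i-\muvec^{\text{org}}\rangle$. Summing the between-term over $v_j\in C_i$ contributes the factor $|C_i|$, which will match the second summand of Eq.~\eqref{eq:trsigorg} after dividing by $N$.

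The only step needing care is the cross term, and this is precisely where the definition $\HM'_i = \frac{1}{|C_i|}\sum_{v_j\in C_i}\HM_j$ from Eq.~\eqref{eq:Hi} enters. Since $\HM'_i-\muvec^{\text{org}}$ is constant inside a fixed cluster, the summed cross term factors as $2\langle \sum_{v_j\in C_i}(\HM_j-\HM'_i),\ \HM'_i-\muvec^{\text{org}}\rangle$, and the centroid property gives $\sum_{v_j\in C_i}(\HM_j-\HM'_i) = |C_i|\HM'_i - |C_i|\HM'_i = 0$. Hence every cross term vanishes, and summing the surviving within- and between-terms over all $n$ clusters and dividing by $N$ yields Eq.~\eqref{eq:trsigorg} exactly. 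There is no genuine obstacle here, as the statement is an algebraic identity rather than an inequality; the ``hard part'' is merely the bookkeeping—confirming that the cross term is annihilated by the centroid property and that the partition assumption legitimizes regrouping the global sum cluster-by-cluster.
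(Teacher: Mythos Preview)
Your proposal is correct and follows essentially the same approach as the paper: both insert the centroid $\HM'_i$ into the deviation $\HM_j-\muvec^{\text{org}}$, expand, and use the centroid property from Eq.~\eqref{eq:Hi} to kill the cross term. The only cosmetic difference is that the paper carries out the decomposition entrywise on $\SigM^{\text{org}}_{a,b}$ and then sums the diagonal, whereas you work directly with the trace as a sum of squared norms; the algebra is identical.
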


\begin{lemma}\label{lem:syn-conv}
% \(\Tr(\SigM^{\text{syn}}) \leq \frac{1}{n} \|\HM'_i - \muvec^{\text{org}}\|^2_2.\)
\(\sum_{i=1}^{n} ||\HM'_i - \muvec^{\text{org}}||^2_2 = n\cdot \left(\Tr(\SigM^{\text{syn}}) + \|\muvec^{\text{syn}}-\muvec^{\text{org}}\|_2^2\right)\)
\end{lemma}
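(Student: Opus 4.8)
The plan is to recognize this identity as a parallel-axis (variance-decomposition / Steiner) statement about the synthetic representations $\HM'_1,\ldots,\HM'_n$, which pivots the sum of squared distances from the original mean $\muvec^{\text{org}}$ onto the synthetic mean $\muvec^{\text{syn}}$. First I would insert the synthetic mean into each summand by writing $\HM'_i - \muvec^{\text{org}} = (\HM'_i - \muvec^{\text{syn}}) + (\muvec^{\text{syn}} - \muvec^{\text{org}})$ and expanding $\|\cdot\|_2^2$ of this sum over $i$. This produces three groups of terms: the within-synthetic spread $\sum_{i=1}^n \|\HM'_i - \muvec^{\text{syn}}\|_2^2$, a cross term $2\langle \sum_{i=1}^n(\HM'_i - \muvec^{\text{syn}}),\, \muvec^{\text{syn}} - \muvec^{\text{org}}\rangle$, and the constant contribution $n\|\muvec^{\text{syn}} - \muvec^{\text{org}}\|_2^2$.

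The only thing to verify — though it is still routine — is that the cross term vanishes. This follows immediately from the definition $\muvec^{\text{syn}} = \frac{1}{n}\sum_{i=1}^n \HM'_i$ in Eq.~\eqref{eq:mu-org-syn}, which gives $\sum_{i=1}^n (\HM'_i - \muvec^{\text{syn}}) = \mathbf{0}$; hence the inner product is zero regardless of the value of $\muvec^{\text{syn}} - \muvec^{\text{org}}$. This is precisely why the decomposition pivots so cleanly about $\muvec^{\text{syn}}$ rather than any other reference point.

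Finally I would identify the within-synthetic spread with the trace of the synthetic covariance. By the definition of $\SigM^{\text{syn}}$ in Eq.~\eqref{eq:conv}, namely $\SigM^{\text{syn}} = \frac{1}{n}\hat{\HM'}^\top \hat{\HM'}$ with $\hat{\HM'}$ the row-centered version of $\HM'$, we have $\Tr(\SigM^{\text{syn}}) = \frac{1}{n}\sum_{i=1}^n \|\HM'_i - \muvec^{\text{syn}}\|_2^2$, so that $\sum_{i=1}^n \|\HM'_i - \muvec^{\text{syn}}\|_2^2 = n\cdot\Tr(\SigM^{\text{syn}})$. Substituting this back and factoring out $n$ yields exactly $n\cdot(\Tr(\SigM^{\text{syn}}) + \|\muvec^{\text{syn}} - \muvec^{\text{org}}\|_2^2)$, which is the claimed identity. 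There is no genuine obstacle here: the statement is a direct algebraic identity, and the only point requiring care is the bookkeeping of the centering, so that the covariance trace is correctly matched to the centered sum of squares and the mean cancellation is invoked with the right summand.
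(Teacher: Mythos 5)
Your proof is correct and follows essentially the same route as the paper's: both insert $\muvec^{\text{syn}}$ into each summand, expand the squared norm, cancel the cross term via $\sum_{i=1}^{n}(\HM'_i - \muvec^{\text{syn}}) = \mathbf{0}$, and identify $\sum_{i=1}^n \|\HM'_i - \muvec^{\text{syn}}\|_2^2$ with $n\cdot\Tr(\SigM^{\text{syn}})$ from the definition in Eq.~\eqref{eq:conv}. The only cosmetic difference is that the paper sets $\Delta = \muvec^{\text{org}} - \muvec^{\text{syn}}$ before expanding, which changes nothing of substance.
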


Combining Lemmata~\ref{lem:org-conv}, ~\ref{lem:syn-conv}, and Eq.~\eqref{eq:org-syn-bound} leads to
\begin{align}
& \Tr(\SigM^{\text{org}}) +\Tr(\SigM^{\text{syn}})\notag\\
&= \frac{1}{N} \sum_{i=1}^n\sum_{v_j\in C_i} \|\HM_j -\HM'_i\|^2_2  + \sum_{i=1}^n \frac{|C_i|}{N}  \|\HM'_i - \muvec^{\text{org}}\|^2_2 + \Tr(\SigM^{\text{syn}}) \notag\\
&= \frac{1}{N} \sum_{i=1}^n\sum_{v_j\in C_i} \|\HM_j -\HM'_i\|^2_2 + \frac{c_{\max}}{N} \sum_{i=1}^n \|\HM'_i -\muvec^{\text{org}}\|^2_2+ \Tr(\SigM^{\text{syn}}) \notag\\
&= \frac{1}{N} \sum_{i=1}^n\sum_{v_j\in C_i} \|\HM_j -\HM'_i\|^2_2 + \left(\frac{nc_{\max}}{N}+1\right) \cdot \Tr(\SigM^{\text{syn}}) \notag\\
&\quad + \frac{nc_{\max}}{N}\cdot \|\muvec^{\text{syn}}-\muvec^{\text{org}}\|_2^2.\label{eq:tr-tr-sum}
\end{align}
From Lemma~\ref{lem:syn-conv}, we can further derive that 
\begin{align*}
\Tr(\SigM^{\text{syn}}) &\leq \frac{1}{n} \sum_{i=1}^n \|\HM'_i - \muvec^{\text{org}}\|^2_2 \leq \frac{1}{n}\cdot \frac{N}{c_{\min}}  \sum_{i=1}^n \frac{|C_i|}{N} \|\HM'_i - \muvec^{\text{org}}\|^2_2 \\
& \leq \frac{N}{n c_{\min}}\cdot\Tr(\SigM^{\text{org}}).
\end{align*}
Plugging the above inequality into Eq.~\eqref{eq:tr-tr-sum} finishes the proof.
\eat{
\begin{align*}
\Tr(\SigM^{\text{syn}}) &\leq \frac{1}{n} \sum_{i=1}^n \|\HM'_i - \muvec^{\text{org}}\|^2_2 \leq \frac{1}{n}\cdot \frac{N}{c_{\min}}  \sum_{i=1}^n \frac{|C_i|}{N} \|\HM'_i - \muvec^{\text{org}}\|^2_2 \\
& \leq \frac{N}{n c_{\min}}\cdot\Tr(\SigM^{\text{org}}).
\end{align*}
Plugging the above inbequality into Eq.~\eqref{eq:org-syn-bound} finishes the proof.
}
% , we have \( \Tr(\SigM^{\text{syn}}) \leq \frac{1}{n} \|\HM'_i - \muvec^{\text{org}}\|^2_2\),considering \(\frac{C_{min}}{N} \sum_{i=1}^n\|\HM'_i - \muvec^{\text{org}}\|^2_2 \leq \frac{1}{N} \sum_{i=1}^n|C_i|\|\HM'_i - \muvec^{\text{org}}\|^2_2\), we get,
% \begin{align*}
%     &\Tr(\SigM^{\text{syn}}) \leq \frac{N}{nC_{min}} \cdot \frac{1}{N} \sum_{i=1}^n|C_i|\|\HM'_i - \muvec^{\text{org}}\|^2_2\\
% &\leq \frac{N}{nC_{min}}\Tr(\SigM^{\text{org}})
% \end{align*}
\end{proof}

\begin{proof}[\bf Proof of Lemma~\ref{lem:org-conv}]
Recall that $\SigM^{\text{org}}_{a,b}$ represents the covariance between the dimension $a$ and dimension $b$ of $\HM$. Then, 
\begin{equation}
    \SigM^{\text{org}}_{a,b}= \frac{1}{N}\sum^{N}_{v_j\in \G}(\HM_{j,a}-\muvec^{\text{org}}_{a})(\HM_{j,b}-\muvec^{\text{org}}_{b})
    \label{eq-elesigorg}
\end{equation}
where $\muvec^{\text{org}}_{a}, \muvec^{\text{org}}_{b}$ are the means of dimension $a$ and $b$ respectively. The covariance matrix can be decomposed into an intra-cluster covariance matrix and an inter-cluster covariance matrix. 
Note that we have
\begin{equation*}
\begin{gathered}
    \HM_{j,a} = \HM'_{i,a}+(\HM_{j,a}-\HM'_{i,a}),\\
    \HM_{j,b} = \HM'_{i,b}+(\HM_{j,b}-\HM'_{i,b}),\\
    \sum_{v_j\in C_i}(\HM_{j,a}-\HM'_{i,a})=\sum_{v_j\in C_i}(\HM_{j,b}-\HM'_{i,b})=0.
\end{gathered}
\end{equation*}
Then $\SigM^{\text{org}}_{a,b}$ can be rewritten as:
\begin{align}
    \SigM^{\text{org}}_{a,b}= & \frac{1}{N}\sum^{n}_{i=1}\sum_{v_j\in C_i}(
    (\HM_{j,a}-\HM'_{i,a})+(\HM'_{i,a}-\muvec^{\text{org}}_{a})) \notag\\
    & \cdot ((\HM_{j,b}-\HM'_{i,b})+(\HM'_{i,b}-\muvec^{\text{org}}_{b})) \notag\\
    =&\sum^{n}_{i=1}\frac{|C_i|}{N}\cdot (\HM'_{i,a}-\muvec^{\text{org}}_{a})(\HM'_{i,b}-\muvec^{\text{org}}_{b})\notag\\
    & +\frac{1}{N}\sum^{n}_{i=1}\sum_{v_j\in C_i}(\HM_{j,a}-\HM'_{i,a})(\HM_{j,b}-\HM'_{i,b})\notag\\
    & + \frac{1}{N}\sum^{n}_{i=1}\sum_{v_j\in C_i}(\HM_{j,a}-\HM'_{i,a})(\HM'_{i,b}-\muvec^{\text{org}}_{b})\notag\\
    & + \frac{1}{N}\sum^{n}_{i=1}\sum_{v_j\in C_i}(\HM_{j,b}-\HM'_{i,b})(\HM'_{i,a}-\muvec^{\text{org}}_{a})\notag
\end{align}
\begin{align}
    \SigM^{\text{org}}_{a,b}=
    =&\sum^{n}_{i=1}\frac{|C_i|}{N}\cdot (\HM'_{i,a}-\muvec^{\text{org}}_{a})(\HM'_{i,b}-\muvec^{\text{org}}_{b})\notag\\
    & +\frac{1}{N}\sum^{n}_{i=1}\sum_{v_j\in C_i}(\HM_{j,a}-\HM'_{i,a})(\HM_{j,b}-\HM'_{i,b})\label{eq:rwttrorg}
\end{align}
According to Eq.~\eqref{eq:rwttrorg}, 
\begin{equation*}
\Tr(\SigM^{\text{org}}) = \frac{1}{N} \sum_{i=1}^n\sum_{v_j\in C_i} \|\HM_j -\HM'_i\|^2_2 +  \sum_{i=1}^n \frac{|C_i|}{N}  \|\HM'_i - \\\muvec^{\text{org}}\|^2_2,
\end{equation*}
which completes the proof.
\end{proof}

\begin{proof}[\bf Proof of Lemma~\ref{lem:syn-conv}]
\eat{
Similar to Eq.\eqref{eq:trsigorg}, the trace of the $\SigM^{\text{syn}}$ can be calculated by,
\begin{equation*}
    \Tr(\SigM^{\text{syn}}) = \frac{1}{n} \sum_{i=1}^n \|\HM'_i -\muvec^{\text{syn}}\|^2_2 = \frac{1}{n} \sum_{i=1}^{n} (\HM'_i - \muvec^{\text{syn}})^\top (\HM'_i - \muvec^{\text{syn}})
\end{equation*}
}
According to Eq.~\eqref{eq:mu-org-syn}, \( \sum_{i=1}^{n} ||\HM'_i - \muvec^{\text{org}}||^2_2\) can be expanded as
\begin{equation*}
   \sum_{i=1}^{n} ||\HM'_i - \muvec^{\text{org}}||^2_2 = \sum_{i=1}^{n} (\HM'_i - \muvec^{\text{org}})^\top (\HM'_i - \muvec^{\text{org}}) .
\end{equation*}
Let  \(\Delta = \muvec^{\text{org}} - \muvec^{\text{syn}}\), we have 
\begin{align*}
& \sum_{i=1}^{n} (\HM'_i - \muvec^{\text{org}} )^\top (\HM'_i - \muvec^{\text{org}})\\
&=\sum_{i=1}^{n} (\HM'_i - \muvec^{\text{syn}} +\Delta)^\top (\HM'_i - \muvec^{\text{syn}} +\Delta)\\&
    =\sum_{i=1}^{n} \left( (\HM'_i - \muvec^{\text{syn}})^\top (\HM'_i - \muvec^{\text{syn}}) + 2(\HM'_i - \muvec^{\text{syn}})^\top \Delta + \Delta^\top\Delta \right)
\end{align*}
Since \(\sum_{i=1}^{n} (\HM'_i - \muvec^{\text{syn}}) = 0\),
we have 
\begin{align*}
\sum_{i=1}^{n} ||\HM'_i - \muvec^{\text{org}}||^2_2 & = \sum_{i=1}^n \left(\|\HM'_i -\muvec^{\text{syn}}\|^2_2 + \|\muvec^{\text{syn}}-\muvec^{\text{org}}\|_2^2\right)\\
& = n\cdot \left(\Tr(\SigM^{\text{syn}}) + \|\muvec^{\text{syn}}-\muvec^{\text{org}}\|_2^2\right).
\end{align*}
The lemma then follows.
\eat{
\begin{align*}
&\frac{1}{n} \sum_{i=1}^{n} 2(\HM'_i - \muvec^{\text{syn}})^\top \Delta + \frac{1}{n} \sum_{i=1}^{n} \Delta^\top\Delta = \frac{1}{n} \sum_{i=1}^{n} \Delta^\top\Delta \geq 0.
\end{align*}
Accordingly,
\begin{align*}
\Tr(\SigM^{\text{syn}})
     & =\frac{1}{n} \sum_{i=1}^{n} ||\HM'_i - \muvec^{\text{syn}}||^2_2 \leq \frac{1}{n} \sum_{i=1}^{n} \left( ||\HM'_i - \muvec^{\text{syn}}||^2_2+ \Delta^\top\Delta\right)\\
     &= \frac{1}{n} \sum_{i=1}^{n} ||\HM'_i - \muvec^{\text{org}}||^2_2.
\end{align*}
}
\end{proof}

\eat{
\begin{lemma}\label{lem:avg_upper_mn}
\( 
\|\muvec^{\textnormal{org}} - \muvec^{\textnormal{syn}}\|^2_2 \le \frac{N}{nC_{min}} \textrm{\Tr}(\SigM^{\text{org}})
\)
\end{lemma}
\begin{proof}
\eat{\renchi{need the fact saying that Euclidean norm is a convex function.}}
Because the $L_2$ norm is convex, we can apply the Jensen Inequality to $\HM$
\begin{equation}
    \|\muvec^{\textnormal{org}} - \muvec^{\textnormal{syn}}\|^2_2 \leq \frac{1}{n} \sum_{i=1}^n \|\muvec^{\textnormal{org}} - \HM'_i\|^2_2 
\end{equation}
We can also apply the Jensen Inequality to $\HM'$
\begin{equation}
\|\muvec^{\textnormal{org}} - \HM'_i\|^2_2 \leq \frac{1}{|C_i|} \sum^{|C_i|}_{v_j\in C_i} \|\muvec^{\textnormal{org}} - \HM_j\|^2_2 
\end{equation}
Then combining the above two inequality, we have,  
\begin{equation}
    \|\muvec^{\textnormal{org}} - \muvec^{\textnormal{syn}}\|^2_2 \leq \frac{1}{n} \sum_{i=1}^n \frac{1}{|C_i|} \sum_{v_j \in C_i} \|\muvec^{\textnormal{org}} - \HM_j\|^2_2
\end{equation}
Because $C_{min} \leq C_i,\forall i \in 1,\ldots,n$, then 
\begin{align} 
    \|\muvec^{\textnormal{org}} - \muvec^{\textnormal{syn}}\|^2_2 
\leq \frac{1}{nC_{min}} \sum_{i=1}^n \sum_{v_j \in C_i} \|\muvec^{\textnormal{org}} - \HM_j\|^2_2 
\end{align}
% \( \)
According Eq.(\ref{eq-elesigorg}), the diagonal elements of \( \SigM^{\text{org}}\) is 
\begin{equation}
    \SigM^{\text{org}}_{a,a}= \frac{1}{N}\sum^{N}_{v_j\in \G}(\HM_{j,a}-\muvec^{\text{org}}_{a})^2 \quad \forall a \in 1\ldots d
\end{equation}
Then the trace of the original representation covariance matrix is equal to the sum of the variance of each dimension of the original representation,  
\begin{equation}
    \Tr(\SigM^{\text{org}}) =\frac{1}{N}\sum_{i=1}^n \sum_{v_j \in C_i}\|\muvec^{\textnormal{org}} - \HM_j\|^2_2  \label{eq:trsigorg}
\end{equation}
Finally we have,
\begin{equation}
    \|\muvec^{\textnormal{org}} - \muvec^{\textnormal{syn}}\|^2_2 \leq  \frac{N}{nC_{min}}\Tr(\SigM^{\text{org}})
\end{equation}
\end{proof}

}

\eat{\renchi{what is the definition of \(\SigM^{\text{org}}\)?}}
\eat{\renchi{why? $\SigM^{\text{syn}}$ is the covariance matrix of feature dimensions.}}

\eat{
\begin{lemma}\label{lem:mean_diff}
    \(\Tr(\SigM^{\text{syn}}) \leq \frac{1}{n} \|\HM'_i - \muvec^{\text{org}}\|^2_2\)
    \label{eq:meanleq}
\end{lemma}
\begin{proof}
Similar to Eq.(\ref{eq:trsigorg}), the trace of the $\SigM^{\text{syn}}$ can be calculated by,
\begin{equation}
    \Tr(\SigM^{\text{syn}}) = \frac{1}{n} \sum_{i=1}^n \|\HM'_i -\muvec^{\text{syn}}\|^2_2 = \frac{1}{n} \sum_{i=1}^{n} (\HM'_i - \muvec^{\text{syn}})^\top (\HM'_i - \muvec^{\text{syn}})
\end{equation}
The right of the inequality can also be expanded as,
\begin{equation}
   \frac{1}{n} \sum_{i=1}^{n} ||\HM'_i - \muvec^{\text{org}}||^2_2 = \frac{1}{n} \sum_{i=1}^{n} (\HM'_i - \muvec^{\text{org}})^\top (\HM'_i - \muvec^{\text{org}}) 
\end{equation}
Let  \(\Delta = \muvec^{\text{org}} - \muvec^{\text{syn}}\), we have 
\begin{align*}
& \frac{1}{n}\sum_{i=1}^{n} (\HM'_i - \muvec^{\text{org}} )^\top (\HM'_i - \muvec^{\text{org}})\\&=\frac{1}{n}\sum_{i=1}^{n} (\HM'_i - \muvec^{\text{syn}} +\Delta)^\top (\HM'_i - \muvec^{\text{syn}} +\Delta)\\&
    =\frac{1}{n} \sum_{i=1}^{n} \left( (\HM'_i - \muvec^{\text{syn}})^\top (\HM'_i - \muvec^{\text{syn}}) + 2(\HM'_i - \muvec^{\text{syn}})^\top \Delta + \Delta^\top\Delta \right)
\end{align*}
Since \(\sum_{i=1}^{n} (\HM'_i - \muvec^{\text{syn}}) = 0\),
we have 
\begin{align*}
       &\frac{1}{n} \sum_{i=1}^{n} 2(\HM'_i - \muvec^{\text{syn}})^\top \Delta + \frac{1}{n} \sum_{i=1}^{n} \Delta^\top\Delta
       \\&= \frac{1}{n} \sum_{i=1}^{n} \Delta^\top\Delta \geq 0
\end{align*}
Finally,
\begin{align*}
     &\Tr(\SigM^{\text{syn}})
     =\frac{1}{n} \sum_{i=1}^{n} ||\HM'_i - \muvec^{\text{syn}}||^2_2 \\&\leq \frac{1}{n} \sum_{i=1}^{n} ||\HM'_i - \muvec^{\text{syn}}||^2_2+ \frac{1}{n} \sum_{i=1}^{n}\Delta^\top\Delta\\&= \frac{1}{n} \sum_{i=1}^{n} ||\HM'_i - \muvec^{\text{org}}||^2_2
\end{align*}
\end{proof}
}

\eat{
\begin{lemma}\label{lem:cov_upper_mn}
\( 
\Tr(\SigM^{\text{syn}}) \leq \frac{N}{nC_{min}}\Tr(\SigM^{\text{org}})
\)
\end{lemma}
\begin{proof}
We first prove    
\begin{equation}\label{eq:meanleq}
\Tr(\SigM^{\text{syn}}) \leq \frac{1}{n} \|\HM'_i - \muvec^{\text{org}}\|^2_2.
\end{equation}

Similar to Eq.(\ref{eq:trsigorg}), the trace of the $\SigM^{\text{syn}}$ can be calculated by,
\begin{equation}
    \Tr(\SigM^{\text{syn}}) = \frac{1}{n} \sum_{i=1}^n \|\HM'_i -\muvec^{\text{syn}}\|^2_2 = \frac{1}{n} \sum_{i=1}^{n} (\HM'_i - \muvec^{\text{syn}})^\top (\HM'_i - \muvec^{\text{syn}})
\end{equation}
The right of the inequality can also be expanded as,
\begin{equation}
   \frac{1}{n} \sum_{i=1}^{n} ||\HM'_i - \muvec^{\text{org}}||^2_2 = \frac{1}{n} \sum_{i=1}^{n} (\HM'_i - \muvec^{\text{org}})^\top (\HM'_i - \muvec^{\text{org}}) 
\end{equation}
Let  \(\Delta = \muvec^{\text{org}} - \muvec^{\text{syn}}\), we have 
\begin{align*}
& \frac{1}{n}\sum_{i=1}^{n} (\HM'_i - \muvec^{\text{org}} )^\top (\HM'_i - \muvec^{\text{org}})\\&=\frac{1}{n}\sum_{i=1}^{n} (\HM'_i - \muvec^{\text{syn}} +\Delta)^\top (\HM'_i - \muvec^{\text{syn}} +\Delta)\\&
    =\frac{1}{n} \sum_{i=1}^{n} \left( (\HM'_i - \muvec^{\text{syn}})^\top (\HM'_i - \muvec^{\text{syn}}) + 2(\HM'_i - \muvec^{\text{syn}})^\top \Delta + \Delta^\top\Delta \right)
\end{align*}
Since \(\sum_{i=1}^{n} (\HM'_i - \muvec^{\text{syn}}) = 0\),
we have 
\begin{align*}
       &\frac{1}{n} \sum_{i=1}^{n} 2(\HM'_i - \muvec^{\text{syn}})^\top \Delta + \frac{1}{n} \sum_{i=1}^{n} \Delta^\top\Delta
       \\&= \frac{1}{n} \sum_{i=1}^{n} \Delta^\top\Delta \geq 0
\end{align*}
Finally,
\begin{align*}
     &\Tr(\SigM^{\text{syn}})
     =\frac{1}{n} \sum_{i=1}^{n} ||\HM'_i - \muvec^{\text{syn}}||^2_2 \\&\leq \frac{1}{n} \sum_{i=1}^{n} ||\HM'_i - \muvec^{\text{syn}}||^2_2+ \frac{1}{n} \sum_{i=1}^{n}\Delta^\top\Delta\\&= \frac{1}{n} \sum_{i=1}^{n} ||\HM'_i - \muvec^{\text{org}}||^2_2
\end{align*}

According to the Eq.(\ref{eq:rwttrorg}), $\Tr(\SigM^{\text{org}})$ is equal to 
\begin{equation}
    \Tr(\SigM^{\text{org}}) = \frac{1}{N} \sum_{i=1}^n\sum_{v_j\in C_i} \|\HM_j -\HM'_i\|^2_2 +  \frac{1}{N} \sum_{i=1}^n|C_i|\|\HM'_i - \\\muvec^{\text{org}}\|^2_2
\end{equation}
From Eq.~\eqref{eq:meanleq},we have \( \Tr(\SigM^{\text{syn}}) \leq \frac{1}{n} \|\HM'_i - \muvec^{\text{org}}\|^2_2\),considering \(\frac{C_{min}}{N} \sum_{i=1}^n\|\HM'_i - \muvec^{\text{org}}\|^2_2 \leq \frac{1}{N} \sum_{i=1}^n|C_i|\|\HM'_i - \muvec^{\text{org}}\|^2_2\), we get,
\begin{align*}
    &\Tr(\SigM^{\text{syn}}) \leq \frac{N}{nC_{min}} \cdot \frac{1}{N} \sum_{i=1}^n|C_i|\|\HM'_i - \muvec^{\text{org}}\|^2_2\\
&\leq \frac{N}{nC_{min}}\Tr(\SigM^{\text{org}})
\end{align*}
\end{proof}
}

\begin{proof}[\bf Proof of Lemma~\ref{lem:GSL-sol}]
First, the second term is equivalent to
\begin{equation*}
\sum_{(v_i,v_j)\in \EDG}{\left\|\frac{\HM_i}{\sqrt{d(v_i)}}-\frac{\HM_j}{\sqrt{d(v_j)}}\right\|_F^2} = \Tr(\HM^{\top}(\IM-\NAM)\HM).
\end{equation*}
By setting its derivative w.r.t. $\HM$ to zero, we obtain the optimal $\HM$ as:
\begin{align}
& \frac{\partial{\{(1-\alpha)\cdot\|\HM - \XM\WM\|^2_F + \alpha \cdot \Tr(\HM^{\top}(\IM-\NAM)\HM)\}}}{\partial{\HM}}=0 \notag\\
& \Longrightarrow (1-\alpha)\cdot(\HM - \XM\WM) + \alpha (\IM-\NAM)\HM = 0 \notag\\
& \Longrightarrow \HM = (1-\alpha)\cdot \left(\IM-\alpha\NAM\right)^{-1} \XM\WM. \label{eq:Z-derivative}
\end{align}
By the property of Neumann series, we have 
$(\IM - \alpha \NAM)^{-1} = \sum_{\ell=0}^{\infty}{\alpha^t \NAM^{\ell}}$. Plugging it into Eq.~\eqref{eq:Z-derivative} completes the proof.
\end{proof}

\begin{proof}[\bf Proof of Lemma~\ref{lem:homophily-DE}]
Recall that the definition of homophily ratio over graph $\G$ is the fraction of edges whose endpoints are in the same class. Thus,
\begin{align*}
\Omega(\G) & = \frac{\sum_{(v_i,v_j)\in \EDG}{\sum_{k=1}^{K}\YM_{i,k}\cdot \YM_{j,k}}}{m}\\
& = - \frac{\sum_{(v_i,v_j)\in \EDG}{\sum_{k=1}^{K}{\YM_{i,k}^2-\YM_{i,k}^2+\YM_{j,k}^2-\YM_{j,k}^2-2\YM_{i,k}\cdot \YM_{j,k}}}}{2M} \\
& = \frac{\sum_{(v_i,v_j)\in \EDG}{\sum_{k=1}^{K}{\YM_{i,k}^2+\YM_{j,k}^2}}}{2m}\\
& \quad - \frac{\sum_{(v_i,v_j)\in \EDG}{\sum_{k=1}^{K}{\YM_{i,k}^2+\YM_{j,k}^2-2\YM_{i,k}\cdot \YM_{j,k}}}}{2M} \\
& = 1 - \frac{\sum_{(v_i,v_j)\in \EDG}{\sum_{k=1}^{K}{(\YM_{i,k}-\YM_{j,k})^2}}}{2m}\\
& = 1 - \frac{\sum_{(v_i,v_j)\in \EDG}{\|\YM_i-\YM_j\|_2^2}}{2M}\\
& = 1 - \frac{\sum_{(v_i,v_j)\in \EDG}{\left\|\frac{(\DM^{1/2}\YM)_i}{\sqrt{d_i}}-\frac{(\DM^{1/2}\YM)_j}{\sqrt{d_j}}\right\|_2^2}}{2M}.
% & = 1 - \frac{1}{2M}\cdot \Tr(\YM^\top\DM^{1/2}(\IM-\NAM)\DM^{1/2}\YM).
% \frac{\mathcal{D}(\DM^{1/2}\YM)}{2M},
\end{align*}
which finishes the proof.
\end{proof}

\end{document}